\newcommand{\ignore}[1]{}
\DeclareMathOperator*{\E}{\mathbb{E}}
\newtheorem{definition}{Definition}
\declaretheorem[name=Theorem]{theorem}
\newtheorem*{theorem*}{Theorem}
\newtheorem{lemma}{Lemma}
\newtheorem*{lemma*}{Lemma}
\newtheorem{obs}{Observation}
\newtheorem{example}{Example}
\newtheorem{claim}{Claim}[lemma]
\newcommand{\X}{\mathcal{X}}
\newcommand{\Z}{\mathcal{Z}}
\renewcommand{\H}{\mathcal{H}}
\newcommand{\N}{\mathbb{N}}
\newcommand{\alice}{Alice~}
\newcommand{\conp}{\mathrm{coNP}}
\newcommand{\np}{\mathrm{NP}}
\newcommand{\p}{\mathrm{P}}
\newcommand{\vc}{\textrm{VC-dim}}
\newcommand{\covc}{\textrm{coVC-dim}}
\newcommand{\err}[1]{1[#1(x)\ne y]}
\newcommand{\T}{D_{\H}}
\newcommand{\Tnp}{N^{np}_{\H}}
\newcommand{\Tc}{N^{conp}_{\H}}
\newcommand{\Tp}{\T}
\newcommand{\R}{\mathbb{R}}
\newcommand{\xx}{\mathbf{x}}
\newcommand{\sbd}[1]{I_{\{#1\}}}
\newcommand{\air}[1]{A^{(#1)}_{m,r-1}}
\newcommand{\bir}[1]{B^{(#1)}_{m,r-1}}
\renewcommand{\paragraph}[1]{~\\\noindent\textbf{#1}}
\newcommand{\concatxx}[2]{\left<{#1};{#2}\right>}
\newcommand{\concatx}[2]{\concatxx{S_{#1}}{S_{#2}}}
\newcommand{\concat}{\concatx{a}{b}}
\let\oldaddcontentsline\addcontentsline
\newcommand{\stoptocentries}{\renewcommand{\addcontentsline}[3]{}}
\newcommand{\starttocentries}{\let\addcontentsline\oldaddcontentsline}
\newcommand{\eps}{\epsilon}
\renewcommand{\cref}[1]{\Cref{#1}}
\title{On Communication Complexity of Classification Problems}
\author{
Daniel M. Kane\thanks{Department of Computer Science and Engineering/Department of Mathematics, University of California, San Diego. {\tt dakane@ucsd.edu.}}
\and Roi Livni\thanks{Princeton University, Department of Computer Science. {\tt rlivni@cs.princeton.edu.}}
\and Shay Moran\thanks{Institute for Advanced Study, Princeton. {\tt  shaymoran1@gmail.com.}}
\and  Amir Yehudayoff\thanks{The Technion University, Israel. {\tt amir.yehudayoff@gmail.com.}}
}
\begin{document}
\maketitle

\abstract{
This work {studies distributed learning
in the spirit of Yao's model of communication complexity}:
consider a two-party setting,
where each of the players gets a list of labelled examples
and they communicate in order to jointly perform some learning task.
To naturally fit into the framework of learning theory,
the players can send each other examples (as well as bits) where each example/bit costs one unit of communication.  
{This enables a uniform treatment of infinite classes
such as half-spaces in $\R^d$, which are ubiquitous in machine learning. 
}

We study several fundamental questions in this model. 
{For example, we provide combinatorial characterizations
of the classes that can be learned with efficient communication in the proper-case
as well as in the improper-case.
These findings imply unconditional separations between various 
learning contexts, e.g.\ realizable versus agnostic learning,
proper versus improper learning, etc.

The derivation of these results hinges on a type of decision problems
we term ``{\it realizability problems}'' where the goal is
deciding whether a distributed input sample is consistent with an hypothesis from a pre-specified class.

%
%
%

From a technical perspective, 
the protocols we use are based on ideas from machine learning theory and the impossibility results
are based on ideas from communication complexity theory.}

\newpage
\tableofcontents


\newpage
\section{Introduction}

Communication complexity provides 
a basic and convenient framework 
for analyzing the information flow in computational systems~\cite{Yao79}.
As such, it has found applications in various areas ranging 
from distributed systems, where communication 
is obviously a fundamental resource,
to seemingly disparate areas like data structures, 
game theory, linear programming, extension complexity of polytopes, and many others 
(see e.g.~\cite{Kushilevitz97book} and references within).

We {study} a {\em distributed learning} variant of this model,
where two learners in separate locations
wish to jointly solve some learning problem.
We consider communication protocols in which each of the two parties, 
Alice and Bob, receives a sequence of labelled examples as input, 
and their goal is to perform some learning task;
for example, to agree on a function 
with a small misclassification rate,
or even just to decide whether such a function exists in some pre-specified class.
{To enable the treatment of general learning problems
where the input domain is not necessarily finite
(e.g.\ half-spaces in $\R^d$)},
a transmission of a single input example is considered
as an atomic unit of communication
(``standard'' transmission of bits is also allowed).
The ability to send examples empowers the protocols,
and makes proving lower bounds more challenging.}

We consider several distributed learning problems corresponding to agnostic/realizable learning
by proper/improper protocols. 
Two representative examples of our results concern realizable-case learnability:
\begin{itemize}
\item {\bf Proper learning:} \cref{thm:pchar} and \cref{thm:proper} 
provide a combinatorial characterization\footnote{This characterization holds under mild assumptions on the class $\H$ - see the discussion after \cref{thm:proper}.} of properly learnable classes:
a class $\H$ is properly learnable if and only if 
it has a finite {\it VC dimension} and a finite {\it coVC dimension} 
(a new combinatorial parameter that rises in this context).
\item {\bf Improper learning:} It has been previously shown that any class with a finite VC dimension
can be efficiently learned with polylogarithmic communication using a variant of Adaboost \cite{Daume12efficient,Balcan12dist}.
\cref{{thm:reallb}} provides a matching lower bound. This lower bound already applies for the class of half-planes.
\end{itemize}
We further get (unconditional) separations
between realizable-case versus agnostic-case learning, and proper
versus non-proper learning  (see~\cref{thm:properlb} and~\cref{thm:agnlb}).

To derive these results  we introduce a type of decision problems called \emph{realizability problems}. 
In the realizability problem for a class $\H$,
the parties need to decide whether there is a function in $\H$  
that is consistent with their input  examples.
We define the analogs of the complexity classes P, NP, and coNP in this context;
for example, $\p$ contains all classes 
for which the realizability problem can be solved with polylogarithmic communication complexity.
We derive some basic results such as  $\p = \np \cap \conp$.
An analogous statement holds in classical communication complexity~\cite{Aho83notions},
however our result does not seem to follow from it;
in particular, our proof uses different techniques,
including boosting and metric properties of VC classes.

Another aspect of realizability problems is that different instances
of them include communication problems that may be interesting in their own right.
For example, consider the following geometric variant
of  the \emph{set disjointness} problem:
each of Alice and Bob gets $n$
points in the plane and their goal is to decide whether
their convex hulls are disjoint.
This is an instance of the realizability problem
for the class of half-planes.
Unlike the classical {set disjointness} problem,
this variant can be efficiently solved. 
In fact, we give almost matching upper and lower
bound\footnote{In this work,
we write $\tilde{O}, \tilde{\Omega}, \tilde{\Theta}$ to hide logarithmic factors; for example,
$f(n) \leq \tilde{O}(g(n))$ if there are constants~$\alpha,\beta>0$
so that $f(n) \leq \alpha g(n) \log (g(n)) + \beta$ for all $n$.}
of $\tilde\Theta(\log n)$ transmitted points 
for solving this planar convex set disjointness problem.
We also consider {the convex set disjointness problem} in $\R^d$ for general $d$, 
but our bounds are not tight in terms of $d$.
\footnote{A variant of this problem in which
the inputs are taken from a fixed finite set $X$,
and the goal is to decide whether the convex hulls
intersect in a point from $X$ was studied by~\cite{lovasz93communication}.
This variant turns to be much harder;
for example, if $X$ is a set of $n$ points on the unit circle 
then it is equivalent to the standard set disjointness problem.}



%

\paragraph{{Related work.}}
The communication model considered in this paper 
can be seen as a blend between Yao's model~\cite{Yao79} and Abelson's model~\cite{Abelson80}.
Similar to Abelson's model, 
in which the parties receive a list of real numbers as input
and transmit them for a unit communication cost\footnote{In fact, Abelson's model allows to send any smooth function of the inputs for a unit cost.}, 
also here transmission \emph{examples},
which may come from an infinite domain, is considered as an atomic unit of communication. 
{In contrast with Abelson's setting, 
which focuses on computing differentiable functions,
the focus here is on combinatorial problems that are closer in spirit to the problems studied in Yao's model.

Algorithmic study of distributed learning has seen vast amount of research 
(a partial list includes~\cite{AD11, DGSX12, AS15, SST14, SS14}).
{Similar models to the one considered in this paper as well
as extensions (e.g.\ to multiparty communication protocols) appeared in these works.}
Some works even explicitly phrased upper bounds on the number of transmitted examples~\cite{Daume12efficient,Daume12protocols,Balcan12dist,Chen16boosting}.
{However, all previous communication lower bounds were in terms of the number of bits and not the number examples.}


This setting can also be thought of as an interactive/distributed variant
of {\em sample compression schemes}:
{sample compression schemes}
are a well-studied notion in learning theory that was introduced in~\cite{Littlestone86relating}.
Sample compression schemes can be seen as
protocols in which only one party (say Alice) gets an input sample
that is consistent with some known-in-advanced hypothesis class, 
and her goal is to transmit as few examples as possible to Bob
so he may reconstruct an hypothesis that is consistent
with Alice's input (including the examples she did not send him).

{Another related work by~\cite{Huang14epsapprox} concerns the communication complexity
of computing {\it $\eps$-approximations}: for example, consider a set $S$ of $N$
points in the plane that is distributed between the parties whose goal is
to communicate as few points as possible in order to find for every axis-aligned rectangle $R$,
its relative density $\frac{\lvert S\cap R\rvert}{N}$, up to $\eps$-error. 
We use similar ideas to derive non-trivial upper bounds in the agnostic setting
(see \cref{thm:agnub}).}

\ignore{
The rest of this document is organized as follows:
\begin{itemize}
\item \cref{sec:modelandresults} presents the results of this paper:
\cref{sec:search} presents our results concerning the communication complexity 
of learnability in various settings (agnostic/realizable and proper/improper),
and \cref{sec:decision} discusses our results regarding \emph{the realizability problem},
which is a central problem in this work and a crucial ingredient in the derivation of our other results.

In the realizability problem for a hypothesis class $\H$,
Alice and Bob need to decide whether their input samples
are jointly realizable by a hypothesis in $\H$.
For example, the convex set disjointness problem 
is an instance of the realizability problem
when $\H$ is the class of halfspaces.
Following the theory of communication complexity for Yao's model,
we study the realizability problem systematically
by defining the analogues of $\p$,$\np$, and $\conp$
in this context, and exploring their interrelations.

\item \cref{sec:proofoverview} demonstrates some of the main ideas used in our proofs.
We describe a \emph{Boosting-based} protocol for convex set disjointness, 
which is a simplified instance of  the more general protocol we give for the realizability problem.
We also outline techniques from communication complexity and information theory
that we use in our lower bounds.

\item \cref{sec:discussion} discusses connections between our results and 
classical results in machine learning and communication complexity.
We conclude by suggesting several open questions for future research.

\item The complete proofs are deterred to the Appendix. 
\end{itemize}
}

\ignore{
\section{Preliminaries}
\subsection{Learning Theory}
A \emph{binary-labelled\footnote{we focus here on classification problems in which there are only two labels, $\{\pm 1\}$. This model naturally extend when the number of labels is larger.} hypothesis class} over a domain $\X$
is a set of functions of the form $h:\X\to\{\pm1\}$.
For $R\subseteq \X$, $\H|_R$ denotes the restriction of $\H$ to $R$,
namely the class $\H|_R = \{h|_R : h\in H\}$.

Let $\Z=\X\times \{\pm 1\}$ denote the space of \emph{examples}.
A \emph{sample} $S$ is a finite sequence of examples; 
that is, an element of $\Z^m$ for a finite $m$.
Let $\Z^*$ denotes the set of all samples: $\Z^*= \cup_{m}\Z^m$.
The length of $S$, denoted by $\lvert S\rvert$ 
is the number of examples in it.
We slightly abuse notation and write $z\in S$ to denote
that $S(i) = z$ for some $i\leq \lvert S\rvert$.
We say that $S'$ is a \emph{subsample}\footnote{
Note that subsample is different from the notion of subsequence.
For example, an element that appears once in $S$ 
may appear several times in $S'$} of $S$, and write $S'\subseteq S$,
if every $z\in S'$ satisfies $z\in S$.
 
Let $D$ be a distribution over $\Z$.
The \emph{loss} of a hypothesis $h$ with respect to $D$
is defined by 
\[L_D(h) = \E_{(x,y)\sim D}\bigl[1[h(x)\neq y]\bigr].\]
The \emph{loss} of a hypothesis $h$ with respect to a sample $S$, 
denoted by $L_S(h)$, is its loss with respect to the uniform distribution over $S$.
A sample $S$ is said to be \emph{realizable} by $\H$ if there is $h\in H$ with $L_S(h)=0$.

\subsection{Communication Theory and Yao's Model}}
\section{Model and Main Results}\label{sec:modelandresults}
\subsection{Communication Model}\label{sec:model}
We follow standard notation from machine learning (see e.g.\ the book~\cite{shalev14understanding}).
Let $\X$ be a domain, and let $\Z=\X\times\{\pm 1\}$ be the examples domain.
We denote by $\Z^*=\bigcup_{n}\Z^n$ the set of all samples.
For a sample $S \in \Z^n$, we call $n$ the {\em size} of $S$,
and denote it by $|S|$.

We study communication protocols between two parties called \alice and Bob.
Each party receives a sample as an input. 
Alice's input is denoted by $S_a$ and
Bob's input by $S_b$. 
Let $S=\concat$ denote the joint sample
that is obtained by concatenating Alice's and Bob's samples.
Similarly to other works in distributed learning
we do not assume an underlying distribution on samples
(for a related discussion see~\cite{Daume12protocols} and references within).
Specifically, the sample $S$ can be adversarially distributed between Alice and Bob. 




\paragraph{Communication Protocols.}
We focus on deterministic protocols which we define next.
Following~\cite{Yao79}, a protocol
$\Pi$  is modelled by a rooted directed tree.
Each internal node $v$ is owned by 
exactly one of the parties and each outgoing edge
from $v$ corresponds to an example in $\Z$ in a one-to-one and onto fashion
(so each internal node has out-degree $\lvert\Z\rvert$).
Each internal node $v$ is further associated with a function $f_v:\Z^*\to \Z$
such that $f_v(S')\in S'$ for every $S'\in \Z^*$.
\emph{The value $f_v(S')$ is interpreted as the example that is communicated
on input $S'$ when the protocol reached state $v$.
The restriction $f_v(S')\in S'$ amounts to that during the protocol 
each party may only send examples from her/his input sample}.

\begin{description}
\item[Execution:]
Every pair of inputs $S_a,S_b$ induces a selection 
of a unique outgoing edge for every internal node:
if $v$ is owned by Alice then select the edge labelled by $f_v(S_a)$,
and similarly for Bob.
This in turn defines a unique path from the root to a leaf.
\item[Output:] The leafs of the protocol are labelled by its outputs.
Thus, the output of the protocol on input $S_a,S_b$
is the label of the leaf on the path corresponding to $S_a,S_b$.
\item[Complexity:]
Let $T:\N\to\N$, we say that $\Pi$ has \emph{sample complexity} at most $T$
if the length of the path corresponding to an input sample $S=\concat$
is at most $T\bigl(\lvert S\rvert\bigr)$.
\end{description}

\medskip
\noindent{\em \bf Transmission of bits.}
We will often use {\em hybrid} protocols
in which the parties also send bits to each other.
While we did not explicitly include this possibility 
in the above definition,
it can still be simulated\footnote{At the beginning of the protocol,
each of Alice and Bob publishes two examples,
say $z_a,z_a'$ of Alice and $z_b,z_b'$ of Bob.
Then, Alice encodes the 4 messages $z_a,z_a',0,1$
using a prefix free code of length 2 on the letters $z_a,z_a'$,
and similarly Bob.
Now, they can simulate any protocol that also uses bits
with only a constant blow-up in the sample complexity.
} within the defined framework.


\paragraph{{Uniformity.}}
{The possibility of sending a whole example as an atomic unit of communication
(which, if $\Z$ is infinite, requires infinitely many bits to encode)  
makes this communication model \emph{uniform} in the sense
that it enables solving communication problems with an infinite input-domain by a \emph{single} protocol.
For example, consider the convex set disjointness problem that is discussed in \cref{sec:overviewdec};
in this problem each Alice and Bob gets as inputs $n$ points from $\R^d$.

This deviation from Yao's model, which is non-uniform\footnote{Yao's model is non-uniform in the sense 
that each communication problem is in fact a sequence of problems parametrized by the input size.},
may take some time to get used to; 
however, we would like to point out that it is in fact natural in 
the context of machine learning where infinite hypothesis classes
(such as half-spaces, neural-nets, polynomial threshold functions,...) are common.
\cref{thm:pcomp} further addresses the connection between
the communication complexity of the decision problems we consider here
and the communication complexity of (a variant of) these problems 
in Yao's (non-uniform) model.}

The problems we study can be naturally partitioned into \emph{search problems}, and \emph{decision problems}.
We next describe our main results, following this partition.

%

\subsection{Search/learning Problems}\label{sec:search}
The search problems we consider are concerned with finding an {optimal} (or near optimal) hypothesis $h$ with respect to a given hypothesis class $\H$. 
The objective will generally be to minimize the number of mistakes $h$ performs on the input samples
$S = \concat$;
that is, to minimize
\begin{align}\label{eq:error} L_S(h)=\frac{1}{\lvert S\rvert}\sum_{(x,y)\in S} 1[h(x)\neq y].\end{align}

{Note that Alice can easily find a hypothesis
$h_a \in \H$ that minimizes $L_{S_a}(h_a)$, and similarly for Bob.
The difficultly in solving the problem stems from that
none of the players knows all of $S$,
and their goal is to find $h$ with minimum $L_S(h)$
with least communication.}

A class $\H$ is said to be \emph{learnable} with sample complexity~$T=T(\eps,n)$
{if for every $\eps$} there is a protocol\footnote{One could also define that $\eps$ is given as a common input to both Alice and Bob.} 
that for every input sample $S=\concat$ of size $n$
transmits at most $T(\eps,n)$ examples and 
outputs an hypothesis $h$ with
\[L_S(h)\leq \min_{f\in \H}L_S(f) + \eps .\]

A running example the reader may keep in mind is when $\H$
is the class of half-planes in~$\R^2$ .

We distinguish between the \emph{realizable} and the \emph{agnostic} cases,
and between \emph{proper} and \emph{improper} protocols. 
We refer to the case when there exists $h\in \H$ with $L_S(h)=0$
as the realizable case (in contrast to the agnostic case), 
and to protocols whose output $h$ always belongs to $\H$ as proper protocols
(in contrast to the {improper protocols} that may output~$h\notin \H$).




\subsubsection*{Search Problems Main Results}

\paragraph{Realizable case.}
{We begin with a lower bound for improper learning in the realizable setting. 
In terms of upper bounds, several authors provided clever implementations of Boosting algorithms 
in various distributed settings \cite{Balcan12dist, Daume12efficient, freund95}.
Specifically, \cite{Balcan12dist, Daume12efficient} used Adaboost to learn a class $\H$ 
with sample complexity $O(d\log 1/\eps)$, where $d$ is the VC dimension of $\H$. 
The algorithm developed in \cite{Balcan12dist} can be easily adapted\footnote{The result in \cite{Balcan12dist} assumes a center that can mediate interaction.} 
to the communication model considered here
(for completeness this is done in \cref{apx:improper}).}
Our first main result shows tightness of the aforementioned upper bound in terms of $d$ and $\epsilon$ separately.  
\begin{restatable}[Realizable case - lower bound]{theorem}{reallb}\label{thm:reallb}
Let $\H$ be the class of half-spaces in $\R^d$, $d\ge 2$, {and $\eps\leq 1/3$}.
Then, any protocol that  learns $\H$ in the realizable case has sample complexity
at least $\tilde\Omega(d+\log(1/\eps))$.
\end{restatable}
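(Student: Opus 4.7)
My plan is to decompose the target bound $\tilde\Omega(d+\log(1/\eps))$ into two independent lower bounds that together yield the max-of-two (and hence the sum up to constants): a VC-type bound of $\tilde\Omega(d)$ that already holds at constant accuracy $\eps=1/3$, and an accuracy bound of $\tilde\Omega(\log(1/\eps))$ that already holds in the plane ($d=2$). Both apply to the class $\H$ of half-spaces, so taking the maximum produces the claimed bound.

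For the $\tilde\Omega(d)$ piece I would use an information-theoretic shattering argument. Fix $d+1$ affinely independent points $x_1,\ldots,x_{d+1}\in\R^d$ that are shattered by half-spaces, split them evenly between Alice and Bob, and repeat each point with multiplicity $k=\poly(d)$ so that the total sample size is $n=\Theta(kd)$. Under every joint labeling $\sigma\in\{\pm 1\}^{d+1}$ the resulting sample is realizable, so I draw $\sigma$ uniformly with Alice's and Bob's halves independent. For $\eps\le 1/3$, the output $h$ agrees with $\sigma_i$ on at least a $2/3$-fraction of the $x_i$'s in expectation, and a Fano-type inequality then lower bounds the mutual information between the protocol's transcript and $\sigma$ by $\Omega(d)$. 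Since each transmitted example belongs to an input of size $O(n)$ and therefore carries at most $O(\log n)=O(\log d)$ bits of information about the sender's input (bits being subsumed by examples), a $T$-example transcript reveals only $O(T\log n)$ bits, forcing $T\ge\Omega(d/\log n)=\tilde\Omega(d)$.

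For the $\tilde\Omega(\log(1/\eps))$ piece I would reduce from the paper's $\tilde\Omega(\log n)$ lower bound on planar convex set disjointness. Set $N=\lceil 1/(2\eps)\rceil+1$; given a planar disjointness instance where Alice holds $A$ of size $N$ and Bob holds $B$ of size $N$, form a learning input by labelling Alice's points $+$ and Bob's points $-$. If the convex hulls are disjoint then the joint sample is realizable by a half-plane, and any output $h$ with $L_S(h)\le\eps<1/(2N)$ must misclassify strictly fewer than one of the $2N$ examples, hence separate $A$ from $B$ exactly. On intersecting instances an $O(1)$ follow-up of bits lets Alice and Bob test whether the learner's output is in fact a valid separator (each checks its own examples against $h$ and announces one bit). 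Thus any $\eps$-learner solves planar convex set disjointness with only constant overhead, and the $\tilde\Omega(\log N)=\tilde\Omega(\log(1/\eps))$ bound transfers.

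I expect the main obstacle to be the Fano step of the $\tilde\Omega(d)$ bound: one has to account for the fact that each transmitted example is selected from a fixed finite input, so that its information content is at most a \emph{single} $\log n$ factor; otherwise the resulting bound would degrade below $\tilde\Omega(d)$. A secondary subtlety is that the transcript must determine the output $h$, so that $I(\sigma;h)\le I(\sigma;\mathrm{transcript})$, which uses the standard observation that in a deterministic protocol the output is a function of the transcript alone.
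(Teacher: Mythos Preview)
Your two-part decomposition matches the paper's. For the $\tilde\Omega(d)$ bound the paper takes a shorter route: it restricts to one-sided inputs ($S_b=\emptyset$), observes that the learning protocol then \emph{is} an $\eps$-approximate sample compression scheme, and invokes the known $\tilde\Omega(d)$ size lower bound for such schemes. Your Fano/shattering argument is a valid alternative that avoids citing the compression-scheme literature, at the price of being more hands-on; both yield $\Omega(d/\log d)$.

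There is, however, a genuine gap in your $\tilde\Omega(\log(1/\eps))$ reduction. Your consistency check (``does the output $h$ agree with all of $A$ on $+$ and all of $B$ on $-$?'') correctly rejects every instance with $A\cap B\neq\emptyset$, since no function can label a common point both ways. But on instances where the convex hulls intersect yet $A\cap B=\emptyset$, the joint sample is \emph{not} realizable by a half-plane, so the learner carries no guarantee whatsoever, and an \emph{improper} learner may output an $h:\R^2\to\{\pm1\}$ that is perfectly consistent with every labelled example (such an $h$ exists precisely because $A$ and $B$ are disjoint as point sets). Your check then passes and you wrongly declare the convex hulls disjoint. Hence your reduction does not solve planar convex set disjointness; it only solves the \emph{promise} version that distinguishes ``$\mathsf{conv}(A)\cap\mathsf{conv}(B)=\emptyset$'' from ``$A\cap B\neq\emptyset$'', with arbitrary behaviour on the intermediate case.

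This is exactly the obstacle the paper flags as ``the main challenge'' in handling improper protocols, and it is why the paper invokes the promise-version lower bound (\cref{thm:lognstrong}, equivalently \cref{lem:cvxdisjoint}) rather than the full realizability bound (\cref{thm:logn}). Your argument becomes correct once you cite that promise lower bound instead; the reduction you wrote already produces a protocol satisfying items (i)--(iii) of \cref{lem:cvxdisjoint}.
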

{The lower bound in terms of $d$ holds for every class $\H$ (not necessarily half-spaces), 
and follows from standard generalization bounds derived from sample compression schemes
{(A slightly weaker lower bound of $\Omega(d)$ bits appears in \cite{Balcan12dist})}.
The dependence in $\eps$ though, may be significantly smaller for different classes $\H$. 
For example, the class of thresholds over $\R$ admits a  protocol 
that outputs a consistent hypothesis (namely $\eps=0$) with sample complexity $O(1)$.

{We derive the lower bound in terms of $\epsilon$} by proving a more general} 
trade-off between  the number of rounds and the sample complexity. 
A more detailed examination of our proof yields the following round-communication tradeoff:
every protocol that learns the class of half-planes 
with $\eps$ error using at most $r$ rounds 
must have sample complexity at least
\[\tilde\Omega\Bigl(\frac{(1/\eps)^{1/r}}{\log (1/\eps)} + r\Bigr).\]
This matches an upper bound given by Theorem~10 in~\cite{Balcan12dist}.

{The proof of \cref{thm:reallb}, which appears in \cref{prf:reallb}, follows from 
a lower bound on the realizability decision problem for half-planes. 
The main challenge is in dealing with protocols that learn the class of half-planes in an improper manner;
i.e.\ their output is not necessarily a halfplane (indeed, the general boosting-based protocols of
\cite{Balcan12dist,Daume12efficient} are improper)}.
The idea, in a nutshell, is to {consider} a \emph{promise} variant of the realizability problem in which Alice and Bob
just need to distinguish whether the input sample is realizable or \emph{noisy}
(a sample $S$ is noisy if there is $x\in\R^2$ such that both $(x,1),(x,-1)$
are in $S$). 
{In \cref{sec:cvxoverview} we outline these arguments in more detail.}

\paragraph{Proper learning in the realizable case.}
In the realizable and proper case ({namely the input sample is still realizable, but the protocol must output a hypothesis in the class}), 
an exponentially larger lower bound holds:
\begin{restatable}[Realizable \& proper case - lower bound]{theorem} {properlb}\label{thm:properlb}
There exists a class $\H$ with VC dimension $1$ 
such that every protocol that learns $\H$ properly has sample complexity of at least~$\tilde\Omega(1/\eps)$.
Moreover, this holds even if the input sample is realizable.
\end{restatable}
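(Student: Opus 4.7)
The plan is to construct a specific VC-dimension-$1$ class $\H$ and reduce from a hard two-party communication problem to proper $\eps$-learning on realizable inputs. The final sample-complexity bound will come by showing a bit-complexity lower bound of $\Omega(1/\eps \cdot \log|\X|)$ on the associated search problem, since each atomic unit in the model conveys at most $O(\log|\X|)$ bits (bits being simulated with constant blow-up as described in the footnote to the model).

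First, I would take as a starting point the class of singletons $\H=\{h_i:i\in[N]\}$ on a domain $\X=[N]$ with $N=\Theta(1/\eps)$, where $h_i(j)=\mathbf{1}[i=j]$. This clearly has VC dimension $1$: no two distinct points can be labelled $(+,+)$. Given subsets $A,B\subseteq[N]$, I set $S_a=\{(i,-1):i\in A\}$ and $S_b=\{(j,-1):j\in B\}$, so that the joint sample is realizable by $\H$ iff $A\cup B\neq[N]$, and for $\eps<1/|S|$ any proper $\eps$-learner must output $h_{i^*}$ with $i^*\notin A\cup B$. Locally checking $i^*\in A$ and $i^*\in B$ costs only $O(1)$ additional bits, so a protocol for the proper realizable learning task yields a protocol for deciding $A\cup B=[N]$ (equivalently, disjointness on complements).

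Next, I would argue the $\tilde\Omega(1/\eps)$ lower bound by restricting attention to a carefully chosen family of realizable inputs---for example, pairs $(A,B)$ indexed by permutations so that the unique missing index $i^*$ varies maximally---and invoking a fooling-set or rectangle-partition argument tailored to the realizable promise. The goal is to show that for some realizable input, Alice must effectively transmit a sufficiently detailed description of her set $A$ for Bob to locate $i^*$, so that the bit communication is $\Omega(1/\eps\cdot\log|\X|)$ and hence the number of atomic units is $\tilde\Omega(1/\eps)$.

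The main obstacle is precisely this quantitative lower bound in the realizable regime: a direct partition-number argument on plain singletons only yields $\Omega(\log N)$ bits (since cover-number $\le N$ using the rectangles $R_j=\{(A,B):j\notin A\cup B\}$), which is exponentially too weak. To close this gap I expect the proof to enrich $\H$ beyond plain singletons---for example, by replacing each $h_i$ with a class element whose identification requires many ``witness'' examples that cannot be compressed into a short index---while preserving $\vc(\H)=1$, and combining this with a round-elimination / information-complexity argument suited to the realizable search problem. Once the bit lower bound is established on a family of realizable inputs, the sample-complexity bound $\tilde\Omega(1/\eps)$ follows by dividing by $O(\log|\X|)=O(\log(1/\eps))$.
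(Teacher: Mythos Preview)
Your proposal identifies the right starting point (singletons) and correctly diagnoses that plain singletons are too weak, but it stops short of the two ideas that actually close the argument, and the path you sketch for closing it (a direct information-complexity lower bound on the \emph{realizable} search problem) is not the one that works here.

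First, the reduction. You insist on working only with realizable inputs and try to lower-bound the bit cost of the search problem directly. The paper instead observes that a \emph{proper} learner, even one guaranteed correct only on realizable inputs, still defines a protocol on \emph{all} inputs whose output is always some $h\in\H$. On a non-realizable input no $h\in\H$ is consistent, so running the learner with $\eps<1/n$ and checking consistency of the output (two extra bits) decides the realizability problem. This means you may freely use non-realizable inputs in the lower bound, which is what makes a clean reduction to $\mathrm{DISJ}$ possible.

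Second, the construction. The right target is a class with $\vc=1$ but unbounded $\covc$: for every $k$ there is a non-realizable sample of size $k$ all of whose proper subsamples are realizable. The paper takes $\X=\{(m,n):m\le n\}$ and $\H=\{h_{a,b}\}$ where $h_{a,b}$ restricted to the $n=b$ ``layer'' is the singleton at $a$ and is constant $+1$ elsewhere; then the all-$(-1)$ sample on the $k$th layer witnesses $\covc\ge k$. By \cref{thm:covconp} (which is a direct reduction to set disjointness, \cref{lem:conpreduction}), deciding realizability for such a class requires $\tilde\Omega(n)$ samples, and hence proper learning needs $\tilde\Omega(1/\eps)$. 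No round-elimination or information-complexity machinery is needed---the whole thing is a two-step reduction to $\mathrm{DISJ}$.
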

The proof, which appears in \cref{prf:properlb},
 implies an exponential separation between proper and improper sample complexities in the realizable case.
The proof of \cref{thm:properlb} exhibits a VC dimension 1 class for which it is impossible to decide whether the input sample $S$
is realizable, unless $\tilde\Omega(\lvert S\rvert)$ examples are transmitted.
{This shows that in some cases improper learning is strictly easier
than proper learning (the boosting-based protocol of~\cite{Balcan12dist, Daume12efficient}
gives an upper bound of $\tilde O (\log 1/\eps)$).}

\paragraph{A  characterization of properly learnable classes.} 
Given the $\tilde\Omega(1/\eps)$ lower bound for proper learning in the realizable setting, 
it is natural to ask which classes $\H$ can be properly learned with non-trivial sample complexity. 
{We provide a combinatorial characterization of these classes
in \cref{thm:pchar} and \cref{thm:proper} that uses the VC dimension and a new combinatorial parameter we term the {\it coVC dimension}.}
{The same theorem also provides a complexity-theoretical characterization: 
every class $\H$ is properly learnable with logarithmic sample complexity}
if and only if the corresponding realizability problem can be solved efficiently.

\paragraph{Agnostic case.}
{We now move to the agnostic case. 
Namely, the input sample is no longer assumed to be realizable, 
and the protocol (which is not assumed to be proper) 
needs to output a hypothesis 
with error that is larger by at most $\eps$ than the error of the best $h\in\H$.}
\begin{restatable}[Agnostic case - lower bound]{theorem}{agnlb}\label{thm:agnlb}
There exists a hypothesis class of VC dimension~$1$ 
such that every protocol that learns $\H$ in the agnostic case
has sample complexity of at least $\tilde\Omega\left(1/\eps\right)$.
\end{restatable}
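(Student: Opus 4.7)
The plan is to reduce from the realizability decision problem, reusing the hypothesis class $\H$ constructed in the proof of \cref{thm:properlb}. As noted in the discussion following that theorem, the construction yields a VC-dimension-$1$ class for which the \emph{realizability decision} problem --- distinguishing, under the promise, between samples realizable by $\H$ and noisy samples (those containing both $(x,+1)$ and $(x,-1)$ for some $x$) --- already requires $\tilde\Omega(|S|)$ transmitted examples. I would reuse exactly this class.

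Given $\eps$, choose an input size $n=\lfloor 1/(2\eps)\rfloor$, so that $\eps<1/n$. Suppose $\Pi$ is an agnostic protocol learning $\H$ with sample complexity $T(\eps)$. On an input $S=\concat$ of size $n$ promised to be either realizable or noisy, first run $\Pi$ to produce a hypothesis $h$ (the protocol is deterministic, so both parties recover $h$ from the transcript). Then append an $O(\log n)$-bit phase in which Alice sends $L_{S_a}(h)$ and Bob sends $L_{S_b}(h)$; each is an integer multiple of $1/|S_a|$ or $1/|S_b|$, so encoding uses $O(\log n)$ bits. Both parties then compute
\[
L_S(h)=\frac{|S_a|\,L_{S_a}(h)+|S_b|\,L_{S_b}(h)}{|S|},
\]
and declare $S$ realizable if and only if $L_S(h)<1/n$.

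For correctness: if $S$ is realizable then $\min_{f\in\H}L_S(f)=0$, so the agnostic guarantee gives $L_S(h)\leq\eps<1/n$. If $S$ is noisy, some $x$ occurs with both labels in $S$, so every function $h$ --- improper output or not --- misclassifies at least one of these examples, forcing $L_S(h)\geq 1/n$. The reduction therefore produces a valid realizability decision protocol of sample complexity $T(\eps)+O(\log n)$, which by the hardness implicit in \cref{thm:properlb} must be $\tilde\Omega(n)=\tilde\Omega(1/\eps)$; absorbing the logarithmic overhead yields $T(\eps)\geq\tilde\Omega(1/\eps)$.

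The main obstacle is the numerical alignment of the agnostic error budget $\eps$ with the integrality gap $1/n$ of the empirical loss. Taking $n\approx 1/\eps$ is precisely what makes the realizable-versus-noisy distinction visible from the single number $L_S(h)$ even when $\Pi$ is improper and its output is an arbitrary function; any substantially larger $n$ would leave slack for an improper $h$ of very small loss in the noisy case, collapsing the reduction.
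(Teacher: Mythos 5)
There is a genuine gap. Your reduction only decides the \emph{promise} version of the realizability problem (realizable vs.\ noisy), but the $\tilde\Omega(n)$ hardness you invoke from \cref{thm:properlb} is for the \emph{full} realizability problem, and the hard instances there are not noisy. Concretely, the hardness of the class $\H=\{h_{a,b}\}$ comes (via \cref{thm:covconp}) from samples like $S_k=\bigl(((1,k),-1),\ldots,((k,k),-1)\bigr)$: these are non-realizable by $\H$ yet contain no point with two opposite labels. On such an input, an \emph{improper} agnostic learner is free to output the all-$(-1)$ function, which has $L_S(h)=0$, so your test ``declare realizable iff $L_S(h)<1/n$'' gives the wrong answer. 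Your correctness argument for the non-realizable case explicitly relies on the sample being noisy, i.e.\ on the promise version. But the promise version is \emph{easy} for any class of finite VC dimension: boosting gives an improper realizable-case learner with sample complexity $O(d\log(1/\eps))$, hence an $O(\log n)$ protocol distinguishing realizable from noisy inputs. So no $\tilde\Omega(1/\eps)$ bound can be extracted this way, and the two halves of your argument (which promise problem is solved vs.\ which promise problem is hard) do not match.

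The paper's proof avoids this by working with the class of singletons over $\N$ and a \emph{quantitative} reduction to $\disj_n$: it constructs samples $F_a(x),F_b(y)$ such that if $x\cap y=\emptyset$ then \emph{every} function (improper outputs included) has loss at least $\frac{|x|+|y|}{2n}$, while if $x\cap y\neq\emptyset$ some singleton achieves loss $\frac{|x|+|y|-2}{2n}$. The point is that one must simultaneously lower-bound the best achievable loss over \emph{all} functions in one case and upper-bound the best \emph{in-class} loss in the other, with a gap of $1/n$ that an $\eps=1/(4n)$ agnostic learner can detect. To repair your argument you would need hard non-realizable instances on which all functions, not just those in $\H$, incur loss noticeably above the in-class optimum of the corresponding ``yes'' instances; the $S_k$ construction does not have this property.
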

The proof appears in \cref{prf:agnlb}.
This theorem, together with the upper bounds in \cite{Balcan12dist, Daume12efficient}, 
implies an exponential separation {(in terms of $\eps$)} between sample complexities
in the realizable case and the agnostic case.
In fact, the class of VC dimension~$1$ used in the proof is the class of singleton over $\mathbb{N}$.
This particular class can be learned in the realizable case using just $O(1)$ examples:
if any of the parties get a $1$-labelled example then he/she publishes it,
and they output the corresponding
singleton; and otherwise they output the function which is constantly $-1$.


We also observe that in the agnostic case there is a non-trivial upper bound:
\begin{restatable}[Agnostic case - upper bound]{theorem}{agnub}\label{thm:agnub}
Every class $\H$ is learnable in the agnostic case
with sample complexity $\tilde O_d\bigl((1/\eps)^{2-\frac{2}{d+1}} + \log n\bigr)$
where $d$ is the VC dimension of $\H$,
and $\tilde O_d(\cdot)$ hides a constant that depends on $d$.
\end{restatable}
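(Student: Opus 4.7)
The plan is to reduce agnostic learning to the distributed construction of an $\eps$-approximation of the joint sample $S=\concat$, and then invoke Matousek's bound on the size of $\eps$-approximations for set systems of VC dimension $d$. Recall that a (possibly weighted) sub-multiset $A$ of $S$ is an $\eps$-approximation for $(S,\H)$ if $|L_S(h)-L_A(h)|\le\eps$ for every $h\in\H$. If $\hat h\in\arg\min_{\H}L_A$ and $h^\star\in\arg\min_{\H}L_S$, then
$$L_S(\hat h)\le L_A(\hat h)+\eps\le L_A(h^\star)+\eps\le L_S(h^\star)+2\eps,$$
so it is enough to design a protocol at the end of which both parties share the same $(\eps/2)$-approximation $A$ of $S$; they then output the same $\hat h$ via a canonical tie-breaking rule with no further communication.

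The protocol I propose has four steps: (i) Alice and Bob exchange $|S_a|$ and $|S_b|$ using $O(\log n)$ bits; (ii) each party locally constructs an $(\eps/2)$-approximation of its own sample using Matousek's theorem, obtaining $A_a\subseteq S_a$ and $A_b\subseteq S_b$ of size $\tilde O_d((1/\eps)^{2-2/(d+1)})$; (iii) they transmit $A_a$ and $A_b$ in full; (iv) both parties assemble the weighted multiset $A$ in which each point of $A_a$ has weight $|S_a|/|A_a|$ and each point of $A_b$ has weight $|S_b|/|A_b|$, and output $\arg\min_{h\in\H}L_A(h)$.

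Correctness rests on the averaging identity
$$L_A(h)-L_S(h)=\frac{|S_a|}{|S|}\bigl(L_{A_a}(h)-L_{S_a}(h)\bigr)+\frac{|S_b|}{|S|}\bigl(L_{A_b}(h)-L_{S_b}(h)\bigr),$$
whose absolute value is bounded by $\eps/2$ by the two local approximation guarantees, so $A$ is an $(\eps/2)$-approximation of $S$ and the earlier reduction yields the desired $\eps$-guarantee. The sample complexity totals $|A_a|+|A_b|=\tilde O_d((1/\eps)^{2-2/(d+1)})$ transmitted examples plus $O(\log n)$ bits; the latter is absorbed into the example count via the hybrid bit-simulation remarked after the model definition.

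The only place where real content enters is Matousek's bound, applied to the set system $\bigl\{\{(x,y)\in S:h(x)\ne y\}:h\in\H\bigr\}$, which has VC dimension $O(d)$; modulo this standard reduction, the weighted-merge step is a one-line averaging calculation, and this will be the main (minor) obstacle to spell out carefully. Similar ideas in a different guise appear in the $\eps$-approximation communication work of~\cite{Huang14epsapprox}, which confirms this is the natural route.
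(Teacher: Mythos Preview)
Your proposal is correct and follows essentially the same approach as the paper's own proof: exchange $|S_a|,|S_b|$, have each party transmit a Matousek $\eps$-approximation of its local sample, and output an ERM on the appropriately weighted union via the averaging identity $L_S = \tfrac{|S_a|}{|S|}L_{S_a} + \tfrac{|S_b|}{|S|}L_{S_b}$. The paper states this protocol in a figure and appeals to the same one-line calculation; your write-up is slightly more careful about the $\eps$ versus $\eps/2$ bookkeeping and about why the error-set system inherits VC dimension $O(d)$, but there is no substantive difference.
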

{We remark that this is achieved using a proper protocol.}
The proof, which is given in \cref{prf:agnub}, is based on the notion of \emph{$\eps$-approximation} and uses
a result due to~\cite{Matousek93discrepancy}.
The above bound beats the trivial $O(d/\eps^2 + \log n)$ upper bound which follows 
from the statistical agnostic sample complexity
and can be derived as follows\footnote{{For simplicity we describe a randomized protocol. 
A corresponding deterministic protocol can be derived similarly like in the proof of~\cref{thm:agnub}.}}:
Alice and Bob sample $O(d/\eps^2)$ examples from $S=\concat$ and output $h\in H$
with minimal error on the published examples.
The extra $\log n$ bits come from exchanging the sizes $\lvert S_a\rvert, \lvert S_b\rvert$,
in order to sample uniformly from $S$.
We do not know whether the dependence on $\eps$ is tight,
even for the basic class of half-planes.

A relevant remark is that if one relaxes the learning requirement
by allowing the output hypothesis $h$ a slack of the form
\[L_S(h)\leq c\cdot\min_{f\in \H}L_S(f) + \eps,\]
where $c$ is a universal constant then the logarithmic dependence on $1/\eps$ from the realizable case
can be restored; The work of \cite{Chen16boosting} implies that for every $c>4$
such a protocol exists with sample complexity $O(\frac{d\log(1/\eps)}{c-4})$.
We do not know what is the general correct tradeoff between $c, \eps,d$ 
in this case (see discussion in \cref{sec:openquestions}).



\subsection{Decision Problems}{\label{sec:decision}}

A natural decision problem in the context of distributed learning is 
the {\em realizability} problem.
In this problem, Alice and Bob are given
input samples $S_a,S_b$ and they need to decide whether
there exists $h\in \H$ such that $L_S(h)=0$ where $S=\concat$.

As a benchmark example, consider the case where $\H$ is the class of half-spaces. 
If we further assume that Alice receives positively labelled points 
and Bob receives negatively labelled points, then the problem becomes 
the \emph{convex set disjointness} problem where Alice and Bob need to decide 
if the convex hulls of their inputs intersect.

\paragraph{Complexity Classes.}
With analogy to communication complexity theory in Yao's model,
we define the complexity classes $\p,\np$, and $\conp$ for realizability problems.
Roughly speaking, the class $\H$ is in $\p$ if there is an efficient protocol 
(in terms of sample complexity) for the realizability problem over $\H$,
it is in $\np$ if there is a short proof that certifies realizability,
and it is in $\conp$ if there is a short proof that certifies non realizability.

Let $T$ denote an $\N\to \N$ function.
We say that $\H$ has \emph{sample complexity} at most $T$,
and write $\Tp(n) \leq T(n)$, if there exists a protocol with sample complexity at most $T(n)$
that decides the realizability problem for $\H$,
where $n$ is the size of the input samples.

\newcommand{\true}{\mathsf{True}}
\newcommand{\false}{\mathsf{False}}

\begin{definition}[The class $\p$]
The class $\H$ is in $\p$ if $\Tp(n) \leq \mathrm{poly}(\log n)$.
\end{definition}
 We say the $\H$ has \emph{non-deterministic} sample complexity at most $T$,
and write $\Tnp(n) \leq T(n)$, if there exist predicates $A,B:\Z^*\times\Z^*\to\{\true,\false\}$ such that:
\begin{enumerate}
\item For every realizable sample $S=\concat\in\Z^n$ there exists
a \emph{proof} $P \in S^{T(n)}$ such that $A(S_a,P)=B(S_b,P)=\true$.
\item For every non realizable sample $S=\concat\in\Z^n$ and for every
{proof} $P \in \Z^{T(n)}$ either $A(S_a,P)= \false$ or $B(S_b,P)=\false$.
\end{enumerate}
Intuitively, this means that if $S$ is realizable then there is a subsample
of it of length $T(n)$ that proves it,
but if it is not then no sample of size $T(n)$
can prove it.

\begin{definition}[The class $\np$]
The class $\H$ is in $\np$ if 
$\Tnp(n) \leq \mathrm{poly}(\log n)$. 
\end{definition}

The \emph{co-non-deterministic} sample complexity $\Tc$ of $\H$
is defined similarly, interchanging the roles of realizable and non-realizable samples. 
Unlike the typical relation between NP and coNP,
where the co-non-deterministic complexity of a function $f$ 
is the non-deterministic complexity of another function, namely $\lnot f$,
in this setting $\Tc$ is \emph{not} $N^{np}_{\H'}$
of another {class~$\H'$.}

\begin{definition}[The class $\conp$]
The class $\H$ is in $\conp$ if 
$\Tc(n) \leq \mathrm{poly}(\log n)$. 
\end{definition}

\paragraph{VC and coVC Dimensions.} 
We next define combinatorial notions that 
(almost) characterize the complexity classes defined above.

Recall that the \emph{VC dimension} of $\H$ is the size of the 
largest set $R\subseteq \X$ that is \emph{shattered} by $\H$;
namely every sample $S$ with distinct sample points in $R$ is realizable by $\H$. 
As we will later see, every class that is in $\np$
has a bounded VC dimension.

We next introduce a complementary notion, 
which will turn out to fully characterize~coNP.
\begin{definition}[coVC dimension]
The \emph{coVC dimension} of $\H$
is the smallest integer $k$ such that 
every non realizable sample
has a non realizable subsample of size at most $k$.
\end{definition}
A non realizable subsample serves as a proof for non realizability.
Thus small coVC dimension implies small $\conp$ sample complexity.
It turns out that the converse also holds (see \cref{thm:covconp}). 

The VC and coVC dimensions are, in general, uncomparable.
Indeed, there are classes with VC dimension $1$ and arbitrarily large coVC dimension
and vice versa. An example of the first type is the class of singletons
over $[n]=\{1,\ldots,n\}$; its VC dimension is $1$ and its coVC dimension is $n$
as witnessed by the sample that is constantly $-1$.
An example of the second type is the class $\{h:[n]\to\{{\pm 1}\} : \forall i\geq n/2 \ \ h(i)={-1}\}$
that has VC dimension $n/2$ and coVC dimension $1$; any non realizable sample must contain
an example $(i,1)$ with $i\geq n/2$, which is not realizable.

For the class of half-spaces in $\R^d$, both dimensions are roughly the same
(up to constant factors). It is a known fact that its VC dimension is $d+1$, 
and for the coVC dimension we have:
\begin{example}\label{example:covchyp}
The coVC dimension of the class of half-spaces in $\R^d$ is at most~$2d+2$.
\end{example}
This follows directly from Carath{\'e}odory's theorem. Indeed, let $S$ be a non realizable sample and denote by $S_{+}$ the positively labelled set and $S_{-}$ the negatively labelled set. 
Since $S$ is not realizable, the convex hulls of $S_{+},S_{-}$ intersect. 
Let  $x$ be a point in the intersection. 
By Carath{\'e}odory's theorem, $x$ lies in the convex hull 
of some $d+1$ positive points and in the convex hull of some $d+1$ negative points.
Joining these points together gives a non realizable sample of size $2d+2$. 

%
\subsubsection*{Decision Problems Main Results}


Our first main result characterizes 
the class $\p$ in terms of the VC and coVC dimensions, 
and shows that $\p=\np\cap\conp$ in this context. 

\begin{restatable}[A Characterization of $\p$]{theorem}{pchar}\label{thm:pchar}
The following statements are equivalent for a hypothesis class $\H$:
\begin{enumerate}[(i)]
\item\label{it:p} $\H$ is in $\p$.
\item\label{it:npconp} $\H$ is in $\np\cap\conp$.
\item\label{it:vccovc} $\H$ has a finite VC dimension and a finite coVC dimension.
\item\label{it:dk} There exists a protocol for the realizability problem for $\H$ 
with sample complexity $\tilde O(dk^2 \log |S|)$ where $d = \vc(\H)$ and $k=\covc(\H)$.
\end{enumerate}
\end{restatable}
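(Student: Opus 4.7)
The plan is to close the cycle (iv) $\Rightarrow$ (i) $\Rightarrow$ (ii) $\Rightarrow$ (iii) $\Rightarrow$ (iv). The first link, (iv) $\Rightarrow$ (i), is immediate: when $d=\vc(\H)$ and $k=\covc(\H)$ are finite, the stated bound $\tilde O(dk^2\log |S|)$ is $\poly(\log n)$ on inputs of size $n$. For (i) $\Rightarrow$ (ii), I would use the standard trick of turning a deterministic protocol into matching NP/coNP proofs: on a realizable input the NP certificate is the actual transcript of transmitted examples (a subsample of length $\le \poly(\log n)$), and Alice's verifier $A(S_a,P)$ accepts iff $P$ can arise as the transcript of a genuine execution of the protocol on her input $S_a$ terminating at an ``accept'' leaf (and symmetrically for Bob). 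If both verifiers ever accept on some candidate $P$, the concatenation of their partial views pieces together into a legal run of the protocol on $(S_a,S_b)$ that outputs ``realizable'', forcing $S$ to indeed be realizable by soundness. Exchanging ``accept'' with ``reject'' leaves yields the coNP construction.

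For (ii) $\Rightarrow$ (iii), I would argue that each of $\np$ and $\conp$ individually forces the corresponding dimension to be finite. On the coVC side, suppose by contradiction $\covc(\H)=\infty$ and fix $k$ so large that $\Tc(k)<k$. Take a minimal non-realizable sample $S$ of size $k$ and any coNP certificate $P\subseteq S$ of length $<k$; by minimality $P$ is realizable. I would then compare the coNP protocol's behavior on the original input with its behavior on the instance where the parties are given only their parts of $P$ (which is realizable, hence should admit no accepting certificate), and use a padding/monotonicity argument on the predicates $A,B$ to derive a contradiction. Finiteness of VC from an $\np$-protocol is symmetric (realizable and non-realizable swapped), and one can alternatively invoke the known connection between sample compression size and VC dimension. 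Since we only need \emph{finite} dimensions (rather than a polylog bound) at this step, the exact quantitative form is not critical.

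The main content, and the principal obstacle, is (iii) $\Rightarrow$ (iv): constructing the explicit protocol with sample complexity $\tilde O(dk^2\log n)$. The approach I would take is multiplicative weights / boosting, exploiting a $0/1$ gap in the minimax value of the game $\min_{h\in\H}\max_D L_D(h)$ where $D$ ranges over distributions on $S$. If $S$ is realizable the value is $0$, whereas if $S$ is non-realizable then $\covc(\H)\le k$ furnishes a subsample of size $\le k$ on which every $h\in\H$ errs at some point, so uniform weight on it witnesses value $\ge 1/k$. I would therefore run $O(k^2\log n)$ rounds of multiplicative weights on a jointly-maintained distribution $D_t$ over $S$ (Alice maintains $D_t$ restricted to $S_a$, Bob to $S_b$), and implement a distributed weak learner in each round by having each party transmit an $\eps$-net/approximation of size $\tilde O(d)$ sampled from their weighted share (with $\eps\asymp 1/k$), after which both parties locally perform ERM over $\H$ on the joint $\tilde O(d)$-sized sample. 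If weak learning succeeds in all rounds, the majority vote of the $h_t$'s is consistent with $S$ (declare \emph{realizable}); if it fails in some round, the gap forces non-realizability (declare \emph{not realizable}). The delicate points, where I expect most of the technical work, are (a) implementing the weighted sampling and normalization across parties without additional communication overhead per round --- likely via derandomization using deterministic $\eps$-approximations of VC classes --- and (b) tightening the analysis to land at the $k^2$ (rather than $k^3$) dependence, which I expect to require finer metric/chaining properties of VC classes, as hinted in the introduction, rather than a black-box $\eps$-net bound.
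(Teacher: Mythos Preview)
Your cycle and the boosting protocol for (iii)$\Rightarrow$(iv) are essentially the paper's argument. In each round both parties publish $\eps$-nets with $\eps=\Theta(1/k)$ (hence of size $O(dk\log k)$, not $\tilde O(d)$), find a jointly consistent $h_t$ or halt, and update via multiplicative weights; the $0$-versus-$1/k$ gap you describe is exactly \cref{lem:nonrealize}. One correction on point~(b): the $k^2$ dependence does \emph{not} come from chaining or sharper $\eps$-net constructions. It comes from a refined boosting bound (\cref{lem:supermajority}): when the weak learner has edge $\tfrac12-\tfrac{1}{5k}$, only $T=O(k\log|S|)$ rounds suffice to drive every point's error fraction below $\tfrac{1}{k}$ (rather than the $O(k^2\log|S|)$ rounds that the standard $O(\alpha^{-2}\log|S|)$ analysis would give). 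The total is therefore $O(k\log|S|)\cdot O(dk\log k)=\tilde O(dk^2\log|S|)$; the $\eps$-net step is the black-box one.

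Your argument for (ii)$\Rightarrow$(iii) has a genuine gap. The verification predicates $A(S_a,P)$ and $B(S_b,P)$ are arbitrary functions of the \emph{full} local inputs, so there is no monotonicity or padding principle to invoke: restricting $S_a$ to the examples that appear in $P$ can change the value of $A$ arbitrarily, and there is no contradiction in having $A(S_a,P)=B(S_b,P)=\true$ for a non-realizable $S=\concat$ even though the subsample $P$ itself is realizable. The paper establishes this step instead by reductions to set disjointness in Yao's model (\cref{thm:vcnp} and \cref{thm:covconp}): a shattered set of size $d$ encodes $\disj_d$ so that disjointness corresponds to realizability, while a minimal non-realizable sample of size $k$ encodes $\disj_k$ so that disjointness corresponds to \emph{non}-realizability; the $\Omega(n)$ nondeterministic lower bound for $\disj_n$ then forces both $d$ and $k$ to be finite whenever $\H\in\np\cap\conp$.
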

The proof and the protocol
in \cref{it:dk} appear in \cref{alg:main}. 
The proof of the theorem reveals an interesting dichotomy:
the sample complexity of the realizability problem over $\H$
is either $O(\log n)$ or at least $\tilde \Omega (n)$;
there are no problems
of ``intermediate'' complexity. 
\ignore{This phenomenon is related to the ``uniformity'' of the model;
the same protocol should work for all input sizes
(Yao's model is non-uniform in this sense).}

The theorem specifically implies that for every $d$ half-spaces in $\R^d$ are in $\p$
since both the VC and coVC dimensions are $O(d)$.
It also implies as a corollary that the convex set disjointness problem
can be decided by sending at most $\tilde O(d^3\log n)$ points.

The proof of \cref{thm:pchar} is divided to two parts.
One part shows that if the VC and coVC dimensions are large then
the $\np$ and $\conp$ complexities are high as well 
(see \cref{thm:vcnp} and \cref{thm:covconp} below).
The other part shows that if both the VC and coVC dimensions are small
then the realizability problem can be decided efficiently.
This involves a carefully tailored variant of Adaboost,
which we outline in \cref{sec:overviewdec}.

The equivalence between the first two items shows that $\p = \np\cap\conp$.
This means that whenever there are short certificates for the realizability and non realizability
then there is also an efficient protocol that decides it.
An analogous equivalence in Yao's model was established  by~\cite{Aho83notions}. 
We {compare these results} in more detail in \cref{sec:npconp}.

\ignore{How sharp is the bound in \cref{it:dk} {in \cref{thm:pchar}}?}
The following two theorems
give lower bounds on the sample complexity 
in terms of $\vc$ and $\covc$.

\begin{restatable}[``$\vc \leq \np$'']{theorem}{vcnp}\label{thm:vcnp}
For every class $\H$
with VC dimension $d\in{\mathbb{N}\cup\{\infty\}}$,
{$$\Tnp(n)=\tilde\Omega(\min({d,n})).$$}
\end{restatable}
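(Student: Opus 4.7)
The plan is to reduce the equality problem on $d':=\min(d,\lfloor n/2\rfloor)$ bits to the realizability problem via a shattered set, and then prove the lower bound by a standard fooling-set argument combined with a counting bound on the number of useful non-deterministic proofs.

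Since $\vc(\H)\ge d'$, fix a set $R=\{x_1,\ldots,x_{d'}\}\subseteq\X$ shattered by $\H$. For each $U\subseteq[d']$, define the length-$d'$ sample $S^U=\bigl((x_i,\chi_U(i))\bigr)_{i=1}^{d'}$, where $\chi_U(i)=+1$ if $i\in U$ and $-1$ otherwise. The key combinatorial observation is that $\concatxx{S^U}{S^V}$ is realizable by $\H$ if and only if $U=V$: if $U=V$, the joint sample is the labelling $\chi_U$ on the shattered set $R$ (with each point doubled), hence realizable; if $U\ne V$, then some $x_i$ appears with both labels $\pm 1$, which rules out any realizer. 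Pad each side by duplicating one example so the joint input has size exactly $n$ (this does not affect realizability).

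Suppose towards contradiction that $\Tnp(n)\le T$ with $T<d'/\log_2(2d')$, witnessed by predicates $A$ and $B$. For each $U\subseteq[d']$, applying condition~(1) in the definition of $\np$ to the realizable input $(S^U,S^U)$ yields a proof $P_U$ of length $T$, with all examples drawn from $\concatxx{S^U}{S^U}$, such that $A(S^U,P_U)=B(S^U,P_U)=\true$. The map $U\mapsto P_U$ must be injective: if $P_U=P_V$ for distinct $U,V$, then on the non-realizable input $(S^U,S^V)$ the proof $P_U$ would be accepted by both parties, since $A(S^U,P_U)=\true$ and $B(S^V,P_U)=B(S^V,P_V)=\true$, contradicting condition~(2). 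Every $P_U$ lies in $(R')^T$, where $R'=\{(x_i,\pm 1):i\in[d']\}$ has size $2d'$, so $(2d')^T\ge 2^{d'}$, which gives $T\ge d'/\log_2(2d')=\tilde\Omega(\min(d,n))$ — a contradiction.

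The main subtlety to navigate is the asymmetry between the two clauses of the $\np$ definition: for realizable inputs the proof must be a \emph{subsample of that specific input}, which is what lets us conclude that all useful proofs live in the small universe $(R')^T$ of size $(2d')^T$; whereas for non-realizable inputs the verifier pair must reject \emph{every} proof in $\Z^{T(n)}$, and this is precisely what powers the fooling step, since it permits feeding $P_U$ (the proof for the realizable instance $(S^U,S^U)$) into the different, non-realizable instance $(S^U,S^V)$ to derive a contradiction. Apart from this, the argument is a direct analogue of the classical fooling-set lower bound for the equality function, adapted to the uniform example-counting communication model used here.
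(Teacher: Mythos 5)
Your proof is correct, but it takes a different route from the paper's. The paper proves this theorem by reduction to set disjointness in Yao's model: it fixes a shattered set $R$ of size $d$, uses a map (its Lemma on $\np$-reductions) under which $x\cap y=\emptyset$ iff the corresponding joint sample is realizable, observes that a sample-complexity-$T$ certificate of realizability over $R$ encodes into an $O(T\log d)$-bit nondeterministic certificate for $\mathrm{DISJ}_d$, and then invokes the classical $\Omega(d)$ lower bound on the nondeterministic communication complexity of disjointness. You instead give a direct, self-contained fooling-set argument: your instances $(S^U,S^U)$ form an equality-style fooling set, injectivity of $U\mapsto P_U$ follows from clause (2) of the $\np$ definition applied to the mixed instance $(S^U,S^V)$, and the counting bound $(2d')^T\ge 2^{d'}$ exploits exactly the same fact the paper uses implicitly, namely that certificates for realizable inputs are subsamples living in a universe of size $O(d)$. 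Both arguments lose the same $\log d$ factor and yield the same $\tilde\Omega(\min(d,n))$ bound. What the paper's route buys is brevity and reuse (the same reduction template serves the $\conp$/coVC theorem); what yours buys is that it needs no external communication-complexity theorem and makes explicit the asymmetry between the two clauses of the $\np$ definition, which you correctly identify as the crux. One small point worth stating explicitly if you write this up: the padding of each side to total size $n$ must depend only on that side's own set ($U$ for Alice, $V$ for Bob), so that $A(\cdot,P)$ and $B(\cdot,P)$ evaluate identically on the realizable and the mixed instances; your construction satisfies this but the proof should say so.
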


\begin{restatable}[``$\covc = \conp$'']{theorem}{covcconp}\label{thm:covconp}
For every class $\H$
with coVC dimension $k\in{\mathbb{N}\cup\{\infty\}}$,
{$$\Tc(n)=\tilde \Theta(\min(k,n)).$$}
\end{restatable}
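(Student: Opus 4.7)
The plan is to prove the two directions of $\Tc(n) = \tilde\Theta(\min(k,n))$ separately, matching up to logarithmic factors in $k$. The upper bound $\Tc(n) \leq \min(k,n)$ is essentially a restatement of the defining property of the coVC dimension: given any non-realizable input $S$ of size $n$, there is a non-realizable subsample $S' \subseteq S$ of size at most $\min(k,n)$, and I would take the proof $P$ to be $S'$ padded by repetitions to length exactly $\min(k,n)$. The predicate $A$ verifies that the Alice-tagged elements of $P$ lie in $S_a$ and that the distinct elements of $P$ form a non-realizable sample (a purely local check using only $\H$ and $P$), and $B$ performs the analogous verification for Bob-tagged elements. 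Completeness is immediate; soundness uses that every subsample of a realizable $S$ is realizable, so the non-realizability check in $A$ must fail on any realizable input.

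For the lower bound I would use a Sperner-type fooling-set argument in the rectangle formulation of $\conp$-protocols. Since $\covc(\H) = k$, there exists a minimal non-realizable sample $S^* = \{z_1, \dots, z_k\}$ of size exactly $k$, every proper subsample of which is realizable. Focusing first on input size $n=k$, for every $I \subseteq [k]$ with $\lvert I \rvert = \lfloor k/2 \rfloor$ I would form the partition $(S_a^I, S_b^{I^c})$ with $S_a^I = \{z_i : i \in I\}$ and $S_b^{I^c} = \{z_j : j \in I^c\}$; its combined sample equals $S^*$ and is non-realizable. For $I \neq I'$ of the same size we have $I' \not\subseteq I$ and $I \not\subseteq I'$, so both crossings $\{z_\ell : \ell \in I \cup (I')^c\}$ and $\{z_\ell : \ell \in I' \cup I^c\}$ strictly omit some element of $S^*$ and are therefore realizable by minimality. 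This yields a fooling set of non-realizable inputs of size $\binom{k}{\lfloor k/2 \rfloor}$.

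The rectangle argument then goes as follows. By the $\conp$-soundness, every accepting rectangle $R_P := \{(S_a, S_b) : A(S_a, P) = B(S_b, P) = \true\}$ consists entirely of non-realizable inputs; combined with the fooling property, this forces the $\binom{k}{\lfloor k/2 \rfloor}$ fooling inputs to lie in pairwise distinct rectangles. But the accepting proof for each fooling input is a sequence in $(S^*)^T$, so there are at most $k^T$ rectangles available, yielding $k^T \geq \binom{k}{\lfloor k/2 \rfloor} \geq 2^{k - O(\log k)}$ and hence $T \geq \Omega(k/\log k) = \tilde\Omega(k)$. To extend the lower bound from $n=k$ to $n>k$ I would invoke a padding-monotonicity step: any size-$n$ protocol can be turned into a size-$k$ protocol by padding each party's input to size $n$ with duplicates of its existing elements (this preserves realizability and does not enlarge the support, so the original accepting proof is a valid sequence over the padded sample), which gives $\Tc(k) \leq \Tc(n)$.

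The main technical obstacle I anticipate is the padding-monotonicity step, since a party's input may be empty and thus offer no element to duplicate; this should be resolvable either by having the parties agree in advance on a fixed default example supplied by the description of $\H$, or by handling such degenerate partitions separately. A secondary subtlety is aligning the size constraints $\lvert S_a^I \rvert = \lvert S_b^{I^c} \rvert = \lfloor k/2 \rfloor$ (up to parity) with the exact input sizes required by the protocol, which requires only a minor boundary adjustment.
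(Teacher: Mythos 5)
Your proof is correct, and its overall architecture matches the paper's: the upper bound is the same one-line observation that a minimal non-realizable subsample of size at most $\min(k,n)$ is itself the certificate, and the hard instances for the lower bound are the same, namely two-colorings of a minimal non-realizable sample $S^*$ of size $k$ all of whose proper subsamples are realizable (this is exactly the content of \cref{lem:conpreduction}). The difference is in how the argument is closed. The paper packages these instances as a reduction from $\disj_k$ in Yao's model (each transmitted example costs $O(\log k)$ bits) and then cites the $\Omega(k)$ lower bound on the nondeterministic communication complexity of disjointness from \cref{thm:disj}; you instead inline that bound as a direct fooling-set argument: crossing two distinct balanced partitions yields a proper, hence realizable, subsample of $S^*$, so by soundness no two fooling inputs can share an accepting certificate, while certificates are length-$T$ sequences over $S^*$ and hence number at most $k^T$, giving $T\log k \geq \log\binom{k}{\lfloor k/2\rfloor} \geq k - O(\log k)$. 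The two finishes are morally identical (the classical nondeterministic lower bound for disjointness is itself a fooling-set argument over complementary pairs), but yours is self-contained and exploits the structural requirement that proofs be drawn from the joint sample, which is what makes the clean $k^T$ count available. Your padding step for $n>k$ is sound, and the empty-part concern is moot since both parts of a balanced partition are nonempty for $k\geq 2$; note that, like the paper, you only truly establish the lower bound for $n\geq k$ (and for $k=\infty$ along the sizes of minimal non-realizable samples) --- the claimed $\tilde\Omega(n)$ behavior for every $n<k$ is glossed over in both treatments.
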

The proofs of the theorems appear in \cref{prf:vcnp} and \cref{prf:covconp}.
\cref{thm:covconp} gives a characterization of $\conp$
in terms of $\covc$, while \cref{thm:vcnp} only gives
one of the directions.
It remains open whether the other direction also holds for $\np$. 

The next theorem shows that also the $\log |S|$ dependence in \cref{thm:pchar} 
is necessary.
Specifically,  it is necessary for the class of half-planes.
We note that both the $\np$ and the $\conp$ sample complexities of this class are constants (at most 4).
%


\begin{restatable}[Realizability problem -- lower bound]{theorem}{logn}\label{thm:logn}
Any protocol that decides the realizability problem 
for the class of half-planes in $\R^2$ must have sample complexity at least~$\tilde\Omega(\log n)$
for samples of size $n$.
\end{restatable}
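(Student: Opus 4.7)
The plan is to prove the lower bound by exhibiting an explicit family of hard inputs for the realizability problem over half-planes in $\R^2$. I will specialize to the convex set disjointness problem in the plane, which is the instance of the realizability problem obtained by restricting Alice to positively labelled points and Bob to negatively labelled points; a lower bound for this restriction clearly implies a lower bound for the unrestricted problem.

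First, I would construct $n$ points in the plane arranged in a hierarchical configuration of depth $\log n$. A natural candidate is a multi-scale convex construction: place $2^i$ points on a convex arc at scale $2^{-i}$, for $i=1,\ldots,\log n$, so that at each scale there is an essentially independent ``local'' separation question. Alice's and Bob's inputs are then parameterized so that the realizability of the joint sample reduces to a conjunction of $\log n$ local decisions, each of which individually carries the flavor of a Greater-Than or indexing query.

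Second, I would reduce a suitably composed communication problem with bit-complexity $\tilde\Omega(\log^{2} n)$ to this hard family. One natural candidate is a composed/nested version of Greater-Than (e.g.\ the AND of $\log n$ independent Greater-Than instances on $[n^{1/\log n}]$, or a depth-$\log n$ tree-pointer-chasing variant), whose classical bit-complexity is known to be $\tilde\Omega(\log^{2} n)$ by a direct-sum or round-elimination argument. The hierarchical geometric encoding then ensures that the joint convex hulls are disjoint iff all sub-instances are answered correctly.

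Third, I would translate the bit lower bound into a sample-complexity lower bound by observing that a protocol of sample complexity $T$ on inputs of total size $n$ admits at most $n^{O(T)}$ distinct transcripts; equivalently, each transmitted example carries only $O(\log n)$ bits of information. A bit lower bound of $\tilde\Omega(\log^{2} n)$ on the composed problem therefore gives $T=\tilde\Omega(\log n)$.

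The main obstacle is precisely this bit-to-sample gap: since a single transmitted point may carry $\log n$ bits, one cannot simply reduce from Greater-Than (which would only yield $\Omega(1)$ examples). The construction must be robust in the sense that no single transmitted point can simultaneously resolve multiple sub-instances of the composed problem. Making this robustness rigorous -- i.e., proving that $\log n$ independent levels of the hierarchy really must be ``traversed'' one at a time -- is the technical heart of the argument, and is naturally handled by a direct-sum or round-elimination style information-theoretic bound tailored to the geometric structure.
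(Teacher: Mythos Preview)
Your high-level plan is aligned with the paper: reduce to convex set disjointness in the plane, build a hierarchical family of hard instances, and use an information-theoretic argument of the round-elimination flavor to force many examples. That is exactly the route the paper takes.

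However, there is a genuine gap in your step~3. You propose to prove a $\tilde\Omega(\log^2 n)$ \emph{total} bit lower bound on some composed problem embedded in the geometry, and then divide by $\log n$ (the bit-content of one example). Neither of your candidate composed problems supports that bound with the parameters forced by the input size. If the total input has $n$ points and the hierarchy has depth $\log n$, then the branching at each level is only $n^{1/\log n}=2$: the AND of $\log n$ Greater-Than instances on $[2]$, or depth-$\log n$ pointer chasing with arity~$2$, have bit complexity $O(\log n)$, not $\log^2 n$. More importantly, even the paper's own lower bound machinery does \emph{not} yield a $\log^2 n$ total-bit lower bound for planar convex set disjointness; it only yields $\tilde\Omega(\log n)$ bits, so the ``divide by $\log n$'' route cannot work as stated.

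What the paper actually does is prove a \emph{round--communication tradeoff} rather than a total-communication bound. The recursive construction has variable depth $r$ and arity $m=n^{1/r}$; round elimination shows that any $r$-round protocol must send $\tilde\Omega(n^{1/r})$ bits in some round. A protocol with sample complexity $T$ has at most $T$ rounds and at most $T\log n$ bits per round (since examples come from a universe of size at most $n$), so $T\log n \ge \tilde\Omega(n^{1/T})$ together with $T\ge r$ forces $T=\tilde\Omega(\log n)$ after optimizing over $r$. The missing idea in your plan is precisely this use of the \emph{round} structure of sample protocols: the $\log n$-bit budget per example is exploited per round, not in aggregate. If you rewrite your hierarchy with free depth $r$ and arity $m=n^{1/r}$ and run round elimination rather than direct sum, your outline becomes the paper's proof.
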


{\cref{thm:logn} is implied by \cref{thm:lognstrong},
which is a stronger result that we discuss in  \cref{prf:logn}. 
\cref{thm:lognstrong} concerns a promise variant of the realizability problem 
and also plays a crucial role in the derivation of \cref{thm:reallb}. 
In \cref{sec:cvxoverview} we overview the arguments used in the derivation of these results.}


We next state a \emph{compactness} result that enables
transforming bounds from Yao's model to {the one considered here} and vice versa.
A natural approach of studying the realizability problem in Yao's model 
is by ``discretizing'' the domain; more specifically, fix a finite 
set $R\subseteq \X$, and consider the realizability problem
with respect to restricted class $\H|_R = \{h|_R : h\in H\}$.
This restricted problem is well defined in Yao's model, 
since every example $(x,y)$ can be encoded using at most $1+\lceil\log \lvert R\rvert\rceil$ bits.
Using this approach, one can study the realizability
problem with respect to the bigger class $\H$,
in a non-uniform way, by taking into consideration
the dependence on $\lvert R\rvert$.
As the next result shows, the class $\p$ 
remains invariant under this alternative approach: 
\begin{restatable}[Compactness for $\p$]{theorem}{pcomp}\label{thm:pcomp}
Let $\H$ be a hypothesis class over a domain $\X$.
Then, the following statements are equivalent.

\begin{enumerate}[(i)]
\item\label{it:1c} $\H$ is  in $\p$.
\item\label{it:2c} For every finite $R\subseteq \X$ there is a protocol that decides the realizability problem for $\H|_R$
with sample complexity at most $c\cdot \log(n)$
for inputs of size $n$, where $c$ is a constant depending only on $\H$.
\item\label{it:3c} For every finite $R\subseteq \X$ there is an efficient protocol that decides the realizability problem for $\H|_R$
in Yao's model with bit complexity at most $c\cdot \log^m \lvert R\rvert$, where $c$ and $m$ are constants depending only on $\H$.
\end{enumerate}
\end{restatable}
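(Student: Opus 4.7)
The plan is to prove the cycle $(i) \Rightarrow (ii) \Rightarrow (iii) \Rightarrow (i)$, leveraging the characterization of $\p$ in \cref{thm:pchar} at both endpoints: the first implication upgrades $\H \in \p$ to a quantitative bound for each restriction $\H|_R$, and the last extracts from (iii) enough combinatorial information about $\H$ to return to (i). For $(i) \Rightarrow (ii)$, suppose $\H \in \p$; by \cref{thm:pchar}, $d := \vc(\H) < \infty$ and $k := \covc(\H) < \infty$. Both parameters are monotone under restriction: any set shattered by $\H|_R$ is shattered by $\H$, so $\vc(\H|_R) \leq d$; and any non-realizable sample of $\H|_R$ is non-realizable for $\H$, hence has a non-realizable subsample of size at most $k$ (which, having support in $R$, is also non-realizable for $\H|_R$), so $\covc(\H|_R) \leq k$. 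Applying \cref{thm:pchar}(iv) to $\H|_R$ gives a protocol of sample complexity $\tilde O(dk^2 \log n) \leq c \log n$ with $c = c(\H)$.

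For $(ii) \Rightarrow (iii)$: simulate the protocol of (ii) in Yao's model by encoding each transmitted example $(x,y) \in R \times \{\pm 1\}$ by its $\lceil \log |R| \rceil$-bit index in $R$ together with the label. Since realizability depends only on the distinct examples in $S_a \cup S_b$, we may assume $n \leq 2|R|$ (deduplication is a local computation), so $\log n = O(\log |R|)$ and the resulting bit cost is $O(c \log n \cdot \log |R|) = O(c \log^2 |R|)$, establishing (iii) with $m = 2$.

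For $(iii) \Rightarrow (i)$: via the bit-simulation trick in the footnote of \cref{sec:model}, a Yao protocol of bit complexity $B$ compiles to a protocol in our model of sample complexity $O(B)$, so realizability for $\H|_R$ has sample complexity $O(\log^m |R|)$ in our model (constants depending only on $\H$). The plan is to convert this into bounds on $\covc(\H)$ and $\vc(\H)$ and then invoke \cref{thm:pchar}. For the coVC bound: were $K := \covc(\H)$ arbitrarily large, a minimally non-realizable sample $S_0$ of size $K$ with support $R_0$ ($|R_0| \leq 2K$) would yield hard instances by splitting $S_0$ between the parties --- the union $S_a \cup S_b \subseteq S_0$ is non-realizable iff $S_a \cup S_b = S_0$, i.e., iff the complements (within $S_0$) are disjoint; since deterministic set-disjointness on a universe of size $K$ requires $\Omega(K)$ communication, we obtain $K \leq O(\log^m(2K))$, which bounds $K$. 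For the VC bound: extend any shattered set $R' \subseteq \X$ of size $d = \vc(\H)$ by a point $x \notin R'$ chosen so that $\H|_{R' \cup \{x\}}$ admits a non-realizable sample (such an $x$ exists under the non-degeneracy assumption of \cref{thm:pchar}); applying \cref{thm:vcnp} to $\H|_{R' \cup \{x\}}$ and comparing with the $O(\log^m(d+1))$ simulation bound forces $d$ to be bounded. With both dimensions finite, \cref{thm:pchar} yields $\H \in \p$.

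I expect the main obstacle to be the VC direction in $(iii) \Rightarrow (i)$: a shattered subset of $\X$ gives a trivially realizable restriction, so (iii) provides no direct lower bound on $\vc(\H)$; extending by an extra point to force a non-trivial restriction is the intended workaround, but it relies on the non-degeneracy of $\H$ --- precisely the mild assumption under which \cref{thm:pchar} holds.
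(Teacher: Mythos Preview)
Your $(i)\Rightarrow(ii)$ and $(ii)\Rightarrow(iii)$ match the paper. For $(iii)\Rightarrow(i)$, your coVC argument is essentially the reduction underlying \cref{thm:covconp} (compare \cref{lem:conpreduction}), which the paper simply cites.

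The problem is your VC argument, and it stems from a misconception. You write that ``a shattered subset of $\X$ gives a trivially realizable restriction,'' but this is false in the relevant sense: even when $R$ is shattered and $\H|_R=\{\pm1\}^R$, a sample over $R$ is non-realizable precisely when it contains some point with both labels. This is exactly the mechanism behind \cref{thm:vcnp}---its proof (sketched in \cref{sec:overviewset} and carried out in \cref{lem:npreduction}) embeds $\disj_d$ into the realizability problem on a shattered set of size $d$ by giving Alice positively-labelled and Bob negatively-labelled points, so that the joint sample is realizable iff the index sets are disjoint. The paper therefore applies \cref{thm:vcnp} directly to $\H|_R$ with $R$ shattered of size $N$, obtaining $c\log^m N \geq N^{np}_{\H|_R}(N) \geq \tilde\Omega(N)$, which bounds $N$. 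Your detour of adjoining an extra point $x$ is unnecessary, and the ``non-degeneracy assumption of \cref{thm:pchar}'' you invoke does not exist---\cref{thm:pchar} carries no side hypothesis (you may be conflating it with the closedness assumption in \cref{thm:proper}). As written, your argument rests on an assumption the theorem does not make; once you drop the detour, it collapses to the paper's.
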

The proof of \cref{thm:pcomp} appears in \cref{prf:pcomp}.
A similar result holds for $\conp$ | this follows from Theorem~\ref{thm:covconp}.
We do not know whether such a result holds for~$\np$.

\paragraph{{Proper learning.}}
{We next address the connection between the realizability problem | the task of deciding whether the input is consistent with $H$, and proper learning in the realizable case | the task of finding a consistent $h\in H$.
For this we introduce the following definition.  
A class $\H$ is \emph{closed}\footnote{This is consistent with the topological notion of a closed set:
if one endows $\{\pm 1\}^X$ with the product topology then this definition agrees with
$\H\subseteq\{\pm 1\}^X$ being closed in the topological sense.} if for every $h\notin \H$ there is a finite sample $S$ that is consistent with $h$ and is not realizable by $\H$.}
{Note that every class $\H$ can be extended to a class $\bar \H$
that is closed and has the same VC and coVC dimensions
(by adding to $\bar \H$ all $h\notin \H$ that viloate the requirement). 
The classes $\bar \H$ and $\H$ are indistinguishable with respect to any finite sample.
That is, a finite sample $S$ is realizable by $\H$ if and only if it is realizable by~$\bar \H$.}

\begin{restatable}[Proper learning -- characterization]{theorem}{proper}\label{thm:proper}
{Let $\H$ be a closed class with $d = \vc(\H)$ and $k = \covc(\H)$.
If $\H \in \p$ then 
it is properly learnable in the realizable case 
with sample complexity $\tilde{O}(d k^2 \log(1/\epsilon))$. 
If $\H$ is not in $\p$, then the sample complexity for properly learning $\H$ in the realizable setting is at least $\tilde\Omega(1/\epsilon)$. }
\end{restatable}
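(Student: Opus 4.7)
The plan is to prove the two directions of the equivalence separately.

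\textbf{Lower bound.} I would reduce deciding realizability to proper realizable learning. Suppose $P$ is a proper learning protocol for $\H$ in the realizable case with sample complexity $T(n,\eps)$. Given inputs $(S_a,S_b)$ with $|S|=n$, run $P$ with error parameter $\eps<1/n$ to obtain $h\in\H$. In the realizable case, the guarantee $L_S(h)\le\eps<1/n$ together with the fact that $L_S(h)$ is a multiple of $1/n$ forces $L_S(h)=0$, so $h$ is consistent with $S$. In the non-realizable case, no $h\in\H$ is consistent with $S$, so the proper output necessarily satisfies $L_S(h)>0$. Each party locally verifies $h$ on its own input and exchanges a single consistency bit, yielding a realizability decision at cost $T(n,\eps)+O(1)$ examples. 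By \cref{thm:pchar}, the assumption $\H\notin\p$ means $\vc(\H)=\infty$ or $\covc(\H)=\infty$, and the dichotomy implicit in \cref{thm:vcnp} and \cref{thm:covconp} then forces $\Tp(n)=\tilde\Omega(n)$. Substituting $n=\Theta(1/\eps)$ yields $T(n,\eps)=\tilde\Omega(1/\eps)$.

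\textbf{Upper bound.} Now assume $\H\in\p$, so $d=\vc(\H)$ and $k=\covc(\H)$ are both finite. The plan is to first compress $S$ to a subsample of size $\poly(1/\eps)$ using distributed boosting, and then invoke the realizability protocol of \cref{thm:pchar} to extract a proper witness. Specifically, run the distributed realizable Adaboost of~\cite{Balcan12dist,Daume12efficient} for $T=O(\log(1/\eps))$ rounds; in each round the parties pool $O(d)$ examples drawn from the current reweighted distributions and agree on a proper weak learner $h_t\in\H$ consistent with the pooled subsample. Across all rounds the exchanged examples form a realizable subsample $U\subseteq S$ of size $|U|=\tilde O(d\log(1/\eps))$. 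Now invoke the realizability protocol on $U$: by closure of $\H$ and a canonical selection rule applied to the protocol transcript, both parties reconstruct a specific $h\in\H$ consistent with $U$. The cost of this final step is $\tilde O(dk^2\log|U|)=\tilde O(dk^2\log(1/\eps))$, which dominates the total sample complexity.

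\textbf{Main obstacle.} Two issues require care. First, the realizability protocol from \cref{thm:pchar} is formally a decision procedure, and turning it into a producer of an actual witness $h\in\H$ is where the closure of $\H$ is essential: closure guarantees that the set of $\H$-hypotheses consistent with any realizable finite sample is non-empty and amenable to canonical selection from the transcript. Second, and more delicate, one must argue that the proper $h$ extracted from $U$ actually satisfies $L_S(h)\le\eps$, since plain Adaboost analysis only certifies this for the majority vote of the $h_t$'s. Closing this gap requires either strengthening the boosting procedure so that $U$ also serves as an $\eps$-net for $\H|_S$ (in which case every $h\in\H$ consistent with $U$ is automatically $\eps$-good on $S$), or arranging the canonical selection so that the returned $h$ is $\eps$-close to the boosted majority on all of~$S$.
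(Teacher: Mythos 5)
Your lower-bound argument is correct and matches the paper's: reduce realizability to proper learning with $\eps<1/n$, then invoke \cref{thm:vcnp} and \cref{thm:covconp} to get the $\tilde\Omega(n)$ decision lower bound when $\vc$ or $\covc$ is infinite.

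The upper bound, however, has a genuine gap, and you have correctly located it yourself in your ``main obstacle'' paragraph without closing it. A proper $h\in\H$ consistent with the pooled subsample $U$ of size $\tilde O(d\log(1/\eps))$ need not satisfy $L_S(h)\le\eps$; plain Adaboost only certifies the majority vote. Neither of your two proposed fixes works as stated: an $\eps$-net for the uniform distribution on $S$ has size $\Omega(d/\eps)$, so requiring $U$ to be such a net destroys the $\log(1/\eps)$ communication bound; and ``arranging the canonical selection so that $h$ is close to the boosted majority'' is precisely the step where all the work lies. The paper's resolution has three ingredients you are missing. First, each party computes an $\eps$-net $S'_a\subseteq S_a$, $S'_b\subseteq S_b$ of size $O(d\log(1/\eps)/\eps)$ \emph{locally and never transmits it}; the boosting protocol of \cref{alg:main} is then run with $S'_a,S'_b$ as inputs, so the communication is only $\tilde O(dk^2\log(1/\eps))$ while the nets themselves can be large. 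Second, the weak learners must have advantage $\frac12-\frac{1}{5k}$ (costing $O(dk\log k)$ examples per round over $O(k\log s)$ rounds), so that by \cref{lem:supermajority} every input point is classified correctly by a $>1-\frac{1}{2(k+1)}$ \emph{super-majority} of the $h_t$'s; a simple majority, which is all your $T=O(\log(1/\eps))$ rounds would give, is not enough. Third, the output is defined as any $h^*\in\H$ agreeing with the majority of the $h_t$'s on the region $K$ where that super-majority is decisive; $K$ is computable from the transcript alone, the existence of $h^*$ follows from closedness of $\H$ via a Tychonoff compactness argument combined with \cref{lem:nonrealize} (any non-realizable sample has a point on which a $\ge 1/k$ fraction of the $h_t$'s err, contradicting membership in $K$), and $L_S(h^*)\le\eps$ follows because $S'_a,S'_b\subseteq K$ with the majority labels, so $h^*$ is consistent with the local $\eps$-nets. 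Without the super-majority margin and the compactness step, the properness of the output cannot be established.
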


{We do not know whether \cref{thm:proper} can be extended to non-closed classes. However, it does extend under other natural restrictions.
For example, it applies when $X$ is countable, even when $\H$ is not closed.
For a more detailed discussion see \cref{prf:proper}.}

\section{Proof Techniques Overview}\label{sec:proofoverview}
In this section we give a high level overview of the main proof techniques. 
{

In \cref{sec:overviewdec} we give a simplified version of the protocol for the realizability problem, 
which is a main ingredient in the proof of \cref{thm:pchar}. 
This simplified version solves the special instance of convex set disjointness, 
and highlights the ideas  used in the general case.

In \cref{sec:overviewset} we briefly overview the set disjointness problem from Yao's model,
which serves as a tool for deriving lower bounds.
For example, it is used in the bound for agnostic learning in \cref{thm:agnlb}
and the bound for proper learning in \cref{thm:properlb}.
Set disjointness also plays a central role in relating the non-deterministic complexities with the VC and coVC dimensions
(\cref{thm:vcnp} and \cref{thm:covconp}).

In \cref{sec:cvxoverview} we overview the construction of the hard instances 
for the convex set disjointness problem,
which is used in the lower bounds for the realizability problem and in \cref{thm:logn}. 
We also discuss the implication of the construction for the lower bound for learning
in the realizable case (\cref{thm:reallb}).
}
\subsection{A Protocol for the Realizability Problem}
\label{sec:overviewdec}

To get a flavor of the arguments used in
the proof of \cref{thm:pchar},
we exhibit a protocol for convex set disjointness,
which is a special instance of the $\np\cap\conp\subseteq \p$ direction.
Recall that in the convex set disjointness problem,
Alice and Bob get as inputs two sets $X,Y\subseteq\R^d$
of size $n$, and they need to decide 
whether the convex hulls of $X,Y$  intersect.

\begin{figure}
\begin{tcolorbox}\label{alg:conv}
\begin{center}
{\bf Protocol for convex set disjointness}\\
\end{center}
\textbf{Input:} Let $X,Y\subset \R^d$ denote Alice's and Bob's inputs. \\ \ \\
\textbf{Protocol:}
\begin{itemize}
\item Let $\eps=\frac{1}{100d}$ and $n = |X|+|Y|$.
\item Alice sets $W_0(x)=1$ for each $x\in X$.
\item For $t=1,\ldots, T=2(d+1)\log n$
\begin{enumerate}
\item Alice sends Bob an $\eps$-net $N_i\subseteq X$ with
respect to the distribution $p_t(x) = \frac{W_{t-1}(x)}{\sum_x{W_{t-1}(x)}}$. 
\item Bob checks whether the convex hulls of $Y$ and $N_t$ have a common point.
\item If they do, Bob reports it and outputs INTERSECTION. 
\item Else, Bob sends Alice the $d+1$ support vectors from $N_t\cup Y$ that
encode a hyperplane $h_t$ that separates $Y$ from $N_t$.
\item Alice sets $W_{t}(x) = W_{t-1}(x)/2$ if $x$
is separated from $Y$ by $h_t$ and $W_{t}(x)=W_{t-1}(x)$
otherwise.
\end{enumerate}
\item Output DISJOINT.
\end{itemize}
\end{tcolorbox}
\caption{A $\tilde O(d^3\log n)$ sample complexity protocol for convex set disjointness}\label{fig:algconv}
\end{figure}

We can think of the protocol as simulating a boosting algorithm
(see \cref{fig:algconv}).
It proceeds in $T$ rounds, where at round $t$, 
Alice maintains a probability distribution $p_t$ on $X$,
and {requests} a weak hypothesis for it. 
Bob serves as a weak learner and {provides Alice} a weak hypothesis $h_t$
for $p_t$. 

The first obstacle is that to naively simulate this protocol, 
Alice would need to transmit $p_t$, which is a probability distribution, 
and Bob would need to transmit $h_t$, which is an hypothesis, 
and it is not clear how to achieve this with efficient communication complexity.

Our solution is as follows. 
At each round~$t$, Alice draws an \emph{$\frac{1}{100d}$--net} with respect to $p_t$ and transmits it to Bob.
Here, an $\eps$-net is a subset $N_t$ of Alice's points satisfying that every halfspace that contains an $\eps$ fraction of Alice's points with respect to~$p_t$ must contain a point in $N_t$.
Bob in turn checks whether the convex hulls of $N_t$ and~$Y$ intersect.
If they do then clearly the convex hulls of $X,Y$ intersect and we are done.
Otherwise, Bob sends Alice a hyperplane $h_t$ that separates $Y$ from $N_t$.
One way of sending $h_t$ is by the $d+1$ \emph{support vectors}\footnote{This is a subset of $N_t\cup Y$ that encodes a separator of $N_t$ and $Y$ with maximal margin. Note that formally, Bob cannot send points from $N_t$ however, since Alice already sent $N_t$ and so this can be handled using additional bits of communication. }.
The crucial point is that, by the $\eps$-net property, 
this hyperplane separates $Y$
from a $1-\frac{1}{100d}$ fraction of $X$ with respect to $p_t$.
%

Why does this protocol succeeds?  
The interesting case is when the protocol continues successfully for all of the $T$ iterations. 
{The challenge is to show that in this case the convex hulls must be disjoint.}
The main observation that allows us to argue that is the following corollary of Carath{\'e}odory's
theorem:
\begin{obs}\label{obs:carath}
If every point from $X$ is separated from $Y$ by more than $\bigl(1-\frac{1}{d+1}\bigr)$-fraction of the $h_t$'s
then the convex hulls of $X$ and $Y$ are disjoint.
\end{obs}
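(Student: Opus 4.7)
The plan is to argue by contradiction, using Carathéodory's theorem together with a union bound. I would suppose that there is a point $p \in \mathrm{conv}(X) \cap \mathrm{conv}(Y)$. Since $p$ lies in $\mathrm{conv}(X)$ and $X \subseteq \R^d$, Carathéodory's theorem yields a representation
\[
p \;=\; \sum_{i=1}^{d+1} \lambda_i\, x_i, \qquad x_i \in X,\ \lambda_i \ge 0,\ \sum_i \lambda_i = 1.
\]

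Next, I would apply the hypothesis to each of the $d+1$ points $x_i$. By assumption, for each $i$ the fraction of rounds $t$ for which $h_t$ \emph{fails} to separate $x_i$ from $Y$ is strictly less than $\tfrac{1}{d+1}$. A union bound over $i=1,\dots,d+1$ then shows that the fraction of rounds $t$ for which \emph{some} $x_i$ is not separated from $Y$ by $h_t$ is strictly less than $(d+1)\cdot\tfrac{1}{d+1}=1$. Hence there exists at least one ``good'' round $t^\star$ such that the hyperplane $h_{t^\star}$ simultaneously separates every $x_i$ from $Y$.

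To close the argument, I would note that a half-space is convex, so the convex combination $p=\sum \lambda_i x_i$ must lie on the same (open) side of $h_{t^\star}$ as the $x_i$'s, namely the $X$-side. But $p \in \mathrm{conv}(Y)$ forces $p$ to lie on the $Y$-side of $h_{t^\star}$, contradicting the separation. Thus no such $p$ exists and $\mathrm{conv}(X)\cap\mathrm{conv}(Y)=\emptyset$. There is no real obstacle here; the only subtlety worth a sentence is verifying that convex combinations preserve the open half-space side, which is immediate since $\sum_i \lambda_i=1$ and each $\lambda_i\ge 0$. The proof is essentially a one-line application of Carathéodory plus a union bound, which is exactly why the dimension $d+1$ in the statement cannot be improved.
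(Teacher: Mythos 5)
Your proof is correct and follows essentially the same route as the paper's: Carath\'eodory's theorem to reduce to $d+1$ points of $X$, a union bound over those points to find a single $h_{t^\star}$ separating all of them from $Y$, and convexity of the half-space to derive the contradiction. The paper phrases this as a contrapositive rather than a contradiction and leaves the final half-space-convexity step implicit, but the argument is the same.
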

\begin{proof}
First, we claim that every $d+1$ points from $X$ are separated from $Y$ by one of the $h_t$'s.
Indeed, every point in $X$ is \emph{not} separated by less than $\frac{1}{d+1}$-fraction of the $h_t$'s.
Now, a union bound yields that indeed any $d+1$ points from $X$ are separated from $Y$ by one of the $h_t$'s.

Now, this implies that $\mathsf{conv}(X)\cap\mathsf{conv}(Y)=\emptyset$:
by contraposition, if $\omega \in \mathrm{conv}(X)\cap\mathrm{conv}( Y)$ then by Carath{\'e}odory's theorem $\omega$ is in the convex hull of $d+1$ points from $X$.
 Hence these $d+1$ points can not be separated from $Y$.
\end{proof}


It remains to explain why the property 
holds when the protocol continues for $T$ iterations.
It relies on the so-called \emph{margin effect} of boosting algorithms~\cite{schapire97boosting}:
the basic result for the Adaboost algorithm states,
in the language of this problem, 
that after enough iterations, every $x\in X$ will be separated 
from $Y$ by a majority of the $h_t$'s. 
The margin effect refers to a stronger fact that 
this fraction of $h_t$'s increases when the number of iterations increases.
Our choice of $T$ guarantees that every $x\in X$ is separated
by more than a $1-\frac{1}{d+1}$ fraction of the $h_t$'s (see \cref{lem:supermajority}),
as needed.
To conclude, if $T$ iterations have passed without Bob reporting an intersection,
then the convex hulls are disjoint.

{Finally, we would also like to show how, given the output, we can calculate a separating hyperplane. Here for simplicity of the exposition we show how Alice can calculate the hyperplane, then she may transmit it by sending appropriate support vectors. 

Since each hyperplane $h_t$ was chosen so that $Y$ is contained in one of its sides,
it follows that Bob's set is contained in the intersection of all of these half-spaces.
We denote this intersection by $K_+$. 
The same argument we used above shows that $K_+$ is disjoint from Alice's convex hull
(because every point of Alice is separated from $K_+$ by more than a $1-\frac{1}{d+1}$ fraction of the $h_t$'s).
Therefore, Alice, who knows both $K_+$ and $X$, can calculate a separating hyperplane.}

The protocol that is used in the proof of {Theorem~\ref{thm:proper}}
goes along similar lines. Roughly speaking, the coVC dimension
replaces the role of Carath{\'e}odory's theorem,
and the VC dimension enables the existence of the $\eps$-nets. Because in general we cannot rely on support vectors, the general protocol we run is symmetrical, 
where both Alice and Bob transmit points to decide on a joint weak hypothesis for both samples.

\subsection{Set Disjointness}\label{sec:overviewset}

A common theme for deriving lower bounds in Yao's communication model 
and related models is via reductions to the set disjointness problem. In the set disjointness problem, we consider the boolean function $\mathrm{DISJ}_n(x,y)$, which is defined on inputs $x,y\in\{0,1\}^n$ and equals $1$ if and only if 
the sets indicated by $x,y$ are disjoint
(namely, either $x_i=0$ or $y_i=0$ for all $i$). 
A classical result in communication complexity gives a lower bound for the communication complexity of $\mathrm{DISJ}_n$.

\begin{theorem}[\cite{Kalyanasundaram92disj,Razborov92disj,Kushilevitz97book}]\label{thm:disj}
\

\begin{enumerate}
\item The deterministic and non-deterministic communication complexities of $\mathrm{DISJ}_n$ are at least $\Omega(n)$.
\item The randomized communication complexity of $\mathrm{DISJ}_n$ is $\Omega(n)$. 
\item The co-non-deterministic communication complexity of $\mathrm{DISJ}_n$
is at most $O(\log n)$.
\end{enumerate}
\end{theorem}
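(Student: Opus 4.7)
The plan is to handle the three parts separately, since each relies on a distinct standard technique from communication complexity.

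For the co-nondeterministic upper bound in part~3, I would simply exhibit the obvious protocol: nondeterministically guess an index $i\in[n]$, have Alice send $i$ together with the bit $x_i$, and have Bob accept if and only if $x_i = y_i = 1$. If the sets intersect, some witness $i$ exists and the protocol accepts on that guess; otherwise no guess succeeds. The message length is $\lceil\log n\rceil+1$, giving the $O(\log n)$ bound.

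For the deterministic (and non-deterministic for the ``YES'' side) lower bound in part~1, I would use the fooling set method. The natural candidate is
\[F \;=\; \bigl\{(A,\,[n]\setminus A) : A\subseteq [n]\bigr\},\]
which has size $2^n$ and lies entirely in $\mathrm{DISJ}_n^{-1}(1)$. For any two distinct $A,B\subseteq[n]$, pick $i$ where they differ; then either $(A,[n]\setminus B)$ or $(B,[n]\setminus A)$ contains an intersecting pair at coordinate $i$. Hence $F$ is fooling, which forces any partition of $\mathrm{DISJ}_n^{-1}(1)$ into monochromatic rectangles (and therefore any cover, which is what nondeterminism measures on the $1$-side) to have size at least $2^n$, yielding complexity $\ge n$. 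The deterministic bound is immediate from the non-deterministic one, or directly from the same fooling set argument.

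For the randomized lower bound in part~2 — the deep part of the statement — my plan is the information-complexity approach of Bar-Yossef--Jayram--Kumar--Sivakumar. Write $\disj_n(x,y) = \neg \bigvee_{i=1}^n (x_i\wedge y_i)$, viewing it as an $n$-fold composition with the two-bit primitive $\mathrm{AND}$. The first step is a direct sum: choose a collapsing distribution $\mu$ on single coordinates under which $\mathrm{AND}(x_i,y_i)=0$ almost surely, and show that under the product distribution $\mu^{\otimes n}$ the conditional information cost of any protocol for $\disj_n$ decomposes as a sum over coordinates of the conditional information cost of an $\mathrm{AND}$-protocol. The second step, which is the crux, is to lower bound the conditional information cost of $\mathrm{AND}$ by a constant; this is done by relating the statistical distance between transcript distributions on the $(1,0),(0,1),(1,1)$ inputs to the Hellinger distance and exploiting the ``cut-and-paste'' / rectangle property of transcripts. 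Combining the two steps yields an $\Omega(n)$ lower bound on information cost, and since information cost is dominated by expected communication, an $\Omega(n)$ bound on randomized complexity follows.

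The main obstacle will be part~2: the Hellinger-distance estimate for the $\mathrm{AND}$ primitive (the ``one-coordinate lemma'') is the only non-routine computation, since the direct-sum decomposition and the reduction from communication to information are by now fairly mechanical. An alternative route would be Razborov's original corruption-bound argument on a carefully chosen hard distribution, but the information-theoretic route is cleaner and more modular, so that is what I would pursue.
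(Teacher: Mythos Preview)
Your proof sketch is correct and standard for this classical result. However, there is nothing to compare against: the paper does not give its own proof of \cref{thm:disj}. The theorem is simply quoted from the literature (with citations to Kalyanasundaram--Schnitger, Razborov, and the Kushilevitz--Nisan book) and used as a black box for the reductions in \cref{thm:vcnp}, \cref{thm:covconp}, and \cref{thm:agnlb}. So your proposal is not so much a different route as the only route---the paper outsources the argument entirely.

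One minor remark on the content of your sketch: for part~2 the paper's citations point to the original Kalyanasundaram--Schnitger and Razborov proofs (discrepancy/corruption style), not to the later Bar-Yossef--Jayram--Kumar--Sivakumar information-complexity proof you outline. Either is fine, of course, since the statement is what matters here.
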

Though this model allows more expressive communication protocols, 
the set disjointness problem remains a powerful tool for 
deriving limitations in decision as well as search problems. 
In particular, we use it in deriving 
the separation between agnostic and realizable learning (\cref{thm:agnlb}),
and the lower bounds on the $\np$ and $\conp$ sample complexities
in terms of the VC and coVC dimensions (\cref{thm:vcnp} and \cref{thm:covconp}).

To get the flavor of how these reductions work, 
we illustrate how membership of a class in $\np$ implies that it has a finite VC dimension 
through set disjointness. 
The crucial observation is that given a shattered set $R$ of size $d$, 
a sample $S$ with points from $R$ is realizable 
if and only if it does not contain the same point with different labelings. 
We use this to show that a ``short proof'' of realizability 
of such samples imply a short NP proof for $\mathsf{DISJ}_d$.
The {argument} proceeds by identifying $x,y\in\{0,1\}^{d}$
with samples $S_X,S_Y$ negatively and positively labelled respectively. With this identification, $x,y$ are disjoint if and only
the joint sample $\concatx{X}{Y}$ is realizable.
Now, since all the examples are from $R$,
a proof with $k$ examples that $\concatx{X}{Y}$
is realizable can be encoded using $k\log d$ bits
and can serve as a proof that $x,y$ are disjoint
in Yao's model.
\cref{thm:disj} now implies that the non-determinstic sample complexity
is $k \geq \Omega(d/\log d)$.

\subsection{Convex Set Disjointness}\label{sec:cvxoverview}

Here we outline our construction that is used in \cref{thm:reallb} and \cref{thm:logn}. 
The underlying hardness stems from the \emph{convex set disjointness} problem, 
where each of Alice and Bob gets a subset of $n$ points in the plane
and they need to determine whether the two convex hulls are disjoint. 
In what follows we state our main result for the convex set disjointness problem, and briefly overview the proof. 
However,  we first discuss how it is used to derive the lower bound in \cref{thm:reallb} for learning in the realizable setting.

\paragraph{From Decision Problems to Search problems.}
A natural approach to derive lower bounds for search problems is via lower bounds for corresponding
decision problems.
For example, in order to show that no proper learning protocol of sample complexity $T(1/\eps)$
for a class $\H$ exists, it suffices to show that the realizability problem for $\H$
can not be decided with sample complexity $O(T(n))$.
Indeed, one can decide the realizability problem by plugging $\eps < 1/n$ 
in the proper learning protocol,  simulating it on an input sample $S=\concat$,
and observing that the output hypothesis $h$ satisfies $L_S(h)=0$ if and only if
$S$ is realizable. Checking whether $L_S(h)=0$ can be done with just two bits
of communication. 

The picture is more complicated if we want to prove lower bounds
against improper protocols, which may output $h\notin \H$ (like in \cref{thm:reallb}). 
To achieve this, we consider a promise-variant of the realizability problem.
Specifically, we show that it is hard to decide realizability, even under
the promise that the input sample is either (i) realizable 
or (ii) contains a point with two opposite labeling.
The crucial observation is that {any} (possibly improper) learner with $\epsilon < \frac{1}{n}$ 
can be used to distinguish between case (i), for which the learner outputs $h$ with $L_S(h)=0$,
and case (ii), for which \emph{any} $h$ has $L_s(h)\geq 1/n$, where $n$ is the input sample size.

%
%

\paragraph{Main Lemma and Proof Outline.} 
The above promise problem is stated as follows in the language of convex set disjointness.
\begin{restatable}[Convex set disjointness lower bound]{lemma}{cvxdisjoint}\label{lem:cvxdisjoint}
Consider the convex set disjointness problem in $\R^2$, where Alice's input is denoted by $A$,
Bob's input is denoted by $B$, and both $\lvert A\rvert,\lvert B\rvert$ are at most $n$.
Then any communication protocol with the following properties must have sample complexity at least~$\tilde\Omega(\log n)$.
\begin{enumerate}[(i)]
\item Whenever $\mathsf{conv}(A)\cap\mathsf{conv}(B) = \emptyset$ it outputs 1.
\item Whenever $A\cap B\neq \emptyset$ it outputs 0.
\item It may output anything in the remaining cases. 
\end{enumerate}
\end{restatable}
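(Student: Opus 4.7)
The plan is to establish the lower bound via a large-fooling-set argument combined with a transcript-counting bound that exploits the finite branching structure of sample-complexity protocols on a bounded domain.

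First, I would restrict to a finite geometric domain. Fix a set $R \subset \R^2$ of $N = \Theta(n)$ points in strictly convex position (say, evenly spaced on a smooth convex curve), and consider only inputs that are subsets of $R$; a lower bound here implies one for the general problem. In this setting the promise has a purely combinatorial form: the convex hulls of two subsets $A, B \subseteq R$ are disjoint if and only if they are \emph{arc-separated}, i.e., some arc of $R$ contains all of $A$ and is disjoint from $B$.

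Second, I would construct a family $\mathcal{F} = \{(A_i, B_i)\}_{i \in I}$ of input pairs of size $|\mathcal{F}| = n^{\tilde\Omega(\log n)}$ with the following fooling-set property: (a) every diagonal pair $(A_i, B_i)$ satisfies $A_i \cap B_i \neq \emptyset$, and (b) every off-diagonal pair $(A_i, B_j)$ with $i \neq j$ is arc-separated (hence has disjoint convex hulls). My intention is to build $\mathcal{F}$ recursively: partition the convex curve into successively smaller arcs, place scaled copies of lower-level fooling families into disjoint sub-arcs, and aggregate into a combined family at each level via a product-like operation. A recursion of depth $\Theta(\log n)$ produces the target size.

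Third, I would conclude using transcript counting. Any protocol with sample complexity $T$ produces transcripts that are sequences of at most $T$ points, each one of the $N = \Theta(n)$ options, giving at most $n^{O(T)}$ distinct transcripts in total. By the standard rectangle property of communication protocols, the transcripts on the diagonal pairs in $\mathcal{F}$ must all be distinct (otherwise swapping into an off-diagonal pair would violate correctness under the promise). Therefore $n^{O(T)} \geq |\mathcal{F}| = n^{\tilde\Omega(\log n)}$, which rearranges to $T = \tilde\Omega(\log n)$.

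The main obstacle is the construction and verification of $\mathcal{F}$ in step 2. The convex-position constraint is rigid — every one of the $|\mathcal{F}|^2 - |\mathcal{F}|$ off-diagonal pairs must admit a separating arc — so large fooling sets cannot be built purely combinatorially (e.g., encoding set disjointness on disjoint single points gives only $|\mathcal{F}| = O(n)$ and hence a trivial bound). The recursive construction must simultaneously balance two competing requirements: each diagonal pair needs overlap on some arc, while every off-diagonal pair needs disjoint arcs, with the recursive combination preserving both conditions across scales. This is where the bulk of the geometric-combinatorial work lies, and is likely the step that forces the $\tilde\Omega$ (rather than exact $\Omega$) in the final bound.
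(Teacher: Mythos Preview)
Your fooling-set plan cannot work: no fooling set of size $n^{\tilde\Omega(\log n)}$ exists for this promise problem, and in fact any $0$-fooling set has size at most $n$. To see this, cover the $0$-inputs (pairs with $A\cap B\neq\emptyset$) by the rectangles
\[
R_p \;=\; \{(A,B): p\in A\}\times\{(A,B): p\in B\},\qquad p\in R,
\]
one for each of the $N=\Theta(n)$ points of your convex domain $R$. Every $0$-input lies in some $R_p$, and no $R_p$ contains a $1$-input (if $p\in A\cap B$ then $\mathsf{conv}(A)\cap\mathsf{conv}(B)\neq\emptyset$). Hence any $0$-fooling set, even under your stronger ``both swaps are arc-separated'' requirement, has at most $n$ elements; your transcript-counting step then yields only $T\geq\Omega(1)$. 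The same obstruction kills $1$-fooling sets: the $O(n^2)$ rectangles $\{A: A\subseteq I\}\times\{B: B\cap I=\emptyset\}$, one per arc $I$, cover all $1$-inputs and contain no $0$-input, so $1$-fooling sets have size $O(n^2)$ and again give nothing. This is not a technicality of your recursive construction---it is an absolute barrier: both nondeterministic sample complexities of this problem are $O(1)$ (a shared point certifies intersection; two points delimiting a separating arc certify disjointness), and fooling sets can never beat the corresponding cover numbers.

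The paper's proof proceeds by an entirely different mechanism: a round-elimination argument on a pointer-chasing-style recursive hard distribution $I_{m,r}$ embedded in the plane. Alice's input is a union of $m$ scaled sub-instances and Bob's input sits inside a single random one of them; the key geometric lemma shows that disjointness of the full instance is equivalent to disjointness of that single sub-instance. An information-theoretic round-elimination lemma then converts any $r$-round protocol for $I_{m,r}$ into an $(r-1)$-round protocol for $I_{m,r-1}$ with small additional error, and iterating yields a bits-per-round lower bound of $\tilde\Omega(n^{1/r})$, hence a total sample-complexity bound of $\tilde\Omega(\log n)$. The point is that the hardness here lives in the \emph{round structure}, not in the rectangle structure---which is exactly why fooling sets are powerless.
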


\ignore{
\begin{lemma}[Convex Set Disjointness -- Lower Bound]\label{lem:cvxdisjoint}
There exists a distribution over pairs of random sets $(A,B)$ of size $n$ in $\R^2$, such that 
\begin{enumerate}
\item\label{cvx:1} $A\cap B\ne \emptyset$ or the convex hull of $A$ and $B$ are disjoint.
\item\label{cvx:2} Assume Alice receives as sample $S_a= \{(a,1): a\in A\}$ and Bob receives as sample $S_{b}=\{(b,-1), b\in B\}$. Then any communication protocol that determines correctly the realizability problem w.p. $>0.9$, must send $\Omega(\log(n)/\log \log(n))$ points.
\end{enumerate}
\end{lemma}
}
We next try to sketch the high level idea of the proof
(so we try to focus on the main ideas
rather than on delicate calculations).
The complete proof is somewhat involved and appears in \cref{sec:cvxdisjoint}. 

Like in our other lower bounds, we reduce the proof to a corresponding problem
in Yao's model. 
A challenge that guides the proof is that the lower bound should apply
against protocols that may send examples, which contain a large number of bits (in Yao's model). 
Note that in contrast with previous lower bounds, we aim at showing an $\Omega(\log n)$ bound, which roughly corresponds to the bit capacity of each example in a set of size $n$. Thus, a trivial lower bound showing $\log n$ bits are necessary may not suffice to bound the sample complexity.
This is handled by deriving a round-communication tradeoff,
which says that every $r$-rounds protocol for this problem has complexity of at least 
$\tilde\Omega(r + n^{1/r})$.
This means that any efficient protocol must have many rounds,
and thus yields \cref{lem:cvxdisjoint}.

The derivation of this tradeoff involves embedding 
a variant of ``pointer chasing'' in the Euclidean plane
(see~\cite{papa84comm,nisan93rounds} for the original variant of pointer chasing). 
The hard input-instances are built via a recursive construction
(that allows to encode tree-like structures in the plane). 

For integers $m,r>0$ we produce a distribution over inputs $I_{m,r}=(A_{m,r},B_{m,r})$ 
of size $n\approx m^r$.
We then show that a random input from $I_{m,r}$ cannot be solved 
in fewer than $r$ rounds,
each with sample complexity less than $m$
(the exact bounds are not quite as good).

For $m=r=1$, we set $A_{m,r}=\{(0,0)\}$ and $B_{m,r}$ randomly either $\{(0,0)\}$ or $\emptyset$. 
If there is only one round in which Alice speaks, then she cannot determine whether or not their sets intersect
and therefore must err with probability $1/2$. 
For $r>1$, we take $m$ points spaced around a semicircle (an example with $m=3$ is shown in \cref{fig:cvxdisjoint}). 
Around each point we have a dilated, transposed and player swapped copy of $I_{m,r-1}$. 
Alice's set is the union of the copies of points of the form $B_{m,r-1}$ in all of the 
$m$ copies, 
while Bob's set are points of the form $A_{m,r-1}$ in a single copy $i$ chosen at random.
We rotate and squish the points so that any separator of the
Bob's points from the $i$'th copy of $B_{m,r-1}$ that Alice holds,
will also separate Bob's points from the rest of Alice's points. 
This guarantees that 
all of Alice's copies except the $i$'th one will not affect whether or not the convex hulls intersect,
which means that to solve $I_{m,r}$ they must solve a single random copy
of $I_{m,r-1}$.

\begin{figure}
\begin{center}
\includegraphics[scale = 0.5]{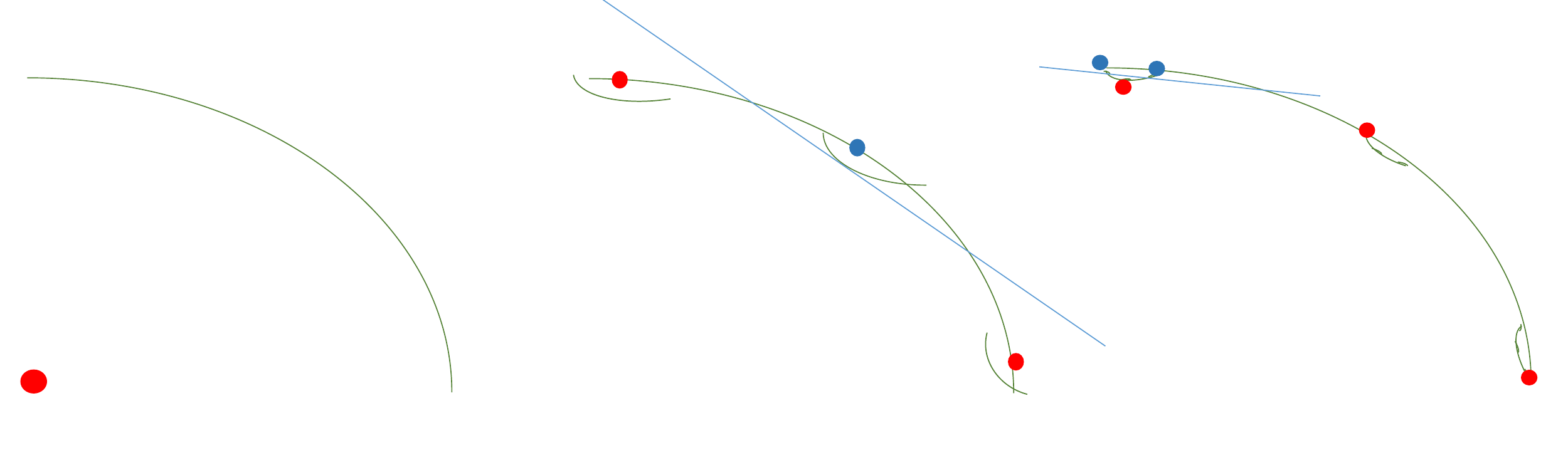}
\end{center}
 \captionsetup{width=.8\linewidth}
\caption{\small{Examples of separable instances $I_{3,1},I_{3,2}, I_{3,3}$. In each instance, Alice's points are red and Bob's points are blue. In $I_{3,1}$ Alice has a single point, and Bob an empty set. 
In $I_{3,2}$, Alice receives three instances of
a Bob's points in $I_{3,1}$ (the first and last instances are a point and the middle instance is an empty set), and Bob receives a single instance (a single point). These are then embedded around $3$ points on the sphere. This construction continues to $I_{3,3}$ where again roles are reversed. To maintain separability, the instances are rotated in the plane.}}\label{fig:cvxdisjoint}
\end{figure}

The proof now proceeds via a round elimination argument. 
We can think of Alice's set as consisting of $m$ instances of 
a small problem (with parameter $r-1$)
and Bob's set as consisting of a single instance of Alice's,
chosen uniformly and independently of other choices
(represented by $i$). 
Alice's and Bob's convex hulls overlap if and only if the convex hulls of 
the copies that correspond to the single instance that Bob holds overlap.
Thus, assuming Alice speaks first, since she does not know $i$, 
her message will provide negligible information on the $i$'th copy,
unless her message is long. 
This is formalized using information theory in a rather standard way.


\section{Discussion and Future Research}
\label{sec:discussion}

\ignore{
\section{Related Models}

\subsection{Communication Complexity}\label{sec:comp}
Our definition of communication protocols can be seen as a compromise
between Yao's model~\cite{a} and Abelson's model~\cite{b}.
Like in Abelson's model, which enables transmission
of real numbers as atomic communication units, 
also in our model the atomic units of communication are \emph{examples},
which may come from an infinite domain.
On the other hand, 
from a technical perspective
our model is closer to the
(combinatorial) model of Yao
than the (analytical) model of Abelson.

For finite classes, the protocols in our model can be simulated by protocols in Yao's model. 
Indeed, each example can be encoded by $\log N$
bits, if $N$ is the size of the examples domain.
This simple fact is utilized in deriving lower bounds
by reductions to lower bounds in Yao's model.
In the opposite direction, every protocol in Yao's model
can be simulated in our model, as explained in Section~\ref{sec:model}
after the formal definition of a communication protocol.

\subsubsection*{Why sample complexity and not bit complexity?}
Using the simulation described above,
one can translate the results in this paper 
into analogous results in Yao's model.
However, we prefer to present the results in our model
since in learning theory 
information is typically quantified in terms of
sample complexity rather than bit complexity.

Another advantage is that this model
naturally extends the notion 
of \emph{sample compression schemes},
to an interactive setup:
sample compression schemes were introduced
by Warmuth and Littlestone~\cite{}
as a natural abstraction of many learning algorithms.
They exactly correspond to one-way protocols in this model:
a one-way protocol is a protocol in which 
one party, say Alice, gets an input sample,
and the goal is that Alice transmits as few
examples to Bob as possible in order
for him to output a hypothesis
which is minimizes the empirical loss with respect to the whole input sample.
Thus, a sample compression schemes
of size $k$ for $\H$ is a one-way protocol
of sample complexity $k$
that learns $\H$ with $0$ error in the realizable case.}

\subsection{$\p = \np\cap\conp$ for Realizability Problems}\label{sec:npconp}
Theorem~\ref{thm:pchar} states that 
$\p=\np\cap\conp$ in the context of realizability problems. 
An analogous result is known to hold in standard communication complexity as well~\cite{Aho83notions};
this result is more general than ours in the sense that it applies to arbitrary decision problems, 
while \cref{thm:pchar} only concerns realizability problems. 

It is natural to ask how these two results are related,
and whether there is some underlying principle
that explains them both.
While we do not have a full answer,
we wish to highlight some differences between the two theorems.
%

First, the proofs are quite different.
The proof by \cite{Aho83notions}~is purely combinatorial 
and relies on analyzing coverings of the input space
by monochromatic rectangles.
Our proof of \cref{thm:pchar} uses fractional combinatorics;
in particular it is based on linear programming duality
and multiplicative weights update regret bounds.

Second, 
the theorem from \cite{Aho83notions} gives a protocol
with bit-complexity $O(N_0\cdot N_1)$,
where $N_0,N_1$ are the non--deterministic complexities.
\cref{thm:pchar} however gives a protocol
with sample complexity $\tilde O(S_0^2 S_1\log n)$,
where $S_0,S_1$ are the non--deterministic sample complexities,
and $n$ is the input size.
The former bound 
is also symmetric in $N_0,N_1$ while the latter bound
is not symmetric in $S_0,S_1$.
{This difference may be related to
that} while the negation of a decision problem
is a decision problem,
there is a no clear symmetry
between a realizability problem 
and its negation 
(i.e.~the negation is not a realizability problem with respect to another class).



\subsection{Sample Compression Schemes as One-Sided Protocols}\label{sec:samplecompression}

Sample compression schemes were introduced by~\cite{Littlestone86relating} 
as a convenient framework for proving generalization bounds for classification problems,
and were studied by many works (a partial list includes~\cite{FW95, DL98,LS13, gottlieb14near,balcan14learning,MSWY15,wiener15agnostic,bendavid16version,kontorovich17nearest,ashtiani17agnostic})
%
In the {two-party model considered here}, they correspond to one-sided protocols where only one party
receives a realizable input sample $S$ (the other party's input is empty),
and the goal is to transmit as few example as possible so that the
receiving party can output a hypothesis that is consistent with all the input sample.
The {\em size} of a sample compression scheme is the number of transmitted examples.

Thus, a possible way to view this model 
is as \emph{distributed sample compression schemes}
(i.e.\ the input sample is distributed between the two parties).
With this point of view, Theorem~\ref{thm:reallb}
implies that every distributed sample compression scheme
for half-planes must have size $\tilde\Omega(\log n)$;
in particular, the size must depend on the input sample size.
This exhibits a difference with (standard, one-sided) sample compression schemes
for which it is known that every class has 
{a compression scheme} of size 
depending only on the VC dimension of the class~\cite{MY15};
{specifically, half-planes have compression schemes of size $3$ 
(using the support vectors), but Theorem~\ref{thm:reallb}
implies that every distributed sample compression scheme
for them must have size $\tilde \Omega(\log n)$.}
%

{\subsection{Decision Vs. Search}
One important aspect of this work is 
the connection between search and decision problems. 
For example,
\cref{thm:proper} provides, in the realizable setting, an equivalence between
efficient proper learnability and the analogous decision problem of realizability.

A similar equivalence holds in the improper realizable setting,
where the associated decision problem is 
the following \emph{promise} variant of the realizability
problem: distinguish between 
(i) realizable samples,
and (ii) noisy samples 
(where a sample is noisy if it contains two examples with the same point and opposite labels). 
Indeed, any efficient (possibly improper) learner implies
an efficient protocol for this promise variant 
(this is used in the proof of \cref{thm:reallb}).
Also the opposite direction holds: 
an efficient protocol for the promise problem implies a finite VC dimension, 
which yields an efficient learning protocol via boosting. 

It is interesting to compare these connections to recent works 
that relate efficient PAC learnability to refutation problems. 
Concretely, \cite{vadhan} shows that a class
is efficiently (possibly improperly) PAC-learnable in the realizable setting 
if and only if there exists an efficient algorithm that distinguishes between 
(i) realizable samples, 
and (ii) noisy samples where here noisy means that the labels are independent of points.
A similar result by \cite{kothari} relates agnostic learning and distinguishability between (i) samples that correlates with the class and 
(ii) noisy samples (i.e.\ the labels are independent from the points). 
}
\subsection{Open Questions}\label{sec:openquestions}

\paragraph{The complexity of convex set disjointness.}
There is a gap between our upper bound 
and lower bound on the sample complexity of the convex set disjointness problem;
$\tilde O(d^3\log n)$ versus $\tilde \Omega(d +\log n)$ by  \cref{thm:pchar} and \cref{thm:reallb}. More generally, it would be interesting to obtain tight bounds for proper learning of classes with finite VC and coVC dimensions.

\paragraph{Combinatorial characterizations of $\np$.}
Theorems~\ref{thm:pchar} and~\ref{thm:covconp} give a combinatorial characterization of $\p$ and $\conp$.
Indeed, Theorem~\ref{thm:pchar} shows that $\H$ is in $\p$
if and only if it has finite VC and coVC dimensions,
and Theorem~\ref{thm:covconp} shows that $\H$ is in $\conp$ if and only if it
has a finite coVC dimension.

It would be interesting to find such a characterization
for the class $\np$ as well.
Theorem~\ref{thm:vcnp} implies that
every class in $\np$ has a finite VC dimension |
the converse remains open.

A related open problem is the existence of \emph{proper sample compression schemes}. 
Indeed, the existence of proper compression scheme of polylogarithmic sample size
will entail that every VC class is  in $\np$.


%

%

\paragraph{Agnostic learning.}
Theorem~\ref{thm:agnub} shows that every VC class can be
learned in the agnostic case with sample complexity $o(1/\eps^2)$.
However, this is still far from the lower bound given in
Theorem~\ref{thm:agnlb} of $\tilde \Omega(1/\eps)$.
It would be interesting to find the correct dependency.

The protocol from Theorem~\ref{thm:agnub}
reveals much more information than required. 
Indeed, the subsample published
by the parties forms an $\eps$-approximation, and therefore
reveals, up to $\pm\eps$ the losses of all hypotheses in $\H$,
rather than just the minimizer.
Also, the protocol uses just one round of communication.
Therefore, it is plausible that the bound
in Theorem~\ref{thm:agnub} can be improved.

Another interesting direction concerns relaxing the definition
of agnostic learning by allowing a multiplicative slack.   
Let $c\geq 1$ be a constant. We say that a protocol
$c$-agnostically learns $\H$ if for every input sample $S=\concat$
it outputs $h$ such that $L_S(h)\leq c \cdot \min_{f \in H}L_S(f) + \eps$.
What is the sample complexity of $c$-agnostically learning a class of VC dimension $d$?
As mentioned above, for $c > 4$ 
the sample complexity is $O(d\log(1/\eps))$ by \cite{Chen16boosting},
and for $c=1$ it is $\tilde\Omega(1/\eps)$ by \cref{thm:agnlb}.

\paragraph{Learning (noiseless) concepts.}
Our lower bounds rely on hard input samples
that are noisy in the sense that they contain a point 
with opposite labels. It would be interesting
to study the case where the input sample is guaranteed
to be consistent with some hypothesis~$h$ (not necessarily in $H$).
As a simple example let $H_d$ be the class of all concepts with at most $d$
many $1$'s. Since the VC dimension of $H$ is $d$,
it follows that deciding the realizability problem
for $H$ has $\tilde \Omega(d)$ sample complexity.
However, if the input sample is promised to be noiseless
then there is an $O(\log d)$ protocol for deciding
realizability problem. Indeed, the parties just need to check whether
the total number of $1$'s in both input samples is at most~$d$ or not.

Similarly, our lower bound in the agnostic case uses noisy samples,
and it could be that agnostic learning is easier for noiseless input.
Let $\H$ be a class with a finite VC dimension.
Consider the problem of agnostically learning under the promise
that the input sample is consistent with some target function.
Is there a learning protocol in this case with sample complexity $o(1/\eps)$?

\paragraph{Multiclass categorization.}
The model presented here naturally extends to multiclass categorization,
which concerns hypotheses $h:X\to Y$ for large $Y$.
Some of the arguments in this paper {naturally} generalize,
while others less so.
For example it is no longer clear whether $\p=\np\cap\conp$
when the range $Y$ is very large (say $Y=\N$).

\section*{Acknowledgements}
We thank Abbas Mehrabian and Ruth Urner for insightful discussions.

\bibliographystyle{alpha}
\bibliography{ref}

\appendix


\section{Technical Background}
\subsection{Boosting and Multiplicative Weights}
Our communication protocols in the realizable setting are 
based on the seminal \emph{Adaboost} algorithm~\cite{freund97decision}, which we briefly outline next. 
The simplified version of Adaboost we apply here may be found in \cite{schapire2012boosting}.
%

Adaboost gets as an input a sample $S$ 
and outputs a classifier.
It has an oracle access to an \emph{$\alpha$-weak} learner.
This oracle gets as input a distribution $p$
over $S$ and returns a hypothesis $h=h(p)$
that has an advantage of at least $\alpha$ over a random guess, namely:
\begin{align*}
\mathbb{E}_{(x,y)\sim p} \left[\err{h}\right] \le \frac{1}{2}- \alpha.
\end{align*}
Adaboost proceeds in rounds $t=1,2,\ldots,T$. 
In each round it calls the weak learner with a distribution $p_t$ 
and receives back a hypothesis $h_t$.
Its output hypothesis
is the point-wise majority vote of all the $h_t$'s.
To complete the description of Adaboost, 
it remains to describe how the $p_t$'s are defined: 
$p_1$ is the uniform distribution over $S$, and for every $t>1$ and $z=(x,y)\in S$ we define by induction:
\[p_{t+1}(z)\propto p_t(z)e^{-\eta \cdot 1[h_t(x)=y]}\]
where $\eta$ is a parameter of choice.

Thus, $p_{t+1}$ is derived from $p_t$ by decreasing the probabilities
of examples on which $h_t$ is correct and increasing the probabilities of examples
where $h_t$ is incorrect.

The standard regret bound analysis of boosting yields
\begin{theorem}[\cite{freund97decision}]\label{thm:boosting}
Set the parameter $\eta$ in Adaboost to be  $\alpha$. 
Let $\eps > 0$ and 
let $T\geq \frac{2\ln(1/\eps)}{\alpha^2}$. 
Let $h_1,\ldots, h_T$ denote the weak hypotheses returned by 
an arbitrary $\alpha$--weak learner during the execution of Adaboost. 
Then, there is $S'\subseteq S$ of size $\lvert S' \rvert \geq (1-\eps)\lvert S\rvert$
such that  for every~$(x,y)\in S'$:
\begin{align*}
 \frac{1}{T}\sum_{t=1}^T \err{h_t} < 1/2.
\end{align*}
\end{theorem}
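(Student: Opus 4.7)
The proof will follow a standard potential-function (weight-tracking) analysis. For each $z \in S$ set $w_1(z)=1$ and inductively $w_{t+1}(z)=w_t(z)\cdot e^{-\alpha\cdot 1[h_t(x)=y]}$, so that $p_t(z)=w_t(z)/Z_t$ with $Z_t=\sum_{z\in S}w_t(z)$. Writing $c(z)=|\{t\le T:h_t(x)=y\}|$, the conclusion asks for $|S'|\ge(1-\eps)|S|$ where $S'=\{z\in S:c(z)>T/2\}$. The strategy is to squeeze $Z_{T+1}$ between two estimates which together force $|S\setminus S'|\le\eps|S|$.

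Upper bound on $Z_{T+1}$: a one-step computation gives
\[
\frac{Z_{t+1}}{Z_t}=\E_{z\sim p_t}\bigl[e^{-\alpha\cdot 1[h_t(x)=y]}\bigr]=\epsilon_t+(1-\epsilon_t)e^{-\alpha},
\]
where $\epsilon_t=\E_{z\sim p_t}[\err{h_t}]\le 1/2-\alpha$ by the weak-learning guarantee. Since the right-hand side is monotone increasing in $\epsilon_t$, each factor is at most $q:=(1/2-\alpha)+(1/2+\alpha)e^{-\alpha}$, and telescoping gives $Z_{T+1}\le |S|\cdot q^T$.

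Lower bound on $Z_{T+1}$: for every $z\in S\setminus S'$ one has $c(z)\le T/2$, so $w_{T+1}(z)=e^{-\alpha c(z)}\ge e^{-\alpha T/2}$; summing over $S\setminus S'$ gives $Z_{T+1}\ge|S\setminus S'|\cdot e^{-\alpha T/2}$. Combining the two estimates yields
\[
\frac{|S\setminus S'|}{|S|}\le\bigl(q\cdot e^{\alpha/2}\bigr)^T.
\]
It remains to check $q\cdot e^{\alpha/2}\le e^{-\alpha^2/2}$. Taking logs and using $\log(1-x)\le -x$, this reduces to showing $g(\alpha):=\alpha/2-(1/2+\alpha)(1-e^{-\alpha})\le -\alpha^2/2$ for the range of $\alpha$ of interest; a short Taylor expansion gives $g(\alpha)=-3\alpha^2/4+O(\alpha^3)$ near zero, and a direct monotonicity check extends the inequality to $\alpha\in(0,1/2]$. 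Substituting $T\ge 2\ln(1/\eps)/\alpha^2$ then gives $|S\setminus S'|/|S|\le e^{-\alpha^2 T/2}\le\eps$, proving the theorem.

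The only non-mechanical step is the final inequality $q\cdot e^{\alpha/2}\le e^{-\alpha^2/2}$: because the schedule $\eta=\alpha$ is fixed (rather than the round-adaptive $\eta_t=\tfrac12\ln\tfrac{1-\epsilon_t}{\epsilon_t}$ used in textbook AdaBoost analyses), the first-order terms in the Taylor expansion of $q\cdot e^{\alpha/2}$ must cancel exactly for the quadratic rate to survive. Everything else in the argument is routine bookkeeping.
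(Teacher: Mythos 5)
Your proof is correct. The paper does not actually prove this statement---it is imported from Freund--Schapire---but your potential-function argument is the standard one and is the same technique the paper uses for \cref{lem:supermajority}: track the total weight, upper-bound its per-round decay via the weak-learning guarantee, and lower-bound it by the weight of any point on which fewer than half the $h_t$'s are correct. The one step you flagged as non-mechanical, namely $q\,e^{\alpha/2}\le e^{-\alpha^2/2}$ with $q=(1/2-\alpha)+(1/2+\alpha)e^{-\alpha}$, does hold on the entire admissible range $\alpha\in(0,1/2]$: setting $h(\alpha)=\alpha/2+\alpha^2/2-(1/2+\alpha)(1-e^{-\alpha})$, one computes $h'(\alpha)=(\alpha-1/2)(1-e^{-\alpha})\le 0$ and $h(0)=0$, so $h\le 0$, and combined with $\log(1-x)\le -x$ this gives exactly the bound you need; the rest of your bookkeeping ($Z_{T+1}\le|S|q^T$, $Z_{T+1}\ge |S\setminus S'|e^{-\alpha T/2}$, and $e^{-\alpha^2T/2}\le\eps$ for $T\ge 2\ln(1/\eps)/\alpha^2$) checks out.
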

In other words, after $O\bigl({\log(1/\eps)}/{\alpha^2}\bigr)$
rounds $L_S(h) \leq \eps$, 
where $h$ denotes the majority vote of the $h_t$'s.
In particular, $L_S(h)=0$ after $T=O\bigl(\log(|S|)/\alpha^2\bigr)$
rounds.

By a simple extension 
to the standard boosting analysis, it is well known that adding sufficiently many rounds, even after the error rate is zero, leads to a super-majority of the hypotheses to become correct on \emph{every} point in the sample. Using a more refined analysis, one can improve the convergence rate for sufficiently strong learners and, for completeness, we perform this in the next lemma (see also \cite{schapire2012boosting} and the analysis of $\alpha$-boost for similar bounds).

\begin{lemma}\label{lem:supermajority}
Set the parameter $\eta$ in Adaboost to be  $\ln 2$. 
Let $T\geq 2 k \log |S|$ for $k>0$, 
and have $h_1,\ldots, h_T$ denote the weak hypotheses returned 
by an arbitrary $\alpha$-weak learner with $\alpha = \frac{1}{2}-\frac{1}{5k}$
during the execution of Adaboost. 
Then,  for every~$z=(x,y)\in S$:
%
\begin{align*}&\label{eq:supermajority}
 \frac{1}{T}\sum_{t=1}^T \err{h_t} \le \frac{1}{k}.
\end{align*}
\end{lemma}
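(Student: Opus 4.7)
The plan is to follow the standard Adaboost potential argument, tracking the total weight $Z_t$ carefully with $\eta = \ln 2$ (so $e^{-\eta} = 1/2$), and comparing an upper bound on $Z_{T+1}$ with a lower bound coming from any fixed sample point.

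First I would define the unnormalized weights $W_1(z) = 1$ and $W_{t+1}(z) = W_t(z)\,e^{-\eta\,\mathbf{1}[h_t(x)=y]}$, so that $p_t(z) = W_t(z)/Z_t$ where $Z_t=\sum_z W_t(z)$. The weak-learning assumption says that the $p_t$-mass of correctly classified points is at least $1/2 + \alpha$, so
\[
Z_{t+1}\;\le\; Z_t\Bigl[(1/2+\alpha)e^{-\eta} + (1/2-\alpha)\Bigr].
\]
Plugging $\eta=\ln 2$ and $\alpha = 1/2 - 1/(5k)$ gives $Z_{t+1} \le Z_t\bigl(\tfrac12 + \tfrac{1}{10k}\bigr)$, whence
\[
Z_{T+1} \;\le\; |S|\cdot\Bigl(\tfrac12 + \tfrac{1}{10k}\Bigr)^T.
\]

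Next, for any fixed $z=(x,y)\in S$, let $c_z = \sum_{t=1}^T \mathbf{1}[h_t(x)=y]$ be the number of rounds on which $z$ is classified correctly. Then $W_{T+1}(z) = e^{-\eta c_z} = 2^{-c_z}$, and so $2^{-c_z}\le Z_{T+1}$. Taking $\log_2$ of both sides,
\[
c_z \;\ge\; -T\log_2\!\Bigl(\tfrac12 + \tfrac{1}{10k}\Bigr) \;-\; \log_2 |S|
\;=\; T\Bigl(1 - \log_2\bigl(1+\tfrac{1}{5k}\bigr)\Bigr) - \log_2 |S|.
\]
Using $\log_2(1+x)\le x$ (for $x\ge 0$), one gets $c_z \ge T\bigl(1 - \tfrac{1}{5k}\bigr) - \log_2 |S|$. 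Hence the number of mistakes satisfies
\[
T - c_z \;\le\; \tfrac{T}{5k} + \log_2 |S|.
\]
Plugging in $T\ge 2k\log|S|$ (reading $\log$ as $\log_2$; the natural-log case is analogous up to constants absorbed into the slack) yields $\log_2|S| \le T/(2k)$, so $T - c_z \le T/(5k) + T/(2k) \le T/k$, which is the desired bound $\frac{1}{T}\sum_t \mathbf{1}[h_t(x)\ne y]\le 1/k$.

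The main obstacle is just making the constants line up: the value $\alpha = 1/2 - 1/(5k)$, the step size $\eta = \ln 2$, and the horizon $T\ge 2k\log|S|$ are tightly coupled through the above $\log_2(1+x)\le x$ slack, and one has to verify that the leftover $\log|S|$ term is small compared with $T/k$. Everything else is a routine unwinding of the Adaboost potential, essentially identical to \cref{thm:boosting}, just tracked per-example rather than in aggregate and pushed beyond the point where the error reaches zero to exploit the margin/super-majority phenomenon.
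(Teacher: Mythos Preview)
Your approach is essentially identical to the paper's: both track the unnormalized potential $\Phi_t=\sum_z W_t(z)$ (your $Z_t$), use the weak-learner guarantee to contract it by a factor $\tfrac12\bigl(1+\tfrac{1}{5k}\bigr)$ per round, and compare with the per-example weight $2^{-c_z}$.

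There is one genuine slip. The inequality ``$\log_2(1+x)\le x$ for $x\ge 0$'' is false (the slope of $\log_2$ at $0$ is $1/\ln 2>1$). The correct bound is $\ln(1+x)\le x$, i.e.\ $\log_2(1+x)\le x/\ln 2$. Applying this with $x=\tfrac{1}{5k}$ gives
\[
\log_2\!\Bigl(1+\tfrac{1}{5k}\Bigr)\;\le\;\frac{1}{5k\ln 2}\;<\;\frac{1}{2k},
\]
which is exactly the inequality $1+\tfrac{1}{5k}\le 2^{1/(2k)}$ that the paper invokes. With this fix your chain becomes
\[
T-c_z \;\le\; \frac{T}{2k}+\log_2\lvert S\rvert \;\le\; \frac{T}{2k}+\frac{T}{2k} \;=\; \frac{T}{k},
\]
matching the paper's conclusion. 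So the argument is correct once you replace the erroneous inequality; the slack you built in (ending at $7T/(10k)$) disappears, but the desired $T/k$ bound still holds.
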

\begin{proof}
For each $z\in S$ set $W_1(z)=1$ and, for each $t>1$, set
$$W_{t+1}(z) =W_t(z) 2^{-1[h_t(x)=y]}.$$
By choice of $\eta$ and the update rule we have that $p_t(z) = \frac{W_t(z)}{\Phi_t}$,
where $\Phi_t= \sum_{z\in S} W_t(z)$. 
Next, since $h_t$ is $\alpha$--weak with respect to $p_t$ we have that $\sum_{h_t(x)\ne y} p_t(z)\le \frac{1}{5k}$. 
Thus:
%
\begin{align*}
\Phi_{t+1} 
&= \sum_{\{z\in S: h_t(x)= y\}} W_{t+1}(z) +\sum_{\{z\in S: h_t(x)\ne y\}} W_{t+1}(z)&\\
&=\sum_{\{z\in S: h_t(x)= y\}} \frac{1}{2}W_{t}(z)+\sum_{\{z\in S: h_t(x)\ne y\}} W_{t}(z)\\
&= \Phi_t \cdot\left(\sum_{\{h_t(x)= y\}} \frac{1}{2}\frac{W_{t}(z)}{{\sum_{z\in S}{W_t(z)}}}+\sum_{\{h_t(x)\ne y\}} \frac{W_{t}(z)}{\sum_{z\in S}{W_t(z)}}\right)& \left( \Phi_t = \sum_{z\in S}W_t(z)\right)&\\
& = \Phi_t \cdot\left(  \sum_{\{z\in S: h_t(x)= y\}}\frac{1}{2}p_t(z)+\sum_{\{z\in S: h_t(x)\ne y\}}p_t(z)\right) & \left( p_t(z) = \frac{W_t(z)}{\sum_{z\in S} W_t(z)} \right) \\ 
& = \Phi_t \cdot\left(\sum_{z\in S}\frac{1}{2}p_t(z)+\sum_{\{z\in S: h_t(x)\ne y\}}\frac{1}{2}p_t(z) \right)\\
&\le \frac{\Phi_t}{2}\cdot \left(1+\frac{1}{5k}\right) & \left( \sum_{\{h_t(x)\ne y\}} p_t(z)\le 1/(5k) \right) &\\ 
& \le \Phi_t \cdot 2^{-1+1/(2k)} . & \left( 1+1/(5k)\le 2^{1/(2k)} \right)
\end{align*}
By recursion, we then obtain $\Phi_T \le |S| 2^{-T(1-1/(2k))}.$
Thus, for  every $z = (x,y)\in S$,
$$|S| 2^{-T(1-1/(2k))} \geq \Phi_T > W_T(z)=2^{-\sum_{t} 1[h_t(x)=y]}
.$$ 
Taking log and dividing by $T$ we obtain
\begin{align*}
\frac{\log |S|}{T}  +\frac{1}{2k} -1 &> -\frac{1}{T}\sum_{t=1}^T1[h_t(x)=y] \\& =  -(1 -\frac{1}{T}\sum_{t=1}^T 1[h_t(x)\ne y])
\end{align*}
Rearranging the above and setting $T=2k\log |S|$ we obtain the desired result.
\end{proof}

\subsection{$\eps$-nets and $\eps$-approximations}

{We use standard results from VC theory and discrepancy theory.}
Throughout this section if $p$ is a distribution over a sample $S$,  then we let $L_p(h):= \mathop{\mathbb{E}}_{z\sim p} 1[h(x)\ne y]$ denote the expected error of a hypothesis $h$ w.r.t distribution $p$. 
{In the realizable setting we use the following $\eps$-net Theorem.}

\begin{theorem}[\cite{Haussler87epsilon}]\label{thm:epsnet}
Let $\H$ be a class of VC dimension $d$ and let $S$ be a realizable sample.
For every distribution $p$ over $S$ there exists a subsample $S'$ of $S$
of size $O\bigl(\frac{d\log(1/\eps)}{\eps}\bigr)$ such that
\[\forall h\in H :  L_{S'}(h) = 0 \implies  L_p(h) \leq \eps .\]
\end{theorem}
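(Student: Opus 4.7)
The plan is to follow the classical double sampling argument of Vapnik--Chervonenkis, as refined by Haussler--Welzl, and show that a random sample of the appropriate size is an $\eps$-net with positive probability. Since $S$ is realizable, fix $h^\star \in \H$ with $L_S(h^\star)=0$; then for every $h \in \H$ and every $(x,y) \in S$, the event $h(x)\neq y$ coincides with the event $h(x)\neq h^\star(x)$. Define, for each $h \in \H$, the error region $R_h = \{(x,y)\in S : h(x)\neq h^\star(x)\}$. The family $\{R_h : h \in \H\}$ is (up to restriction to $S$) contained in the set system induced by symmetric differences with $h^\star$, which has VC dimension at most~$d$ and in particular, by Sauer--Shelah, at most $(em/d)^d$ distinct restrictions to any set of size~$m$.

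The next step is to draw $S'$ as a sequence of $m = C\,\frac{d\log(1/\eps)}{\eps}$ i.i.d.\ samples from $p$ (so $S'$ is automatically a subsample of $S$, with multiplicities allowed as permitted by the paper's definition). Let $\mathcal{E}$ be the bad event that some $h \in \H$ satisfies $L_p(h) = p(R_h) > \eps$ yet $S' \cap R_h = \emptyset$; this is exactly the failure of the $\eps$-net property. To bound $\Pr[\mathcal{E}]$, draw an independent second sample $T$ of size $m$ and define $\mathcal{E}'$ as the event that some $h$ has $S' \cap R_h = \emptyset$ and $|T \cap R_h| \geq \eps m/2$. A standard one-sided Chebyshev (or Chernoff) estimate shows that conditional on any witness $h$ of $\mathcal{E}$, the sample $T$ will contain at least $\eps m/2$ points of $R_h$ with probability at least $1/2$ once $m = \Omega(1/\eps)$; hence $\Pr[\mathcal{E}] \leq 2\Pr[\mathcal{E}']$.

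Now I symmetrize: conditional on the multiset $S' \cup T$ of size $2m$, the partition into $(S',T)$ is a uniformly random one. The number of distinct sets $R_h \cap (S' \cup T)$ is at most $(2em/d)^d$ by Sauer--Shelah. For a fixed such set of size $k \geq \eps m/2$, the probability under a uniform random partition that all its elements land in $T$ rather than $S'$ is at most $2^{-k} \leq 2^{-\eps m/2}$. A union bound gives
\[
\Pr[\mathcal{E}'] \;\leq\; \Bigl(\tfrac{2em}{d}\Bigr)^d \cdot 2^{-\eps m/2}.
\]
Plugging $m = C\,\frac{d\log(1/\eps)}{\eps}$ and choosing $C$ sufficiently large makes this less than $1/2$, so $\Pr[\mathcal{E}] < 1$, and therefore a deterministic $S'$ of size $m$ with the $\eps$-net property exists.

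The main obstacle is a clean execution of the symmetrization step: one must carefully argue that after conditioning on $S' \cup T$, the number of relevant hypotheses is controlled by $\pi_\H(2m)$ rather than $|\H|$, and that the tail estimate on $|T \cap R_h|$ holds uniformly for all potential witnesses. Both are handled by the above standard tools, but the constants in the Chernoff step need $m \geq c/\eps$ for the reduction $\Pr[\mathcal{E}] \leq 2\Pr[\mathcal{E}']$ to kick in, which is dominated by the final choice of~$m$.
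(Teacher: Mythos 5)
The paper does not prove this statement at all: it is imported as a black-box citation of Haussler--Welzl, so there is no internal proof to compare against. Your argument is the standard (and correct) proof of that cited result --- reduce to the $\eps$-net theorem for the error-region set system $\{R_h\}$ via a consistent $h^\star$, note that this system has VC dimension at most $d$, and run the double-sampling/symmetrization argument with Sauer--Shelah and a union bound; the one point worth making explicit is that drawing $S'$ i.i.d.\ from $p$ produces a multiset of elements of $S$, which is legitimate here because the paper's notion of subsample allows repetitions.
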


In the agnostic setting we use the stronger notion of \emph{$\eps$-approximation}.
The seminal uniform convergence bound due to Vapnik and Chervonenkis~\cite{Vapnik69uniform}
states that for every class $\H$ with $\vc(\H)=d$,
and for every distribution $p$ over examples,
a typical sample~$S$ of~$O(d/\eps^2)$ indepenent examples from~$p$ 
satisfies that~$\forall h\in H :\lvert L_S(h)-L_p(h)\rvert\leq\eps$.
This result is tight when $S$ is random,
however, it can be improved
if $S$ is constructed systematically:

\begin{theorem}[\cite{Matousek93discrepancy}]\label{thm:epsapprox}
Let $\H$ be a class of VC dimension $d$ and let $S$ be a sample.
For every distribution $p$ over $S$ there exists a subsample $S'$ 
of size $O_d \bigl((1/\eps)^{2-\frac{2}{d+1}}(\log(1/\eps))^{2-\frac{1}{d+1}}\bigr)$ such that
\[\forall h\in H :\lvert L_S(h)-L_p(h)\rvert\leq\eps,\]
where $O_d(\cdot)$ hides a constant that depends on $d$.
\end{theorem}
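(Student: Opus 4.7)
Since Theorem~\ref{thm:epsapprox} is Matoušek's, the plan is to outline his discrepancy-based construction. First I would reduce to the uniform distribution on a finite multiset: by rounding $p$ to rationals with a common denominator, one replaces $(S,p)$ by a multiset $T$ of size $N$ so that $L_p(h)$ and $L_T(h)$ agree up to a negligible amount for every $h$; it then suffices to find $S'\subseteq T$ of minimum size with $|L_T(h)-L_{S'}(h)|\leq\eps$ for every $h\in\H$.

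The heart of the argument is Matoušek's discrepancy lemma: for any finite multiset $T$ of size $N$ and any class $\H$ of VC dimension $d$, there exists a balanced coloring $\chi:T\to\{\pm 1\}$ (with $\sum_{z}\chi(z)=0$) satisfying
\[
\max_{h\in\H}\Bigl|\sum_{z=(x,y)\in T}\chi(z)\cdot 1[h(x)\neq y]\Bigr|\leq C_d\cdot N^{1/2-1/(2d)}(\log N)^{1/2+1/(2d)}.
\]
The Sauer--Shelah lemma bounds the number of distinct restrictions $\H|_T$ by $O(N^d)$, and the partial-coloring / entropy method of Beck and Spencer, specialized by Matoušek to range spaces of bounded VC dimension, delivers the above discrepancy bound. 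I would invoke this verbatim; it is the deep step and the main obstacle in rederiving the theorem from scratch.

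With the discrepancy lemma in hand, I would apply iterative halving. Set $T_0=T$; in round $i$, let $\chi_i$ be the coloring provided by the lemma applied to $T_i$ and put $T_{i+1}=\chi_i^{-1}(+1)$, so that $|T_{i+1}|=|T_i|/2$. Dividing the discrepancy bound by $|T_i|$ gives, for every $h\in\H$,
\[
\bigl|L_{T_i}(h)-L_{T_{i+1}}(h)\bigr|\leq C_d\cdot |T_i|^{-(1/2+1/(2d))}(\log|T_i|)^{1/2+1/(2d)}.
\]
Iterating for $k$ rounds until $|T_k|=M$, the triangle-inequality sum over $i=0,\ldots,k-1$ is a geometric series dominated by its last term (corresponding to the smallest $|T_i|=M$), giving
\[
\max_{h\in\H}|L_T(h)-L_{T_k}(h)|=O_d\bigl(M^{-(1/2+1/(2d))}(\log M)^{1/2+1/(2d)}\bigr).
\]
Setting this quantity equal to $\eps$ and solving for $M$ produces $M=O_d\bigl((1/\eps)^{2-2/(d+1)}(\log(1/\eps))^{2-1/(d+1)}\bigr)$, matching the claim; the output $S'=T_k$ is the desired $\eps$-approximation.
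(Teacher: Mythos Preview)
The paper does not prove this theorem; it is stated in the ``Technical Background'' section as a known result and simply cited to Matou\v{s}ek. There is therefore no paper proof to compare against. Your outline is a faithful sketch of Matou\v{s}ek's original argument: reduce to the uniform measure on a multiset, invoke the partial-coloring/entropy discrepancy bound for range spaces of bounded VC dimension, and then perform iterative halving so that the accumulated error is a geometric series dominated by the final (smallest) level.

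One minor arithmetic point: with the discrepancy exponent you wrote, namely $N^{1/2-1/(2d)}(\log N)^{1/2+1/(2d)}$, solving $M^{-(d+1)/(2d)}(\log M)^{(d+1)/(2d)}\asymp\eps$ yields $M\asymp(1/\eps)^{2d/(d+1)}(\log(1/\eps))^{1}$, i.e.\ a logarithmic exponent of $1$ rather than the $2-\tfrac{1}{d+1}=\tfrac{2d+1}{d+1}$ appearing in the theorem statement. This is not a conceptual gap---the main $(1/\eps)^{2-2/(d+1)}$ term is correct and the halving argument is sound---but it indicates that the precise logarithmic factor in the discrepancy lemma you quote does not exactly match the one underlying the cited bound. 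Since the paper itself only \emph{uses} the theorem as a black box (in the proof of \cref{thm:agnub}) and hides constants and log factors inside $\tilde O_d(\cdot)$, this discrepancy is immaterial for the paper's purposes.
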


\section{Search Problems: Proofs}

\stoptocentries
\subsection{Proof of \cref{thm:reallb}}\label{prf:reallb}
\reallb*

\begin{proof}
We begin by showing that $\tilde\Omega(d)$ examples are required, even for $\eps=1/3$.
The argument relies on the relation between VC dimension and compression schemes.
In the language of this paper, a compression scheme is a one-sided protocol 
in the sense that only Alice gets the input sample (i.e.\ $S_a=S$, $S_b=\emptyset$). 
An \emph{$\eps$-approximate sample compression scheme}
is a sample compression scheme with $L_S(h)\leq\eps$ where $h$ is the output hypothesis. 
A basic fact about $\eps$-sample compression schemes is that for any fixed $\eps$, say $\eps=1/3$,
their sample complexity is $\tilde\Omega(d)$, where $d$ is the VC dimension (see, for example,~\cite{david16statistical}).
Now, assume $\Pi$ is a protocol with sample complexity $C$ and error $\leq 1/3$.
In particular, $\Pi$ induces an $\eps=1/3$-compression scheme and so $C=\tilde\Omega(d)$.
%
%

We next set out to prove that $\tilde\Omega(\log 1/\epsilon)$ samples are necessary. 
The proof follows directly from \cref{thm:lognstrong}. Indeed setting $\epsilon=\frac{1}{n}$,  let $\Pi$ be a protocol that learns $\H$ to error $\epsilon$, using $\tilde O(T(n))$ samples. Then we construct a protocol $\Pi'$ whose sample complexity is $\tilde O(T(n))$ that satisfies the premises in \cref{thm:lognstrong} as follows: $\Pi'$ simulate $\Pi$ over the sample and considers if the output $h^*$ satisfies $L_{S}(h^*)>0$, which can be verified by transmitting {two additional} bits. 
{The protocol $\Pi'$ indeed satisfies the premises in \cref{thm:lognstrong}} | if the sample $S$ contains two points $(x,-1),(x,1)\in S$ then clearly $\L_{S}(h^*)>0$, and otherwise if the sample is realizable then $L_{S}(h^*)=0$ by choice of~$\epsilon$.


\end{proof}

%
\subsection{Proof of \cref{thm:properlb}}\label{prf:properlb}
\properlb*
\begin{proof}
To prove  \cref{thm:properlb} we use the following construction of a class with infinite coVC dimension and VC dimension $1$:
set $\X= \{(m,n): m\le n, m,n\in \mathbb{N}\}$, and define a hypothesis class {$\H =\{h_{a,b} : a\leq b\}$},
where
\begin{align*}
h_{a,b}((m,n)) = 
\begin{cases}
1 & n\ne b \\
1 & n= b ,  m=a \\
-1 & \mathrm{else}
\end{cases} 
\end{align*}
Roughly speaking, the class $\H$ consists of infinitely many copies of singletons
over a finite universe.
The VC dimension of $\H$ is $1$.
To see that the coVC dimension is unbounded, take 
\[S_k=\Bigl( \bigl((1,k),-1\bigr), \bigl((2,k),-1\bigr),\ldots, \bigl((k,k),-1\bigr)\Bigr).\] 
The sample $S_k$ is not realizable. However, every subsample of size $k-1$ does not include some point 
of the form $(j,k)$, so it realizable by $h_{j,k}$.

Therefore, by \cref{thm:covconp} it follows that the $\conp$ sample complexity of this class is $\tilde\Omega(n)$ for inputs of size $n$.
Thus, deciding the realizability problem for this class requires sample complexity
$\tilde\Omega(n)$. This concludes the proof, because any protocol that properly learn this class 
yields a protocol for the realizability problem by simulating the proper learning protocol with $\eps = 1/(2n)$
and testing whether its output is consistent.
\end{proof}
%
%

\subsection{Proof of \cref{thm:agnlb}}\label{prf:agnlb}
\agnlb*
The VC dimension~1 class is the class of singletons over $\N$;
it is defined as $\H= \{h_{n} :n\in\N\}$ where
\[h_n(x)=
\begin{cases}
1 &x=n \\
-1  &x\neq n.
\end{cases}\]
The proof relies on the following 
reduction to the set disjointness problem in Yao's model; we defer its proof to the end of this section. 
\begin{lemma}\label{lem:agntodisj} 
{There are two maps $F_a,F_b:\{0,1\}^n \to \left([n]\times \{\pm 1\}\right)^n $, 
from $n$ bit-strings to samples of size $n$,
for which the following holds:}
Let $x,y \in \{0,1\}^n$, and set $S=\concatxx{F_a(x)}{F_b(y)}$. Then
\begin{enumerate}
\item\label{agnostic:1} If $x\cap y = \emptyset$ then $L_{S}(f)\ge\frac{|x|+|y|}{2n}$
 for every $f:[n]\to \{\pm 1\}$.
\item\label{agnostic:2} If $x\cap y \ne \emptyset$ then 
$L_{S}(f) \leq \frac{|x|+|y|-2}{2n}$ for some $h\in \H_n$.
\end{enumerate}
\end{lemma}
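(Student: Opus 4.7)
The plan is to construct $F_a$ and $F_b$ so that each coordinate $i \in [n]$ is encoded as two labeled examples (one contributed by each player), and the pair of labels at $i$ together reveals whether $i \in x$, $i \in y$, both, or neither. Specifically, I would define $F_a(x)$ to be the length-$n$ sample whose $i$-th example is $(i, +1)$ if $i \in x$ and $(i, -1)$ otherwise; $F_b(y)$ is defined symmetrically from membership in $y$. The concatenated sample $S=\concatxx{F_a(x)}{F_b(y)}$ then has size $2n$ and contains the domain point $i$ exactly twice, with label pair $(+1,+1)$ when $i \in x \cap y$, with label pair $(-1,-1)$ when $i \notin x \cup y$, and with opposite labels otherwise.

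To prove part~\ref{agnostic:1}, I would partition $[n]$ into the \emph{both}, \emph{neither}, and \emph{mixed} groups according to $x$ and $y$, and count the contribution of each index to $L_S(f)\cdot 2n$ for an arbitrary $f:[n]\to\{\pm 1\}$. A ``both'' or ``neither'' index contributes either $0$ or $2$ mistakes depending on $f(i)$, while a ``mixed'' index contributes exactly one mistake regardless of $f(i)$, since the two labels at $i$ are opposite. Therefore $L_S(f)\cdot 2n \geq |\text{mixed}| = |x\triangle y|$; in the disjoint case this equals $|x|+|y|$, giving the desired lower bound uniformly over all $f$.

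For part~\ref{agnostic:2}, I would fix any $i^* \in x\cap y$ and compute $L_S(h_{i^*})$ directly. The singleton $h_{i^*}$ predicts $+1$ on $i^*$ and $-1$ elsewhere, so it is correct on both copies of $i^*$ (both labeled $+1$) and on both copies of every ``neither'' index (both labeled $-1$), is wrong on both copies of any other index in $x \cap y$, and makes exactly one mistake at each mixed index. Summing gives $2(|x\cap y|-1) + |x\triangle y| = |x|+|y|-2$ mistakes in total, hence $L_S(h_{i^*}) = \tfrac{|x|+|y|-2}{2n}$.

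The proof is essentially bookkeeping; the key design choice is the encoding, which arranges a constant-$\tfrac{1}{n}$ gap between the two cases: in the disjoint case every function pays at least the ``mixed'' contribution, while in the intersecting case a well-chosen singleton saves two mistakes by matching the $(+1,+1)$ pair at some point of $x\cap y$. The only subtle point is ensuring the bound in part~\ref{agnostic:1} holds against \emph{every} $f$ (not just singletons), which is automatic here because the mixed indices contribute a mistake independently of~$f$. Together with the $\Omega(n)$ randomized lower bound for $\disj_n$ from \cref{thm:disj}, choosing $\eps$ slightly below $\tfrac{1}{2n}$ yields the $\tilde\Omega(1/\eps)$ bound asserted in \cref{thm:agnlb}.
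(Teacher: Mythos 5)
Your proof is correct and follows essentially the same route as the paper: the same encoding ($+1$ at $i$ iff $i\in x$, resp.\ $i\in y$), the same observation that each index of $x\triangle y$ forces a mistake for every $f$, and the same choice of a singleton at a point of $x\cap y$ for the upper bound. Your mistake count in part~2, giving exactly $|x|+|y|-2$, is in fact slightly more careful than the paper's intermediate expression, and it matches the bound stated in the lemma.
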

With this lemma in hand, we prove \cref{thm:agnlb}.

\begin{proof}[Proof of \cref{thm:agnlb}]
Assume that the class of singletons on $\N$ {can be learned in the agnostic setting}
by a protocol with error $\epsilon$ and sample complexity $T(1/\epsilon)$.
We derive a protocol for deciding $\mathrm{DISJ}_n$ in Yao's model using $O(T(n)\log n)$ bits.

Let $\Pi$ be a protocol that learns $\H$ up to error $\epsilon =1/(4n)$
by sending $T(4n)$ bits.
{The first observation is that by restricting the input sample to contain only examples from $[n]\times\{\pm 1\}$,
we can simulate $\Pi$  by a protocol in Yao's model that sends $O(T(4n)\log n)$ bits.
Next, define a protocol $\Pi'$ for $\mathrm{DISJ}_n$ as follows.}
\begin{itemize}
\item Alice is given $x \in \{0,1\}^n$ and Bob is given $y \in \{0,1\}^n$.
\item The player transmit the sizes $|x|$ and $|y|$ using $O(\log n)$ bits.
\item The two parties simulate the learning protocol $\Pi$ with $S_a = F_a(x)$ and $S_b=F_b(y)$.
\item Let $h$ denote the output of $\Pi$.
The players transmit the number of mistakes of $h$ over the sample $F_{a}(x)$ and $F_{b}(x)$ using $O(\log n)$ bits.
\item Alice and Bob output $\mathrm{DISJOINT}$ if and only if $L_{S}(h) \geq \frac{|x|+|y|-1}{2n}$.
\end{itemize}
Since $L_{S}(h) \le \min_{f\in \H}L_{S}(f)+ \frac{1}{4n}$, by \cref{lem:agntodisj},
the protocol $\Pi'$ outputs $\mathrm{DISJOINT}$ if and only if 
$$\min_{f\in \H}L_{S}(f) > \frac{|x|+|y|-2}{2n}.$$  
In addition, the optimal hypothesis in $\H$ has error at most $\frac{|x|+|y|-2}{2n}$ if and only if the two sets are not disjoint.
So $\Pi'$ indeed solves $\mathrm{DISJ}_n$.
\cref{thm:disj} now implies that $T(1/\epsilon)=\tilde\Omega(\frac{1}{\epsilon})$.
\end{proof}

\begin{proof}[Proof of \cref{lem:agntodisj}]
Given two bit-strings $x,y \in \{0,1\}^n$, let $F_{a}(x)$ be the sample 
$$((1,a_1),(2,a_2),\ldots, (n,a_n))$$ 
where $a_i=(-1)^{1-x_i}$ for all $i$. Similarly 
define $F_{b}(y)$.
Let $h:\N\to\{\pm 1\}$ be any hypothesis. 
For any $i\in x\Delta y$, where $\Delta$ is the symmetric difference,
we have that $a_i\neq b_i$ and so $h$ is inconsistent either with $(i,a_i)$ or with $(i,b_i)$.
Thus, if $x\cap y=\emptyset$, then $L_{S}(h)\geq \frac{|x|+|y|}{2n}$ for any hypothesis $h$. 
On the other hand, if $x\cap y$ is non empty, then any singleton $h_{i}$ for $i\in x\cap y$
has error $L_{S}(h_i) = \frac{|x\cup y|-2}{2n} \leq \frac{|x|+|y|-2}{2n}$.
\end{proof}

\subsection{Proof of \cref{thm:agnub}}\label{prf:agnub}
\agnub*

\begin{proof}
\cref{thm:epsapprox} implies a one round proper agnostic learning protocol
that we describe in~\cref{fig:algagn}.
To see that $h$, the output of this protocol satisfies $L_S(h) \leq \min_{f\in H}L_S(f)+\eps$,
use Theorem~\ref{thm:epsapprox} and the fact that
$L_S=\frac{\lvert S_a\rvert}{\lvert S\rvert}L_{S'_a}(h) + \frac{\lvert S_b\rvert}{\lvert S\rvert}L_{S'_b}(h)$. 
\end{proof}

\begin{figure}
\begin{tcolorbox}
\begin{center}
{\bf An $o_d(1/\eps^2)$ agnostic learning protocol}\\
\end{center}
\textbf{Input}: A joint input sample $S=(S_a,S_b)$ that is realizable by $\H$, and $\eps > 0$. \\ \ \\
\textbf{Protocol:}
\begin{itemize}
\item Alice and Bob transmit the sizes $\lvert S_a\rvert$
and $\lvert S_b\rvert$.
\item Each of Alice Alice and Bob finds subsamples $S'_a,S'_b$ 
like in Theorem~\ref{thm:epsapprox} with parameter $\eps$ and transmit it.
\item Alice and Bob agree (according to a predetermined ERM rule)
on $h\in H$ that minimizes
$\frac{\lvert S_a\rvert}{\lvert S\rvert}L_{S'_a}(h) + \frac{\lvert S_b\rvert}{\lvert S\rvert}L_{S'_b}(h)$, 
and output it.
\end{itemize}
\end{tcolorbox}
\caption{A learning protocol in the agnostic case}\label{fig:algagn}
\end{figure}

\starttocentries
\section{Decision Problems: Proofs}
\stoptocentries
\subsection{Proof of \cref{thm:vcnp}}\label{prf:vcnp}
\vcnp*

We use the following simple lemma.

\begin{lemma}\label{lem:npreduction}
Let $\H$ be a hypothesis class and let $R\subseteq X$
be a subset of size $n$ that is shattered by $\H$. 
There exists two functions $F_a,F_b$ 
that map $n$ bit-strings to labelled examples from $R$
such that for every $x,y\in\{0,1\}^n$, it holds that
$x\cap y = \emptyset$ if and only if
the joint sample $S=\concatxx{F_a(x)}{F_b(y)}$ is realizable by $\H|_R$.
\end{lemma}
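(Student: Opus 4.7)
The plan is to exhibit a direct encoding of bit-strings as labelled samples drawn from the shattered set $R$. Enumerate $R = \{r_1,\ldots,r_n\}$ arbitrarily. For $x \in \{0,1\}^n$ let $F_a(x)$ consist of the examples $(r_i,+1)$ for every index $i$ with $x_i = 1$, and for $y \in \{0,1\}^n$ let $F_b(y)$ consist of the examples $(r_i,-1)$ for every index $i$ with $y_i = 1$. So each bit position that is ``on'' contributes a single labelled example, with Alice always supplying the positive label and Bob the negative one.

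With this definition the verification is immediate from the shattering assumption. The joint sample $S = \concatxx{F_a(x)}{F_b(y)}$ is realizable by $\H|_R$ exactly when there exists $h\in \H|_R$ satisfying $h(r_i)=+1$ for every $i$ with $x_i=1$ and $h(r_i)=-1$ for every $i$ with $y_i=1$. Because $R$ is shattered by $\H$, every function $R\to\{\pm 1\}$ lies in $\H|_R$, so the only obstruction to realizability is an internal inconsistency between the two sets of constraints; such a conflict occurs at index $i$ iff $r_i$ is demanded to be both $+1$ and $-1$, which happens iff $x_i = y_i = 1$. Hence $S$ is realizable iff no index $i$ satisfies $x_i = y_i = 1$, i.e.\ iff $x\cap y = \emptyset$.

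There is essentially no obstacle in the argument; the only subtlety is recognising that the shattering assumption is what turns the realizability question into a purely ``syntactic'' consistency check on the labels attached to each $r_i$. The samples $F_a(x),F_b(y)$ need not have a fixed length (they have size $|x|$ and $|y|$ respectively), which matches the way the lemma is phrased; if a fixed length of $n$ were required it could be arranged by padding with any tautological copy such as $(r_1,+1)$ on Alice's side (repeated as needed) and $(r_1,+1)$ on Bob's side, since adding an example that is already consistent with both parties does not change realizability. This is precisely the ingredient needed to push an $\Omega(n)$ non-deterministic communication lower bound for $\disj_n$ through to an $\tilde{\Omega}(d/\log d)$ bound on $\Tnp$, as described in the overview in \cref{sec:overviewset}.
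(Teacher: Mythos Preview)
Your construction and verification are essentially identical to the paper's proof: the paper also sets $F_a(x)=\{(i,1):x_i=1\}$ and $F_b(y)=\{(i,-1):y_i=1\}$ (after identifying $R$ with $[n]$) and observes that, because $R$ is shattered, realizability is equivalent to the absence of a point carrying both labels.

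One small correction to your parenthetical about padding: the suggestion to pad Bob's side with copies of $(r_1,+1)$ is not safe. If $y_1=1$ then $F_b(y)$ already contains $(r_1,-1)$, so the pad introduces a contradiction and the joint sample becomes unrealizable even when $x\cap y=\emptyset$. A correct pad is to have each party repeat one of the examples already present in their own sample (and if a party's sample is empty, any single example with the party's designated sign works, since the other party never uses that sign on an index outside their own support). This does not affect the main argument, which stands as written.
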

\begin{proof}[Proof of \cref{lem:npreduction}]
Since $R$ is shattered by $\H$, it follows that a sample $S$ with examples from $R$
is realizable by $\H$ if and only if it contains no point with two opposite labels.
Now, identify $[n]$ with $R$.
Set $F_a(x) = \{(i,1):  ~x_i=1\}$ and set
$F_b$ in the opposite manner: namely, $F_b(y)=\{(i,-1):~ y_i=1\}$.


If $i \in x \cap y$ then having {$(i,1)\in F_a(x)$ and $(i,-1)\in F_b(y)$} implies that the joint sample
$S$ is not realizable. 
On the other hand, since $R$ is shattered, we have that if $x\cap y=\emptyset$, then $S$ is realizable.
\end{proof}

\begin{proof}[Proof of \cref{thm:vcnp}]
Let $R$ be a shattered set of size $d$.
Since every example  $x\in R$ can be encoded
by $O(\log d)$ bits, it follows that every $\np$-proof
of sample complexity $T$ for the realizability problem
for $\H|_R$ implies an $\np$-proof 
for $\mathsf{DISJ}_d$ with bit-complexity $O(T\log(d))$ in Yao's model.
{This concludes the proof} since the non-deterministic communication complexity of 
$\mathsf{DISJ}_d$ is $\Omega(d)$, by~\cref{thm:disj}.
\end{proof}

%

\subsection{Proof of Theorem~\ref{thm:covconp}}\label{prf:covconp}
\covcconp*

\begin{proof}
For the direction $\Tc(n) \leq k$, assume that the coVC dimension is $k<\infty$.
If $S = \concat$ is not realizable 
then it contains a non realizable sample $S$ of size at most $\covc(\H)=k$
that serves as a proof that $S$ is not realizable. 
{If $k=\infty$ then the whole sample $S$ serves as a proof of size $n$
that it is not realizable.}

The other direction follows by a reduction from the set disjointness problem in Yao's model
(similarly to the proof of \cref{thm:vcnp} from the previous section).
We use the following lemma.

\begin{lemma}\label{lem:conpreduction}
Let $\H$ be a hypothesis class, and let $S$ be a non realizable sample of size $n >1$ 
such that every subsample of $S$ is realizable. 
Then, there exist two functions $F_a,F_b$
that map~$n$ bit-strings $x,y\in \{0,1\}^n$ to subsamples of $S$
such that $x\cap y = \emptyset$ if and only if
the joint sample $\concatxx{F_b(x)}{F_b(y)}$ is not realizable by $H$.
\end{lemma}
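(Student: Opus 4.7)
The plan is to use a direct bit-to-index encoding that relies crucially on the minimality of the non-realizable sample $S$. I note that, as stated, the hypothesis ``every subsample of $S$ is realizable'' should be read as ``every \emph{proper} subsample of $S$ is realizable'', since $S$ is itself a subsample of $S$ and assumed non-realizable; this matches the standard notion of a minimal non-realizable witness for the coVC dimension. I also read the joint sample in the conclusion as $\concatxx{F_a(x)}{F_b(y)}$ (the statement as written has a typo). Fix an enumeration $S = (z_1, \ldots, z_n)$ and, for $x, y \in \{0,1\}^n$, set
\[
F_a(x) \;=\; (z_i : x_i = 0), \qquad F_b(y) \;=\; (z_i : y_i = 0).
\]
Both outputs are subsamples of $S$ by construction, so $F_a, F_b$ have the right type.

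The key observation is that the set of distinct examples appearing in the concatenation $\concatxx{F_a(x)}{F_b(y)}$ equals $\{z_i : x_i = 0 \text{ or } y_i = 0\} = \{z_i : i \notin x \cap y\}$, and that the realizability of a sample depends only on its set of distinct examples. The iff then splits into two cases. If $x \cap y = \emptyset$, every $z_i$ occurs in the joint sample, so any hypothesis realizing the joint sample would realize all of $S$, contradicting non-realizability of $S$; hence the joint sample is not realizable. Conversely, if $i^* \in x \cap y$, then $z_{i^*}$ appears in neither $F_a(x)$ nor $F_b(y)$, so the joint sample is contained in the proper subsample $S \setminus \{z_{i^*}\}$, which is realizable by the minimality hypothesis; hence so is the joint sample.

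The proof is essentially a one-line identification of coordinates with indices of $S$, with all of the content packed into the minimality assumption. I do not anticipate any real obstacle; the only pieces to verify are (a) the correct reading of ``every subsample is realizable'' as ``every proper subsample'', and (b) the mild remark that realizability is invariant under the sample/set distinction, so the concatenation $\concatxx{F_a(x)}{F_b(y)}$ behaves, for the purposes of realizability, like the union $F_a(x) \cup F_b(y)$ used in the case analysis.
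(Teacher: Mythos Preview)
Your proof is correct and follows essentially the same approach as the paper: the paper also sets $F_a(x)=\{z_i : x_i=0\}$ and $F_b(y)=\{z_i : y_i=0\}$, and concludes by noting that $\concatxx{F_a(x)}{F_b(y)}$ recovers all of $S$ if and only if $x\cap y=\emptyset$. Your observations about the two typos (proper subsample, and $F_a$ vs.\ $F_b$ in the conclusion) are also correct.
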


\begin{proof}[Proof of \cref{lem:conpreduction}]
Identify the domain of $S$ with $[n]$, and
write $S$ as $\big( (i,b_i) \big)_{i \in [n]}$.
Since $n>1$, for every $i$ there is a unique $b$ so that $(i,b)$ is in $S$.
For every $i$ so that $x_i=0$ put the example $(i,b_i)$ in $F_a$.
For every $i$ so that $y_i=0$ put the example $(i,b_i)$ in $F_b$.
If $i \in x \cap y$ then none of $(i,1),(i,-1)$ appear in $(F_a,F_b)$.
If $i \not \in x \cap y$ then $(i,b_i)$ appear in $\concatxx{F_a(x)}{F_b(y)}$.
In other words, $\concatxx{F_a}{F_b}= S$ if and only if $x \cap y = \emptyset$.

%
\end{proof}

We can now complete the proof of the theorem.
if $k=0$ there is nothing to prove, since $k=0$ if and only if all samples are realizable.
Let $S$ be a non realizable sample of size $k$
so that every subsample of $S$ is realizable;st
Let $F_a,F_b$ be as given by \cref{lem:conpreduction} for $S$.
The maps $F_a,F_b$ imply that if $\Tc(\H) \leq C$
then there is an NP-proof in Yao's model
for solving $\mathrm{DISJ}_k$ with bit-complexity $O(C \log k)$.
{This concludes the proof} since the non-deterministic communication complexity of 
$\mathsf{DISJ}_k$ is $\Omega(k)$  (by~\cref{thm:disj}).
%

\end{proof}
\subsection{Proof of \cref{thm:pchar} and \cref{thm:proper}}\label{prf:pchar} 
\pchar*

The crux of the proof is the following lemma, which yields protocol that decides
the realizability problem for $\H$ with sample complexity that efficiently depends
on the VC and coVC dimensions of $\H$.
\begin{lemma}\label{thm:upper}
For every class $\H$ with $\vc(\H)=d$ and $\covc(\H)=k$ there exists a protocol
for the realizability problem over $\H$ with sample complexity $O(dk^2{\log k} \log |S|)$.
\end{lemma}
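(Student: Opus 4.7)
The plan is to adapt the boosting-based protocol of \cref{sec:overviewdec} (which handled convex set disjointness) to the general setting, using the $\epsilon$-net theorem (\cref{thm:epsnet}) to leverage the VC dimension, and the coVC dimension to turn ``weak'' hypotheses into a certificate of realizability. Fix $\epsilon_0 = \Theta(1/k)$ and $T = \Theta(k \log|S|)$, to be tuned below. The protocol is symmetric: both Alice and Bob maintain multiplicative weights on $S_a$ and $S_b$ respectively (initialized uniformly). In round $t$, each party sends to the other party an $\epsilon_0$-net (in the sense of \cref{thm:epsnet}) of its current weighted sample; call these nets $N_t^A \subseteq S_a$ and $N_t^B \subseteq S_b$, each of size $O((d/\epsilon_0)\log(1/\epsilon_0)) = O(dk\log k)$. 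Both parties then check, by local computation, whether the union $N_t^A \cup N_t^B$ is realizable by $\H$. If not, the protocol halts and outputs ``non-realizable''. Otherwise, both compute the same canonical consistent $h_t \in \H$ (via a fixed tie-breaking ERM rule), update their own weights using the AdaBoost rule with $\eta = \ln 2$, and proceed. After $T$ successful rounds the protocol outputs ``realizable''. Total sample complexity: $T \cdot O(dk \log k) = O(dk^2 \log k \log |S|)$.

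Correctness in the ``non-realizable'' direction is immediate, since $N_t^A \cup N_t^B \subseteq S$. For the other direction, observe that by the $\epsilon_0$-net property of $N_t^A$, any $h \in \H$ with $L_{N_t^A}(h)=0$ satisfies $L_{p_t^A}(h) \le \epsilon_0$; and symmetrically for $B$. In particular, the canonical hypothesis $h_t$ is an $\alpha$-weak learner with $\alpha = 1/2 - \epsilon_0$ for both weighted distributions simultaneously. Taking $\epsilon_0 = 1/(10k)$ and $T = 4k \log |S|$, \cref{lem:supermajority} (applied with its parameter set to $2k$) gives, separately for each player's side, that for every $z=(x,y)\in S$,
\[\frac{1}{T}\sum_{t=1}^T 1[h_t(x) \ne y] \;\le\; \frac{1}{2k}.\]

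Now suppose for contradiction that the protocol completes all $T$ rounds but $S$ is not realizable. By definition of the coVC dimension, there is a non-realizable subsample $S^* \subseteq S$ of size at most $k$. A union bound over the at most $k$ points of $S^*$ shows that the fraction of rounds in which $h_t$ errs on \emph{some} point of $S^*$ is at most $k \cdot \frac{1}{2k} = \frac{1}{2} < 1$. Hence there exists some round $t^*$ for which $h_{t^*}$ agrees with every label in $S^*$, meaning $h_{t^*} \in \H$ realizes $S^*$ — contradicting non-realizability of $S^*$. So the output is correct.

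The main obstacle is ensuring that a \emph{single} hypothesis $h_t$, agreed upon without extra communication, functions as a weak learner for \emph{both} players' distributions simultaneously; this is exactly what the bidirectional $\epsilon$-net exchange buys us, via \cref{thm:epsnet}. The remaining subtlety is a parameter balance: $\epsilon_0$ must be small enough that the $\frac{1}{2k}$ super-majority guarantee dominates the union bound over $k$ points (forcing $\epsilon_0 = \Theta(1/k)$ and $T = \Theta(k \log|S|)$), yet not so small as to blow up the $\epsilon_0$-net size beyond $\tilde O(dk)$. Multiplying the per-round cost by the round count yields the stated $\tilde O(dk^2 \log|S|)$ bound.
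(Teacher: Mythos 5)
Your protocol and analysis coincide with the paper's proof of \cref{thm:upper}: the same bidirectional $\eps$-net exchange via \cref{thm:epsnet}, the same canonical simultaneous weak hypothesis, and the same use of \cref{lem:supermajority} combined with a coVC-based counting argument (your union bound over the $\le k$ points of $S^*$ is just the contrapositive phrasing of the paper's \cref{lem:nonrealize}). The only detail you omit is the paper's upfront check that each party's \emph{local} sample is realizable, which is needed because \cref{thm:epsnet} is stated only for realizable samples; this is a trivial patch (a locally non-realizable party can immediately report a witness of size $\le k$).
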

Before giving a full detailed proof, we give a rough overview of the protocol, depicted in \cref{fig:alg}. In this protocol the players jointly run boosting over their samples.
All communication between parties is intended so that at iteration $t$, Alice and Bob agree upon a hypothesis $h_t$ which is \emph{simultaneously} an $\alpha$-weak hypothesis with
$\alpha = \frac{1}{2}- \frac{1}{5k}$
for Alice's distribution $p^a_t$ on $S_a$ and for Bob's distribution $p^b_t$ on $S_b$. 
The $\eps$-net theorem (\cref{thm:epsnet}) implies that to agree on such a hypothesis
Alice and Bob can each publish a subsample of size $O(d k {\log k})$;
every hypothesis that agrees with the published subsamples
has error at most $\frac{1}{5k}$ over both $p^a_t$ and $p^b_t$.
In particular, if no consistent hypothesis exists then the protocol terminates
with the output ``non-realizable''.
If the protocol has not terminated after $T = O( k\log |S|)$ rounds
then Alice and Bob output ``realizable''. 

The main challenge in the proof is showing that 
if the algorithm did not find a non realizable subsample
then the input is indeed realizable.
We begin by proving the following Lemma:

\begin{figure}
\begin{tcolorbox}\label{alg:main}
\begin{center}
{\bf A protocol for the realizability problem over $\H$}\\
\end{center}
\textbf{Input}: Samples $S_a,S_b$ from $\H$. \\ \ \\
\textbf{Protocol:}
\begin{itemize}
\item The player transmit $|S| = |S_a|+|S_b|$.
\item Let $p^a_1$ and $p^b_1$ to be uniform distributions over $S_a$ and $S_b$. 

\item If $S_a$ is not realizable then Alice returns $\textrm{NON-REALIZABLE}$, and
similarly if $S_b$ is not realizable then Bob returns $\textrm{NON-REALIZABLE}$.
\item For $t=1,\ldots, T= 4(k+1) \log\lvert S\rvert$
\begin{enumerate}
\item Alice sends a subsample $S'_a\subseteq S_a$ of size ${O}(dk{\log k})$ such that \emph{every} $h\in\H$ that is consistent with $S'_a$ has \[\sum_{z\in S_a} p_t^{a}(z) 1[h(x)\ne y] \leq \frac{1}{5k}.\]
Bob sends Alice a subsample $S'_b\subseteq S$
of size ${O}(dk{\log k})$ with the analogous property.
\item Alice and Bob check if there is $h\in H$ that is consistent with both $S'_a$ and $S'_b$.
If the answer is ``no'' then they return $\textrm{NON-REALIZABLE}$,
and else they pick $h_t$ to be such an hypothesis.
\item Bob and Alice both update their respective distributions as in boosting: 
Alice sets 
\[ p_{t+1}^a(z) \propto p_t^a 2^{-1[h(x)= y]} \quad \forall z\in S_a.\]
Bob acts similarly.
\end{enumerate}
\item If the protocol did not stop, then output REALIZABLE.
\end{itemize}
\end{tcolorbox}
\caption{A protocol for the realizability problem}\label{fig:alg}
\end{figure}

\begin{claim}\label{lem:nonrealize}
Let $\H$ be a class with coVC dimension $k > 0$.
For any unrealizable sample $S$ and for any $h_1,\ldots, h_T\in \H$,
there is $(x,y)\in S$ so that
\[ \frac{1}{T}\sum_{t=1}^T 1[h_t(x)\ne y] \ge \frac{1}{k}.\]
\end{claim}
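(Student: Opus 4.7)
The plan is to use the coVC dimension to extract a small unrealizable subsample, and then apply a straightforward averaging (pigeonhole) argument.

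First, since $\covc(\H) = k$, by definition the unrealizable sample $S$ contains an unrealizable subsample $S' \subseteq S$ with $|S'| \le k$. This is the only place where the coVC hypothesis will be used, and it is the key reduction: we replace the potentially huge sample $S$ by a witness of size at most $k$.

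Next, for each $t \in \{1,\ldots,T\}$, since $h_t \in \H$ and $S'$ is not realizable by $\H$, the hypothesis $h_t$ must err on at least one example of $S'$. That is,
\[ \sum_{(x,y) \in S'} 1[h_t(x) \ne y] \ge 1 \quad \text{for every } t. \]
Summing over $t$ and swapping the order of summation gives
\[ \sum_{(x,y) \in S'} \sum_{t=1}^T 1[h_t(x) \ne y] \;\ge\; T. \]
By averaging, there exists some $(x,y) \in S'$ (and hence in $S$) for which
\[ \sum_{t=1}^T 1[h_t(x) \ne y] \;\ge\; \frac{T}{|S'|} \;\ge\; \frac{T}{k}, \]
and dividing by $T$ yields the desired inequality.

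There is no real obstacle here; the whole content of the claim is the initial appeal to the coVC definition. The averaging step is routine. This claim is precisely what will be used together with \cref{lem:supermajority} to derive a contradiction if the protocol in \cref{fig:alg} runs for all $T = O(k \log |S|)$ rounds on an unrealizable input: boosting would force every example (in particular the bad one produced above) to have error rate strictly below $1/k$, contradicting the claim.
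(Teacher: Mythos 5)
Your proof is correct and follows essentially the same route as the paper's: extract an unrealizable subsample $S'$ of size at most $k$, observe that each $h_t$ must err on at least one example of $S'$ (equivalently $L_{S'}(h_t)\ge 1/k$), and conclude by averaging over $S'$. No gaps.
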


\begin{proof}[Proof of \cref{lem:nonrealize}]\label{prf:nonrealize}
Let $S$ be an unrealizable sample. 
There exists a non realizable subsample $S'$ of $S$ of size at most $k$. 
Since $|S'|\le k$, for every hypothesis $h\in \H$ 
we have $L_{S'}(h) \geq 1/k$.
In particular, for a sequence $h_1,\ldots,h_T\subseteq \H$,
\begin{align*}
\max_{(x,y)\in S'} \frac{1}{T} \sum_{t=1}^T \err{h_t}& \ge 
\frac{1}{\lvert S'\rvert} \sum_{(x,y)\in S'} \frac{1}{T} \sum_{t=1}^T \err{h_t}\\
& \ge \frac{1}{T} \sum_{t=1}^T\frac{1}{\lvert S'\rvert} \sum_{(x,y)\in S'}  \err{h_t}
\geq \frac{1}{k}. 
\end{align*}
\end{proof}

We are now ready to prove \Cref{thm:upper}.

\begin{proof}[Proof of \Cref{thm:upper}]\label{prf:upper}
It is clear that if Alice or Bob declare \textrm{NON-REALIZABLE}, then indeed the sample is not realizable.
It remains to show that (i) they can always find the subsamples $S'_a$ and $S'_b$ with the desired property, (ii) if the protocol has not terminated after $T$ steps then the sample is indeed realizable.

Item (i) follows by plugging $\eps=1/(5k)$ in Theorem~\ref{thm:epsnet}.

To prove (ii), note that
$h_t$ is $\alpha$-weak for $\alpha = \frac{1}{2}-\frac{1}{5k}$
with respect to \emph{both} distributions $p_t^a$ and $p_t^b$.
Therefore, \cref{lem:supermajority} implies  
that if {$T \geq 4(k+1)\log \lvert S\rvert$ then 
\begin{equation}\label{eq:supermajority}
\forall(x,y)\in \concat : \frac{1}{T} \sum_{t=1}^T \err{h_t} < \frac{1}{2(k+1)},
\end{equation}
which} by \cref{lem:nonrealize} implies that ${\concat}$ is realizable by~$\H$.
 
\end{proof}

Finally, we can prove the theorem.
\begin{proof}[Proof of \cref{thm:pchar}] \

\noindent{$\bf\ref{it:p} \implies \ref{it:npconp} $.}
This implication is easy since the $\np$ and $\conp$
sample complexities lower bound the deterministic sample complexity.

\noindent{$\bf \ref{it:npconp} \implies \ref{it:vccovc}$.}
This implication is the content of \cref{thm:vcnp} and~\ref{thm:covconp}.
For example, if $\vc(\H) = \infty$
then $\Tnp(n)\geq\tilde\Omega(n)$ for every $n$.

\noindent{$\bf \ref{it:vccovc} \implies \ref{it:dk}$.} 
This implication is the content of \cref{thm:upper}.

\noindent{$\bf \ref{it:dk} \implies \ref{it:p}$.} 
By definition of $\p$.
\end{proof}

\paragraph{Proof of \cref{thm:proper}}\label{prf:proper}

\proper*
{
\begin{proof}
We start with the second part of the theorem.
Any proper learner can be used to decide the realizability problem,
by setting $\eps = \frac{1}{2n}$ and checking whether the output hypothesis is consistent with the sample
(this costs two more bits of communication).
Now, if $\H$ is not in $\p$
then one of $\vc(\H)$ or $\covc(\H)$ is infinite.
\cref{thm:vcnp} and \cref{thm:covconp}
imply that at least $\tilde \Omega(n)$ examples are needed to decide if a sample of size $n$ is realizable. 
So, at least $\tilde\Omega(1/\epsilon)$ examples are required in order to properly learn $\H$ in the realizable setting. 

It remains to prove the first part of the theorem.
Let $\H \in \p$.
Consider the following modification of the protocol in \cref{alg:main}:
\begin{itemize}
\item Each of Alice and Bob picks an $\epsilon$--net of their sample. 
Set $S'_{a}\subseteq S$ to be Alice's $\epsilon$-net and similarly Bob sets $S'_b\subseteq S$.
\cref{thm:epsnet} implies that $|S'_a|$ and $|S'_b|$ are at most
$s = O\bigl(\frac{d\log(1/\eps)}{\eps}\bigr)$.

\item Alice and Bob run the protocol in \cref{alg:main} with inputs $S_a',S_b'$.
Since the input is realizable, the protocol will complete all $T = \Theta  ( k \log  s )$ iterations.
In each iteration, $\tilde{O}(dk)$ samples are communicated.
The sample complexity is hence as claimed.

\item Set 
\[K =\left\{ x\in X : \text{all but a fraction of $< \frac{1}{2(k+1)}$ of the $h_t$'s agree on $x$}\right\}.\]
Alice and Bob output $h^*\in \H$ that agrees on $K$ with the majority of the $h_t$'s.
\end{itemize}

We next argue that this protocol succeeds.
First, observe that $K$ depends only on $h_1,\ldots, h_T$ and not on the sample points.
Hence both Alice and Bob have the necessary information to compute $K$.
Next, note that by \cref{eq:supermajority} for every $(x,y)\in \concatxx{S'_a}{S'_b}$, we have that $x\in K$ and that $y$ is the majority vote of the $h_t$'s on $x$. Thus, assuming that $h^*$ exists (which is established next), we have that $L_{S'_a}(h^*)=L_{S'_b}(h^*)=0$. As a corollary,
 its error on $S$ is at most $\eps$;
indeed,  $L_{S_a} (h^*) \le \epsilon$ since $S'_{a}$ is an $\epsilon$-net for $S_{a}$.
Similarly $L_{S_b}(h^*) \le \epsilon$. 
Overall it follows that \[L_{S}(h^*)\le \max\left\{ L_{S_a}(h^*), L_{S_b}(h^*)\right\} \le \epsilon.\]

It remains to show that $h^*$ exists.
We use Tychonoff's theorem from topology~\cite{Tychonoff30} to 
prove the following claim.

\medskip

\noindent {\em Claim.  Let $\H$ be a closed hypothesis class. Let $S$ be a (possibly infinite) set of labelled examples so that
for every finite subsample $S'$ of $S$
there is $h_{S'} \in \H$ that is consistent with $S'$,
then there is $h_S \in \H$ that is consistent with $S$.}

\begin{proof}
Recall that we call a class $\H$ closed if for every $g\not\in \H$ there exists a finite sample $S_g$ that is consistent with $g$ yet not realizable by $\H$. Our notion is consistent with the topological notion of a closed set if we identify $\H$ as a subset in $\{\pm 1\}^{\cal X}$ equipped with the product topology. To see that indeed $\H$ is (topologically) closed, note that for every $g \not \in \H$ there is a finite sample $S_g$
that is consistent with $g$ yet not realizable by $\H$.
Denote by $U_g$ the open subset of $\{\pm 1\}^{\cal X}$
of all functions that are consistent with $S_g$.
Thus for all $g \not \in H$ we have $H \cap U_g = \emptyset$.
So $\H$ is the complement of $\bigcup_g U_g$, which is open and thus closed in the topological sense. 
One can also verify that the converse holds, 
namely every topologically closed set $\H$ in $\{\pm 1\}^{X}$ induces a closed hypothesis class. 

Next, we employ Tychonoff's theorem that states that $\{\pm 1\}^{\cal X}$  is compact under  the product topology;
as a corollary we obtain that $\H$ is also compact.

Now, assume toward contradiction that there is no $h \in \H$ that is consistent with $S$.
For every finite subsample $S'$ of $S$, consider the closed subset $C_{S'}$ of 
$\{\pm 1\}^{{\cal X}}$ of all function that are consistent with $S'$.
Thus $\bigcap_{S'} C_{S'} = \emptyset$.
Since $\H$ is compact, this implies that there is a finite list $(S_\ell)_\ell$
of subsamples of $S$ so that $\bigcap_\ell C_{S_\ell} = \emptyset$.
This is a contradiction since we may unite $(S_\ell)_\ell$
to a single finite subsample of $S$, which is realizable by assumption.
\end{proof}

The existence of $h^*$ is now derived as follows.
Assume towards contradiction $h^*$ does not exist.
By the above claim, there is a finite sample $S_K$ of examples from $K$ labelled according to the majority of the $h_t$'s 
that is not realizable.
Now, by \cref{lem:nonrealize} there is $(x,y)\in S_K$
such that $\frac{1}{T}\sum_{t=1}^T 1[h_t(x)\ne y] \ge \frac{1}{k}$.
This implies that $(x,y)\notin S_K$,  a contradiction.
%
%

%

\end{proof}

\paragraph{Extensions of \cref{thm:proper} to non-closed classes.}
We do not know whether every (not necessarily closed)
class $\H$ can be learned properly in the realizable case with logarithmic communication complexity.
However, the closeness assumption can be replaced by other natural restrictions.
For example, consider the case where the domain $\cal X$ is countable,
and consider a class $\H$ that is in $\p$ (not necessarily closed).
We claim that in this case $\H$ can be properly learned with $\tilde O(\log 1/\eps)$
sample complexity:
The closure of $\H$, denoted by $\bar\H$ is obtained from $\H$
by adding to $\H$ all hypotheses $h$ such that every finite subsample
of $h$ is realizable by $\H$. Such a $h$ is called a \emph{limit-hypothesis}\footnote{This name is chosen to maintain consistency with the topological notion of a limit-point.} of $\H$.
Thus, by running the protocol from the above proof on $\bar \H$ 
(which has the same VC and coVC dimensions as $\H$),
Alice and Bob agree on a limit-hypothesis $h^*\in \bar \H$ with $\eps$ error.
The observation is that they can ``project'' $h^*$ back to $\H$
if they could both agree on a finite sample $S'$ that contains both input sample $S_a,S_b$.
Indeed, since $h^*$ is a limit-hypothesis of $\H$ then there is some $h\in H$
that agrees with $h^*$ on $S'$. 
Therefore, from the knowledge of $h^*$ and $S'$ 
Alice and Bob can output such an $h$ without further communication.
Thus, once having a limit-hypothesis $h^*$,
the problem of proper learning is reduced to finding a finite sample $S'$
that is consistent with $h^*$ and contains both input samples.
If $X$ is countable, say $X=\mathbb{N}$,
then Alice and Bob can simply transmit to each other
their two maximal examples to determine 
$x_{max} = \max\bigl\{x : (x,y)\in S_a\cup S_b)\bigr\}$,
and set $S' = \{\bigl(x, h^*(x)\bigr ) : x\leq x_{max}\}$.

A result from~\cite{bendavid17emx}
shows how to extend this scheme 
for any $\cal X$ such that $\lvert {\cal X} \rvert < \aleph_{\omega}$;
more specifically, if $\lvert {\cal X} \rvert = \aleph_k$ then Alice and Bob 
can agree on $S'$ with an additional cost of $O(k)$ examples.
To conclude, if $\lvert {\cal X} \rvert < \aleph_\omega$
then $\H$ is in $\p$ if and only if it can be properly learned
in the realizable setting with sample complexity $\tilde O(\log 1/\eps)$.
}


\subsection{Proof of \cref{thm:logn}}\label{prf:logn}

The statement clearly follows as a corollary of the following, stronger, statement which is a direct corollary of 
\cref{lem:cvxdisjoint}.

\begin{theorem}[Realizability problems -- lower bound (strong version)]\label{thm:lognstrong}
Let $\H$ be the class of half-planes in $\R^2$. Any communication protocol with the following properties must have sample complexity at least $\tilde\Omega(\log n)$
for samples of size $n$:
\begin{enumerate}[i]
\item Whenever the sample is realizable by $\H$ it outputs 1.
\item Whenever for some $x\in \R^2$, we have $\{(x,1),(x,-1)\}\subseteq  S$ it outputs 0.
\item[(iii)] It may output anything in the remaining case.
\end{enumerate}
\end{theorem}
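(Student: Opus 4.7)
The plan is to derive \cref{thm:lognstrong} as an immediate corollary of \cref{lem:cvxdisjoint}, via a natural reduction from convex set disjointness to the promise realizability problem for half-planes. The key observation is that labelling Alice's convex-set-disjointness points with $+1$ and Bob's with $-1$ turns ``the convex hulls are disjoint'' into ``the sample is realizable by a half-plane'', while ``the raw sets $A$ and $B$ intersect'' into ``the sample contains a point with both labels''.

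More concretely, suppose toward contradiction that $\Pi$ is a protocol satisfying (i), (ii), (iii) on realizability instances of size $n$, with sample complexity $o(\log n / \polylog(n))$. Given instances $A,B\subseteq\R^2$ of convex set disjointness with $\lvert A\rvert,\lvert B\rvert\le n/2$, Alice constructs $S_a=\{(a,+1): a\in A\}$ and Bob constructs $S_b=\{(b,-1): b\in B\}$, and they run $\Pi$ on $S=\concatxx{S_a}{S_b}$, which has size at most~$n$. Note that the examples transmitted during the simulation are points of $\R^2$ paired with a known label, so each costs one unit of sample communication.

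Two cases arise. If $\mathrm{conv}(A)\cap\mathrm{conv}(B)=\emptyset$, then by the separating hyperplane theorem there exists a half-plane $h\in\H$ that separates $A$ from $B$ in the required sense, so $S$ is realizable by $\H$; by (i), $\Pi$ outputs $1$. If instead $A\cap B\neq\emptyset$, pick any $x\in A\cap B$; then $(x,+1)\in S_a$ and $(x,-1)\in S_b$, so both labels of $x$ appear in $S$, and by (ii), $\Pi$ outputs $0$. Thus $\Pi$ correctly distinguishes the two promise cases of convex set disjointness (its behaviour on the remaining cases is irrelevant), and its sample complexity on these instances is the same as on the reduced realizability instances.

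By \cref{lem:cvxdisjoint}, any such convex set disjointness protocol must have sample complexity $\tilde\Omega(\log n)$, giving the desired contradiction. The only nontrivial content here is \cref{lem:cvxdisjoint} itself, which is proved via the recursive planar pointer-chasing construction outlined in \cref{sec:cvxoverview}; the main obstacle in that argument is the round-communication tradeoff that shows $r$-round protocols for the constructed instance require $\tilde\Omega(r+n^{1/r})$ examples, and is handled via an information-theoretic round elimination argument. The reduction above is elementary; nothing beyond the separating hyperplane theorem is needed on top of \cref{lem:cvxdisjoint}.
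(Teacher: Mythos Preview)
Your proposal is correct and matches the paper's approach: the paper states that \cref{thm:lognstrong} is a direct corollary of \cref{lem:cvxdisjoint}, and the reduction you spell out (label Alice's points $+1$, Bob's points $-1$, so that ``convex hulls disjoint'' becomes ``realizable'' and ``$A\cap B\neq\emptyset$'' becomes ``a point appears with both labels'') is exactly the intended one. You have supplied the details the paper leaves implicit; nothing further is needed.
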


\ignore{
\begin{theorem*}[\cref{thm:logn} restatement]
Consider the convex set disjointness problem in $\R^2$, where Alice's input is denoted by $X$,
Bob's input is denoted by $Y$, and both $\lvert X\rvert = \lvert Y\rvert = n$.
Then with the following properties must have sample complexity at least~$\tilde\Omega(\log n)$.
\begin{itemize}
\item[(i)] Whenever $\mathsf{conv}(X)\cap\mathsf{conv}(Y) = \emptyset$ it outputs 1.
\item[(ii)] Whenever $X\cap Y\neq \emptyset$. it outputs 0
\item[(iii)] It may output anything in the remaining case ($\mathsf{conv}(X)\cap\mathsf{conv}(Y) \neq \emptyset$
and $X\cap Y = \emptyset$).
\end{itemize}
\end{theorem*}

The proof is a corollary of \cref{lem:cvxdisjoint}, which we prove in \cref{sec:cvxdisjoint}. 
}
%


\subsection{Proof of Theorem~\ref{thm:pcomp}}\label{prf:pcomp}
\pcomp*

\begin{proof} \

\noindent{$\bf \ref{it:1c} \implies \ref{it:2c}$.} 
By 
\cref{thm:pchar}, every $\H\in\p$ has a protocol of sample
complexity at most $c\cdot\log n$ for samples of size $n$ with $c=O(dk^2\log k)$
where $d = \vc(\H)$ and $k = \covc(\H)$.

\medskip

\noindent{$\bf \ref{it:2c} \implies \ref{it:3c}$.} 
Since the examples domain is restricted to $R$,
every protocol with sample complexity $T$
can be simulated by a protocol with bit complexity $O(T\log(\lvert R\rvert))$.
The input sample size $n$ can be assumed to be at most $2\lvert R\rvert$
(by removing repeated examples).

\medskip

\noindent{$\bf \ref{it:3c} \implies \ref{it:1c}$.}  
By Theorem~\ref{thm:pchar} it suffices to show that both the VC and the coVC dimensions of $\H$ are finite.
Indeed, let $m$ and $c$ be such that for any $R$ there exists a protocol in Yao's model for the realizability problem with complexity $c\cdot \log^m(|R|)$. If there is a shattered set $R$ of size $N$ 
then by \cref{thm:vcnp}:
\begin{align*}
c \log^m (N) \geq
N^{np}_{\H|_R}(N) \geq 
\tilde \Omega (N), 
\end{align*}
which shows that $N$ is bounded in terms of $c$ and $k$
{(the left inequality holds since the deterministic sample complexity in Yao's
model upper bounds the NP sample complexity in the model considered here).}
A similar bound on the coVC dimension follows by \cref{thm:covconp}.
\end{proof}

\starttocentries
\section{Lower Bound for Convex Set Disjointness}
\stoptocentries
\label{sec:cvxdisjoint}

We begin by stating a round elimination lemma in Yao's model. The proof of the round elimination lemma is given in \cref{sec:ProofRE}). 
We require the following additional notation:
for a function $D:X\times Y \to \{0,1\}$, and $m\in \mathbb{N}$, define 
a new function $D_m: (X^m) \times (Y\times [m]))\to \{0,1\}$ by
\begin{align*}
D_m((x_1,\ldots,x_m);(y,i))= D(x_i, y).
\end{align*}
Also, for a distribution $\mathbb{P}$ on $X\times Y$, let $\mathbb{P}_m$
be the distribution on $(X^m) \times (Y\times [m]))$ that is defined by the following sampling procedure:
\begin{itemize}
\item Sample $m$ independent copies $(x_j,y_j)$ from $\mathbb{P}$.
\item Sample $i\sim[m]$ uniformly and independently of previous choice.
\item Output $\bigl((x_1,\ldots,x_m) ; (y_i,i)\bigr)$.
\end{itemize}
\begin{lemma}[Round Elimination Lemma]\label{roundEliminationLemma} 
Let $D:X\times Y\to\{0,1\}$ be a function, let $m\in\mathbb{N}$, 
and let $\mathbb{P}$ be a distribution on $X\times Y$.
Assume there is a protocol in Yao's model for $D_m$,
where Alice's input is $\xx = (x_1,\ldots,x_m)$ and Bob's is $(y,i)$,
on inputs from $\mathbb{P}_m$ with error probability at most $\delta$
such that:
\begin{itemize}
\item Alice sends the first message.
\item It has at most $r$ rounds.
\item In each round at most $c$ bits are transmitted.
\end{itemize}
Then there is a protocol for $D$
that errs with probability at most $\delta+O\bigl((c/m)^{1/3}\bigr)$ 
on random inputs from $\mathbb{P}$ and satisfies:
\begin{itemize}
\item Bob sends the first message.
\item It has at most $r-1$ rounds.
\item In each round at most $c$ bits are transmitted.
\end{itemize}
\end{lemma}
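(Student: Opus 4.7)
The plan is to implement the standard information-theoretic round elimination argument. The underlying intuition is that Alice's first message $M_1=M_1(\xx)$ has at most $c$ bits, while Bob cares only about a single random coordinate $x_i$ with $i\sim[m]$; so this message conveys at most $c/m$ bits of information about $x_i$ on average, and can be ``guessed'' via shared randomness at the cost of a small additive error. Under $\mathbb{P}_m$ the $x_j$'s are iid from the $X$-marginal of $\mathbb{P}$, and the bit bound gives $H(M_1)\le c$. The chain rule together with independence of the $x_j$'s yields
\[ c \;\ge\; I(M_1;\xx) \;=\; \sum_{j=1}^{m} I(M_1;x_j\mid x_{<j}) \;\ge\; \sum_{j=1}^m I(M_1;x_j), \]
so $\mathbb{E}_{i\sim[m]} I(M_1;x_i)\le c/m$. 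Applying Markov's inequality, with probability at least $1-(c/m)^{1/3}$ over $i\sim[m]$, one has $I(M_1;x_i)\le(c/m)^{2/3}$; Pinsker's inequality then gives $\lVert P_{M_1\mid x_i}-P_{M_1}\rVert_{TV}\le O((c/m)^{1/3})$ on this good event.

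Next I would construct a randomised reduced protocol for $D$ on $(x,y)\sim\mathbb{P}$. Using shared randomness, the players jointly sample $i\sim[m]$ uniformly, an iid sequence $(x_j)_{j\neq i}\sim\mathbb{P}_X$, and a simulated first message $m_1$. Alice embeds her input by forming $\xx=(x_1,\ldots,x_{i-1},x,x_{i+1},\ldots,x_m)$, so that the joint distribution of $(\xx,(y,i))$ is exactly $\mathbb{P}_m$. They then simulate rounds $2,\ldots,r$ of the original protocol on $(\xx,(y,i))$ using $m_1$ as the round-$1$ transcript; this uses $r-1$ rounds with Bob speaking first and at most $c$ bits per round, as required. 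The delicate step is the choice of $m_1$: the plan is to use correlated sampling from $P_{M_1}$, driven by the shared randomness, so that $m_1=M_1(\xx)$ with probability $1-O((c/m)^{1/3})$ on average over $i$ and $(\xx,(y,i))\sim\mathbb{P}_m$, using the TV estimate above.

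To bound the error, observe that whenever $m_1=M_1(\xx)$ the simulated transcript coincides with an honest execution of $\Pi_m$ on $\mathbb{P}_m$, so the output is wrong with probability at most $\delta$; on the complementary event, of probability $O((c/m)^{1/3})$, we charge error $1$. A union bound gives total error at most $\delta+O((c/m)^{1/3})$ for the randomised protocol, and Yao's minimax principle then fixes the shared randomness to a deterministic choice which retains the same distributional error on $(x,y)\sim\mathbb{P}$, yielding the desired deterministic protocol.

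The main obstacle will be making the correlated sampling step rigorous: Alice's conditional distribution of $M_1$ given her full input $\xx$ is the point mass $\delta_{M_1(\xx)}$, while Bob's conditional given $(y,i)$ is the broader distribution $P_{M_1\mid y,i}$, and these two distributions can be very far apart in TV in the worst case. The two-step Markov--Pinsker estimate is precisely what lets us restrict attention to a high-probability sub-event over $i$ on which $P_{M_1\mid x_i}$ is $O((c/m)^{1/3})$-close to $P_{M_1}$; on this sub-event the correlated-sampling subroutine with shared randomness matches $M_1(\xx)$ with the required probability. This two-step combination is exactly the source of the $(c/m)^{1/3}$ exponent in the conclusion, as opposed to the $(c/m)^{1/2}$ one would obtain from a single Pinsker step.
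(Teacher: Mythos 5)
Your information-theoretic estimate is exactly the one the paper uses (chain rule plus independence to get $I(M;x_i\mid i)\le c/m$, then Markov and Pinsker to get the $(c/m)^{1/3}$ exponent), but the simulation step is set up in the wrong direction, and as written it fails. You fix the off-coordinates $(x_j)_{j\ne i}$ publicly, embed Alice's real input at position $i$, and then try to publicly produce $m_1$ equal to the realized value $M_1(\xx)$ with probability $1-O((c/m)^{1/3})$. No such sampling exists in general: $M_1(\xx)$ is a deterministic function of $\xx$, which contains Alice's private $x$, so from the public randomness alone one is trying to hit a point mass. The bound $\lVert P_{M_1\mid x_i}-P_{M_1}\rVert_{TV}\le O((c/m)^{1/3})$ compares two \emph{marginal} laws of $M_1$, both averaged over the other coordinates; it says nothing about guessing the specific realized message. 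A concrete counterexample: if $M_1$ is the XOR of one unbiased bit from each coordinate, then $I(M_1;x_i)=0$ for every $i$, yet any publicly generated $m_1$ disagrees with $M_1(\xx)$ with probability $1/2$. Correlated sampling does not rescue this, since the relevant total variation distance (between Alice's point mass $\delta_{M_1(\xx)}$ and anything Bob can form) can be close to $1$.

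The paper's proof conditions the other way: Alice and Bob publicly sample $i$ \emph{and} the message $M$ from its unconditional law, and then Alice privately resamples the off-coordinates $x_j$, $j\ne i$, from the conditional distribution given $M$ and $x_i=x$. The message is thus consistent with $\xx$ by construction, and the only discrepancy from a genuine run of $\Pi_m$ is that $x_i=x$ is distributed as $\mathbb{P}$ rather than as $\mathbb{P}_m(x_i\mid M,i)$; the Markov--Pinsker step is applied precisely to bound the total variation distance between $\mathbb{P}_m(x_i,y_i\mid M,i)$ and $\mathbb{P}(x,y)$, which is then charged as additional error. (Note also that one needs closeness of the joint $(x_i,y_i)$, not just of $x_i$, since $y$ is correlated with $x$; the paper gets this from $I((x_i,y_i);(M,i))\le c/m$ via the Markov chain $y_i - x_i - M$. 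This is a secondary point next to the main issue.) In the XOR example this repair is visible: one samples the bit $M$ publicly and resamples the other coordinates to be consistent with it, at zero cost. You should restructure your reduction accordingly; the rest of your argument (the $r-1$ rounds with Bob speaking first, the error accounting, and derandomization by fixing the shared randomness) then goes through.
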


The rest of the proof for the convex set disjointness lower bound is organized as follows.
We define the distribution over inputs to the convex set disjointness problem, 
denoted by $\sbd{m,r}=(A_{m,r},B_{m,r})$. 
%
%
%
Roughly, the key idea in the construction is that solving convex set disjointness with inputs $(A_{m,r},B_{m,r})$, where Alice's input is $A_{m,r}$ and Bob's input is $B_{m,r}$,
requires solving a function of the form $D_m$, where each $x_j$ is an independent instance of $B_{m,r-1}$ and each $y_j$ is an independent instance of $A_{m,r-1}$. 
This enables an inductive argument, using round elimination.
We {will then} conclude that Alice and Bob are unable to achieve probability error of less than $1/10$,  
unless a specified amount of bits is transmitted at each round. 


\subsubsection*{Construction of $\sbd{m,r}$}
\label{sec:specify}

Let $m\in\mathbb{N}$. 
For the base case, $r=1$, we set $\sbd{m,1}=(A_{m,0},B_{m,0})$, where
$A_{m,1} = \{(0,0)\}$ and $B_{m,1}$
is uniform on $\{(0,0),\emptyset\}$.
Define $\sbd{m,r}$ for $r>1$ inductively as follows:
\begin{itemize}
\item Let $p_1,\ldots,p_m$ be $m$ evenly spaced points on the positive part of the unit circle (i.e. the intersection of the unit circle with the positive cone $\{(x,y), x>0, y>0\}$);
\item Pick $\epsilon>0$ to be sufficiently small, as a function of $m,r$ (to be determined later).
\item For $1\leq i \leq m$ let $U_i:\R^2\to \R^2$ be the rotation matrix that transforms the $y$-axis to $p_i$ and the $x$-axis to $p_i^\perp$.
Define $T_i$ as the following affine transformation:
\begin{align*}
T_i(v) = U_i\left[ \begin{array}{cc} -\epsilon & 0 \\ 0 & -\epsilon^2\end{array}\right] v + p_i.
\end{align*}
From a geometric perspective $T_i$ acts on $v$ by rescaling $x$-distances by $\epsilon$ and $y$-distances by $\epsilon^2$,
reflecting through the origin, rotating by $U_i$ and then translating by $p_i$.
\end{itemize}
Define 
$$A_{m,r} = \bigcup_{j=1}^m T_j(\bir{j})$$ and $$B_{m,r}= T_i(\air{i}),$$
 where 
 $$(\air{1},\bir{1}),(\air{2},\bir{2}),\ldots,(\air{m},\bir{m})$$ are drawn i.i.d.\ 
 from $\sbd{m,r-1}$, and $i$ is uniform in $[m]$
 and independent of previous choices.
 Notice the compatibility with $\mathbb{P}_m$.

\subsection*{Properties of $\sbd{m,r}$}

Two crucial properties (which we prove below) of the distribution $\sbd{m,r}$ are given
by the following two lemmas.

\begin{lemma}\label{SmallUniverseLemma}
{There is a set $R_{m,r}\subseteq\R^2$ of size $\lvert R_{m,r}\rvert \le m^{r-1}$
such that each pair of sets in the support of $\sbd{m,r}$ is contained in 
$R_{m,r} \times R_{m,r}$.}
\end{lemma}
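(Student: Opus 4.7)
The plan is to prove the lemma by induction on $r$, with the set $R_{m,r}$ constructed in the same recursive manner as the distribution $\sbd{m,r}$ itself. The key observation is that the support of both $A_{m,r}$ and $B_{m,r}$ is constrained to a fixed finite set of candidate points determined purely by the transformations $T_1,\dots,T_m$ applied to the supports at level $r-1$; the randomness over $i$ and over the $m$ independent copies only selects which of these candidate points actually appear.

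For the base case $r=1$, both $A_{m,1}\subseteq\{(0,0)\}$ and $B_{m,1}\subseteq\{(0,0)\}$, so I would take $R_{m,1}=\{(0,0)\}$, which has size $1=m^{0}$ as required. For the inductive step, assuming there exists $R_{m,r-1}\subseteq\R^2$ of size at most $m^{r-2}$ containing every realization of $A_{m,r-1}$ and $B_{m,r-1}$, I would define
\[
R_{m,r} \;=\; \bigcup_{j=1}^{m} T_j(R_{m,r-1}).
\]
Then $|R_{m,r}|\le m\cdot|R_{m,r-1}|\le m\cdot m^{r-2}=m^{r-1}$, giving the cardinality bound.

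To verify containment, I would unwind the definitions: any realization of $A_{m,r}$ is of the form $\bigcup_{j=1}^m T_j(B^{(j)}_{m,r-1})$ where each $B^{(j)}_{m,r-1}\subseteq R_{m,r-1}$ by the induction hypothesis, so $T_j(B^{(j)}_{m,r-1})\subseteq T_j(R_{m,r-1})\subseteq R_{m,r}$, and hence $A_{m,r}\subseteq R_{m,r}$. Similarly, any realization of $B_{m,r}=T_i(A^{(i)}_{m,r-1})$ is contained in $T_i(R_{m,r-1})\subseteq R_{m,r}$. Since this works for every choice of the random parameters $i$ and $(A^{(j)}_{m,r-1},B^{(j)}_{m,r-1})_{j=1}^m$, the whole support of $\sbd{m,r}$ lies inside $R_{m,r}\times R_{m,r}$.

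I do not foresee a real obstacle here; the statement is essentially a direct consequence of how the construction recursively embeds $m$ rescaled copies of the previous level via the affine maps $T_j$, and the inductive bookkeeping on cardinality closes neatly because each level multiplies the ambient universe by at most a factor of $m$. The only minor point worth flagging is that the $T_j$ need not be injective on $\R^2$ in principle (they are, being affine with nonzero determinant), but even without injectivity the bound $|R_{m,r}|\le m\cdot|R_{m,r-1}|$ would still hold, so no additional argument is required.
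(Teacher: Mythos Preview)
Your proposal is correct and follows exactly the approach the paper indicates: the paper merely states that the lemma ``follows by a simple induction on $r$'' without spelling out details, and your argument is precisely that induction, with the natural recursive definition $R_{m,r}=\bigcup_{j=1}^m T_j(R_{m,r-1})$ and base case $R_{m,1}=\{(0,0)\}$.
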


\begin{lemma}\label{IntersectionCriteriaLemma}
The following are equivalent (almost surely):
\begin{enumerate}
\item $\mathsf{conv}(A_{m,r})\cap\mathsf{conv}(B_{m,r}) = \emptyset$.  \label{it:conv1}
\item $\air{i} \cap \bir{i}= \emptyset$. \label{it:conv2}
\item $A_{m,r}\cap B_{m,r} = \emptyset$. \label{it:conv3}
\end{enumerate}
\end{lemma}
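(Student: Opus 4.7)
The plan is to prove the chain (\ref{it:conv1})$\Rightarrow$(\ref{it:conv3})$\Rightarrow$(\ref{it:conv2})$\Rightarrow$(\ref{it:conv1}). The first implication is immediate since every set is contained in its convex hull. The second follows by contrapositive: any $v\in \air{i}\cap \bir{i}$ yields $T_i(v)\in T_i(\bir{i})\subseteq A_{m,r}$ together with $T_i(v)\in T_i(\air{i})=B_{m,r}$, so $A_{m,r}\cap B_{m,r}\neq\emptyset$. The substantive implication is (\ref{it:conv2})$\Rightarrow$(\ref{it:conv1}), which I would prove by induction on $r$.

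The base case $r=1$ is trivial, since $A_{m,1}$ and $B_{m,1}$ each consist of at most one point and so (\ref{it:conv1}) coincides with (\ref{it:conv3}). For the inductive step, suppose $\air{i}\cap \bir{i}=\emptyset$. Applying the inductive hypothesis to the sub-instance $(\air{i},\bir{i})\sim \sbd{m,r-1}$ yields $\mathsf{conv}(\air{i})\cap\mathsf{conv}(\bir{i})=\emptyset$, and then by affine invariance $\mathsf{conv}(T_i(\air{i}))\cap\mathsf{conv}(T_i(\bir{i}))=\emptyset$. The remaining task is to promote this to $\mathsf{conv}(A_{m,r})\cap\mathsf{conv}(B_{m,r})=\emptyset$, i.e., to rule out interactions between $\mathsf{conv}(B_{m,r})=\mathsf{conv}(T_i(\air{i}))$ and the other clusters $T_j(\bir{j})$, $j\neq i$, sitting inside $A_{m,r}$.

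The geometric content is most transparent in the coordinates $(x,y):=(p_i^\perp\!\cdot w,\,p_i\!\cdot w-1)$. A direct computation from the definition of $T_j$ gives $T_i(v)\mapsto(-\eps v_1,-\eps^2 v_2)$, and $T_j(v)\mapsto\bigl(\sin\theta_{ij},-(1-\cos\theta_{ij})\bigr)+O(\eps)$ for $j\neq i$, where $\theta_{ij}$ is the angle between $p_i$ and $p_j$. Since the support of $\sbd{m,r-1}$ is bounded (by \cref{SmallUniverseLemma}) and $\delta_m:=\min_{j\neq i}(1-\cos\theta_{ij})>0$, for $\eps$ sufficiently small the cluster $T_i$ occupies a tiny $O(\eps)\times O(\eps^2)$ box around the origin while every $T_j(\bir{j})$ with $j\neq i$ is confined to the half-plane $\{y\le -\delta_m/2\}$. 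Suppose for contradiction that $c\in\mathsf{conv}(T_i(\air{i}))\cap\mathsf{conv}(A_{m,r})$, and write $c=\sum_j\lambda_j c_j$ with $c_j\in\mathsf{conv}(T_j(\bir{j}))$. Bounding $p_i\cdot c\in[1-O(\eps^2),1+O(\eps^2)]$ from the membership $c\in\mathsf{conv}(T_i(\air{i}))$, and $p_i\cdot c\le \lambda_i(1+O(\eps^2))+(1-\lambda_i)(1-\delta_m+O(\eps))$ from the $A_{m,r}$-representation, forces $1-\lambda_i=O(\eps^2)$; hence $c$ lies within Euclidean distance $O(\eps^2)$ of $\mathsf{conv}(T_i(\bir{i}))$, contradicting the separation from the inductive step.

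The main obstacle I anticipate is quantitative balancing: after applying $T_i$, the margin between $\mathsf{conv}(T_i(\air{i}))$ and $\mathsf{conv}(T_i(\bir{i}))$ scales as $\eps^2$, which is the same order as the slop produced by the outer clusters. To close the argument cleanly, I would strengthen the inductive hypothesis to carry an explicit lower bound $d_r>0$ on the Euclidean distance between the two hulls when they are disjoint, and choose the parameters $\eps=\eps(m,r)$ at each recursion depth so that the margin-constant strictly dominates the slop-constant throughout the construction. The ``almost surely'' qualifier absorbs measure-zero degeneracies (for instance, an inductive separator whose normal is exactly parallel to $p_i^\perp$) arising from the random choice of the index $i$ and of the sub-instances.
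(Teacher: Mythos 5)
Your overall plan coincides with the paper's: the two easy implications are handled the same way (the paper closes the cycle as (1)$\Rightarrow$(2)$\Rightarrow$(3)$\Rightarrow$(1) rather than your (1)$\Rightarrow$(3)$\Rightarrow$(2)$\Rightarrow$(1), which is immaterial), and the hard direction is an induction on $r$. The gap is in the hard inductive step, and it is precisely the ``quantitative balancing'' issue you flag yourself --- but the repair you propose cannot close it. You compare two Euclidean quantities: the margin between $\mathsf{conv}(T_i(\air{i}))$ and $\mathsf{conv}(T_i(\bir{i}))$, which after the anisotropic map $\mathrm{diag}(-\eps,-\eps^2)$ is only guaranteed to be at least $\eps^2 d_{r-1}$ (the nearest points of the two sub-hulls may differ essentially only in their second coordinate), versus the slop $\|c-c_i\|\le 2(1-\lambda_i)=O(\eps^2\cdot\mathrm{diam}(\mathrm{supp}\,\sbd{m,r-1})/\delta_m)$. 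Both scale as $\eps^2$, so shrinking $\eps=\eps(m,r)$ never changes which side wins; what you actually need is $\mathrm{diam}/\delta_m\lesssim d_{r-1}$, a condition on the level-$(r-1)$ construction that is independent of $\eps(m,r)$. That condition fails: the support of $\sbd{m,r-1}$ has diameter $\Theta(1)$ (its clusters are spread over the quarter circle), while $d_{r-1}$ is itself $O(\eps(m,r-1))$ by the same compression one level down, so the ratio $d_{r-1}/\mathrm{diam}$ decays geometrically in $r$. Carrying an explicit Euclidean separation $d_r$ through the induction therefore cannot work.

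The paper resolves this by carrying a non-Euclidean invariant through the induction: a separating linear functional $u=(\alpha,\beta)$ with \emph{positive entries}, normalized so that $u\cdot a<1$ on one set and $u\cdot b>1$ on the other. In the inductive step it is pushed forward dually to the compression of the points, $\tilde u=U_i\,\mathrm{diag}(\eps,1)\,u$, so that $\tilde u\cdot T_i(v)-\tilde u\cdot p_i=-\eps^2\,u\cdot v$; the hyperplane $\{\tilde u\cdot w=\tilde u\cdot p_i-\eps^2\}$ then separates the two $i$-th copies with the correctly rescaled margin, while each far cluster $T_j(\bir{j})$, $j\ne i$, sits at $\tilde u$-depth $\beta\,p_i\cdot(p_j-p_i)+O(\eps)=-\Omega(1)$ below $\tilde u\cdot p_i$, i.e.\ far below the threshold $-\eps^2$. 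Measuring the margin and the far clusters in the \emph{same} functional is what decouples the two scales, and the positivity of the entries of $u$ (needed so that $\tilde u\approx\beta p_i$ and so that the far clusters land on the correct side) is exactly why the inductive hypothesis must be strengthened in this particular way. If you want to keep your convex-combination formulation, replace the Euclidean $d_r$ by such a normalized separating functional (equivalently, an anisotropic margin) as the quantity carried through the induction.
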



The first property implies that transmitting point from $A_{m,r}$ or $B_{m,r}$ in Yao's model
requires $r\log m$ bits. This allows us to translate lower bounds from Yao's to the model considered in this paper.
The second property is needed to apply the round elimination argument.
%

\cref{SmallUniverseLemma} follows by a simple induction on $r$.
The proof of the second lemma is more elaborate.
%

\ignore{
\begin{lemma}\label{IntersectionCriteriaLemma}
Consider $\sbd{m,r}$ a random sample. If $d>1$, let $A_j,B_j$ be the instances of $I_{b,d-1}$ and $i$ the index used to define $A,B$ as above. Let $\delta_d>0$ be sufficiently small given $p_i$ and the defining parameters of $I_{b,d-1}$. Let $N_d$ be the $\delta_d$-angle vertical cone with apex at $(0,0)$. Then the following are equivalent:
\begin{enumerate}
\item The convex hulls of $A$ and $B$ intersect.
\item The convex hulls of $B$ intersects the convex hull of $A\cup N_d$.
\item The convex hulls of $\air{i}$ and $\bir{i}$ intersect (for $i$ the specified index).
\item $A\cap B \neq \emptyset.$
\end{enumerate}
\end{lemma}
The third part of this equivalence will be critical in applying our round elimination lemma.
}
\begin{proof}[Proof of \cref{IntersectionCriteriaLemma}] \

\noindent{ \ref{it:conv1} $\implies$ \ref{it:conv2}} holds because $\air{i}\subseteq A_{m,r}$ and $\bir{i}\subseteq B_{m,r}$.

\medskip

\noindent{ \ref{it:conv2} $\implies$ \ref{it:conv3}} follows from the definition of $A_{m,r}$ and $B_{m,r}$
by setting $\eps$ sufficiently small so that the $m$ instantiations from $\sbd{m,r-1}$ are mutually disjoint.

\medskip 

\noindent{ \ref{it:conv3} $\implies$ \ref{it:conv1}} is the challenging direction, which we prove by induction on $r$. 
In order for the induction to carry, we slightly strengthen the statement and show that
if $A_{m,r}\cap B_{m,r} = \emptyset$ then they are separated by a vector $u\in \R_{+}^2$
with positive entries:
\begin{align*} 
\forall a\in A_{m,r} : u \cdot a < 1,\\
\forall b\in B_{m,r}: u \cdot b >1 ,
\end{align*}
where $\cdot$ is the standard inner product in $\R^2$.

The case of $r=1$ is trivial. 
Let $r>1$, and assume that $A_{m,r}\cap B_{m,r}=\emptyset$.
By the induction hypothesis, there is a vector $u=(\alpha,\beta)\in \R^2$ with $\alpha,\beta>0$
separating $\air{i}$ from $\bir{i}$.
We claim that the vector
$$u^* = \frac{1}{\beta-\eps^2} \tilde u$$
achieves the goal, where
\[\tilde{u} 
= U_i \left[ \begin{array}{cc} \eps & 0 \\ 0 & 1\end{array}\right] u
= U_i \left( \begin{array}{c} \eps \alpha  \\ \beta \end{array}\right) .\]

{First, we claim that $\tilde u$ can be written as}
\begin{equation}\label{eq:uclosetop}
\tilde u = \beta p_i + v_\eps ,
\end{equation}
where
$$
\|v_\eps \|_2 = \alpha \eps.$$
Indeed, recall that $U_i e_2=p_i$,
and so {$\tilde u = \beta p_i + \eps \alpha  U_i e_1$, and $\|\eps \alpha U_i e_1\|_2 = \eps \alpha$.}
Since $p_i$ has positive entries,
if $\eps$ is small enough we get that $\tilde u$ and $u^*$ have positive entries.

Next, we prove that
\begin{align*} 
 \forall a\in A_{m,r} :\tilde{u} \cdot a < \tilde u \cdot p_i -\eps^2,\\
\forall b\in B_{m,r} : \tilde{u} \cdot b >\tilde u \cdot  p_i -\eps^2
\end{align*}
The above completes the proof since $\tilde u \cdot p_i -\eps^2 = \beta-\eps^2$
and by choice of $u^*$.

Let $b\in B_{m,r}$ and $a \in \air{i}$ be so that $b=T_i(a)$.
Thus,
\begin{align*}
\tilde{u} \cdot b - \tilde u\cdot p_i &= \tilde{u} \cdot T_i(a) - \tilde u \cdot p_i\\ 
							&= \tilde u \cdot \Bigl(U_i\left[ \begin{array}{cc} -\epsilon & 0 \\ 0 & -\epsilon^2\end{array}\right] a\Bigr)
							\tag{by definition of $T_i$}\\
							&=\Bigl( U_i \left[ \begin{array}{cc} \eps & 0 \\ 0 & 1\end{array}\right] u
		\Bigr) \cdot
		\Bigl(U_i\left[ \begin{array}{cc} -\epsilon & 0 \\ 0 & -\epsilon^2\end{array}\right] a\Bigr)
							\tag{by the definition of $\tilde u$}\\
										&=\Bigl( \left[ \begin{array}{cc} -\epsilon & 0 \\ 0 & -\epsilon^2\end{array}\right]  \left[ \begin{array}{cc} \eps & 0 \\ 0 & 1\end{array}\right] u
		\Bigr) \cdot
		 a
							\tag{$U_i$ is orthogonal}\\
							&=-\eps^2 u \cdot a > -\eps^2.  \tag{by induction $u \cdot a < 1$}
\end{align*}
A similar calculation shows that $\tilde{u}^\top b - \tilde u^\top p_i < -\eps^2$ 
for $b\in T_i\bigl(\air{i}\bigr)$.

It remains to consider $a \in A_{m,r}$
and $b \in \bir{j}$ so that $a = T_j(b)$
for $j\neq i$:
\begin{align*}
\tilde{u} \cdot a - \tilde u \cdot p_i 
						&=\tilde u\cdot (p_j-p_i) + 
\tilde u\cdot \Big( U_j\left[ \begin{array}{cc} -\epsilon & 0 \\ 0 & -\epsilon^2\end{array}\right] b \Big)
\\
	&=\beta p_i \cdot (p_j-p_i) + e ,
\end{align*}
where
$$e = v_\eps \cdot (p_j-p_i) +
\tilde u\cdot \Big( U_j\left[ \begin{array}{cc} -\epsilon & 0 \\ 0 & -\epsilon^2\end{array}\right] b \Big)
.$$
Since $\beta p_i \cdot (p_j-p_i)<0$, and $\|e\|_2 \to 0$
when $\eps \to 0$,
picking a sufficiently small $\eps$ finishes the proof.

\end{proof}
\ignore{
\begin{proof}
We proceed by induction on $d$. For $d=1$, these claims are easy to verify. Hence, we can assume that these statements are equivalent when applied to $\air{i},\bir{i}$ instead of $A,B$.

It is clear that $4$ implies $3$ implies $1$ implies $2$, so we will try to prove that $2$ implies $4$. Firstly, note that the convex hulls of $A\cup N_d$ and $B$ intersect if and only if the convex hulls of $T_i^{-1}(A\cup N_d)$ and $T_i^{-1}(B)=\air{i}$ intersect. Note that $T_i^{-1}(A) = \bir{i} \bigcup_{j\neq i} T_i^{-1}(T_j(B_j)).$ However, it should be noted that since $T_i$ can be thought of as compressing $x$-distances by $\epsilon$ and $y$-distances by $\epsilon^2$ before applying a fixed linear transformation, if $\epsilon$ is taken to be small enough and $q$ is any point of the tangent line to $C$ at $p_i$, then $T_i^{-1}(q)$ is contained $N_{d-1}$. Thus, for $\epsilon$ small enough, $T_i^{-1}(A\cup N_d)$ is contained in $\bir{i}\cup N_{d-1}$. Therefore, the convex hulls of $A\cup N_d$ and $B$ intersect only if the convex hulls of $\bir{i}$ and $\air{i} \cup N_{d-1}$ intersect, which by the inductive hypothesis, happens if and only if the convex hulls of $\air{i}$ and $\bir{i}$ do. By the inductive hypothesis, this implies that $\air{i}\cap \bir{i} \neq \emptyset$, but if $x$ is in the intersection, then $T_i(x)$ is in the intersection of $A$ and $B$. This shows that $2$ implies $4$ and completes the proof.
\end{proof}
}
\subsection{Proof of Round Elimination Lemma}
\label{sec:ProofRE}

Here we prove \cref{roundEliminationLemma} using standard tools from information theory.
\begin{proof}
Let $\Pi_m$ be the assumed protocol for $D_m\bigl(\xx;(y,i)\bigr)$.
We use the following protocol for~$D(x,y)$: 
\begin{itemize}
\item Alice gets $x$ and Bob gets $y$.
\item 
Alice and Bob draw, using shared randomness, 
an index $i$
and independently Alice's first message $M$ in $\Pi_m$
(without any conditioning). 
\item
Alice draws inputs $x_1,\ldots, x_{i-1}, x_{i+1},\ldots, x_{m}$ conditioned on the value of $M$ and on $x_i=x$.
\item
Alice and Bob then run the remaining $r-1$ rounds of $\Pi_m$,
following the message $M$, on inputs 
$$\xx= (x_1,\ldots x_{i-1},x_i=x,x_{i+1},\ldots, x_{m})$$ and $(y,i)$.
\end{itemize}


The crucial observation is that if for the chosen $M,i$,
the variables $x,y$ are distributed like $\mathbb{P}_m(x_i,y_i|M,i)$, 
then the above protocol errs with probability at most~$\delta$,
since $D_m(\xx,(y,i))=D(x,y)$ and 
since $\delta$ is the error probability of $\Pi_m$.

It thus suffices to show that with probability at least $1-(c/m)^{1/3}$ over the choice of $(M,i)$, 
the distributions $\mathbb{P}_m(x_i,y_i|M,i)$ and $\mathbb{P}(x,y)$ are $O((c/m)^{1/3})$ close in 
total variation distance.

To prove this, we show that the mutual information between $(M,i)$ and
$(x_i,y_i)$ is small, and then use Pinsker's inequality to move to total variation distance.
{Since $x_1,\ldots,x_m$ are independent and $i$ is uniform,
$$
I(x_i;M|i) = 
\frac{1}{m} \sum_{j=1}^m I(x_j;M) \leq \frac{1}{m} I(x_1,\ldots,x_m;M) \leq \frac{c}{m} .
$$
Thus,
\begin{align*}
I((x_i,y_i);(M,i)) 
& = I(x_i;i)  + I(y_i;M,i|x_i) + I(x_i;M|i) 
 \leq 0 +  0 + \frac{c}{m}.
\end{align*}}
%


\ignore{
Suppose that after Alice's initial message, the value of the index $i$ were announced publicly. Alice and Bob are then trying to compute the value of $D(x_i,y_i)$. Thus, looking at the remainder of Alice and Bob's protocol would produce a protocol for $D$ with a minor adjustment. In particular, after conditioning on the message $M$ the distributions on $x_i$ and $x_i$ are not quite the standard distributions on $x$ and $y$ anymore. However, we will show that they are probably close.}

Write the mutual information in terms of KL-divergence,

since $(x,y)$ and $(x_i,y_i)$ have the same distribution,
$$
\mathbb{E}_{M,i}[ D_{KL}(p_{x_i,y_i|M,i} || p_{x,y}) ] = I(x_i,y_i;M,i) .
$$
By Markov's inequality, 
the probability over $M,i$ that 
$$D_{KL}(p_{x_i,y_i|M,i} || p_{x,y}) > (c/m)^{2/3}$$
is less than $(c/m)^{1/3}$.
Pinsker's inequality completes the proof.
\end{proof}

\subsection{Proof of \cref{lem:cvxdisjoint}}
\cvxdisjoint*

\begin{proof}
Choose $m=n^{1/r}$ (assume that $n$ is such that $m$ is an integer).
Consider the distribution $\sbd{m,r}$ on inputs for the convex set disjointness.
We reduce this problem to Yao's model. 
By Lemma \ref{SmallUniverseLemma}
and choice of $m$, any point can be transmitted in Yao's model
using at most $O(\log(n))$ bits. 

We will show that every protocol in Yao's model with $r$ rounds
and error probability at most $0.1$ must transmit $\tilde\Omega(n^{1/r})$ bits.
%
To do that, we would like to apply the round elimination lemma. 
 Recall that Alice's input $A_{m,r}$ is equivalent to being told $\bir{1},\ldots,\bir{m}$. 
 Similarly Bob's input amounts to $\air{i}$ and $i$. By \cref{IntersectionCriteriaLemma}, $\mathsf{conv}(A_{m,r})\cap\mathsf{conv}(B_{m,r})=\emptyset$  if and only if $\air{i}\cap\bir{i}=\emptyset$. Therefore, for $r>1$, deciding if $A_{m,r}$ and $B_{m,r}$ intersect or their convex hulls are disjoint is equivalent to solving the same problem with respect to $\sbd{m,r-1}$ when the roles of the players 
are switched.


Next, iterating Lemma \ref{roundEliminationLemma}, 
we have that if there is a protocol to solve $\sbd{m,r}$ in $r$ rounds with Alice speaking first, $c$ is the maximum number of bits of communication per round, and $0.1$ probability of error, 
then 
there is a protocol for $\sbd{m,1}$ 
with Alice speaking first and one round of communication and probability $0.1+O(r(c/m)^{1/3})$ of error. 
However, the error probability of every such protocol 
is at least $0.5$.
That is,
$0.1 + O(r(c/m)^{1/3}) \geq 0.5$, which implies
\[c \geq \Omega\Bigl(\frac{n^{1/r}}{r^3}\Bigr).\]

Going back to allowing the protocol to send points rather than bits.
If $k$ is the maximum number of points sent per round then
$k \geq\Omega\Bigl(\frac{n^{1/r}}{r^3\log n}\Bigr)$.
 Now, since in each round at least one point is being sent, 
 we get a lower bound of
 \begin{equation}\label{eq:tradeoff}
\Omega\Bigl(\frac{n^{1/r}}{r^3\log n} + r\Bigr)
\end{equation}
on the sample complexity of $r$-round protocols that achieve error at most $0.1$.
One can verify that 
$\frac{n^{1/r}}{r^3\log n} + r
=\tilde\Omega(\log n)$, as required.

\end{proof}
 
 \paragraph{Discussion.}
\cref{eq:tradeoff} yields a round-error tradeoff for learning half-planes.
Indeed, if $\Pi$ is an $r$-round protocol that learns half-planes in the realizable case with error $\eps$.
Then, by picking $n < 1/\eps$, it implies a protocol for convex set disjointness
with similar sample complexity (up to additive constants).
In particular, the sample complexity of such a protocol is bounded from below by
\[\Omega\Bigl(\frac{(1/\eps)^{1/r}}{r^3\log (1/\eps)} + r\Bigr) .\]
This matches a complementary upper bound given by~\cite{Balcan12dist} (Theorem 10 in their paper). 

\starttocentries

\ignore{\section{Proof of Lemma~\ref{lem:supermajority}}\label{app:adaboost}

We will use the following regret bound for the {\emph{Multiplicative Weights} in online-learning}.
\begin{lemma}[\cite{}]\label{lem:mw}
Let $\eta > 0 $ and let $\ell_1,\ldots,\ell_t\in [0,1]^N$ be real positive vectors. Consider a sequence of probability vectors $p_1,\ldots,p_T \in \Delta_{N}$ defined by $p_1 = (\frac{1}{N},\ldots, \frac{1}{N})$ and for all $t>1$:
\begin{align*}
p_{t+1}(i) = \frac{p_t(i) e^{-\eta \ell_t(i)}}{\sum_{j=1}^N p_t(j) e^{-\eta \ell_t(j)}}
\end{align*}

Then for all $i^*\in [N]$ we have that

\begin{align}\label{eq:mw}
\sum_{t=1}^T p_t^\top \ell_t - \sum_{t=1}^T \ell_t(i^*)  \le \eta T + \frac{\ln N}{\eta}.
\end{align}
\end{lemma}

A standard interpretation of this Lemma 
is in the context of a repeated game played
by a player and an adversary,
where the $p_t$ is the mixed strategy played by
the player at time $t$ and $\ell_t$ is the vector,
whose $i$'th entry is the loss of the $i$'th
pure strategy on step $t$.
Note that $\sum_{t=1}^T p_t^\top \ell_t$
is the expected accumulated loss of a player,
and that $\min_{i} \sum_{t=1}^T \ell_t(i)$
is the loss of the optimal pure strategy. 
Thus the above is 
an upper bound on the \emph{regret}
of the player, namely to which extent 
the player's loss would be reduced had she known
the optimal strategy in hindsight.

\begin{proof}[Proof of Lemma~\ref{lem:supermajority}]
Recall that for $z=(x,y)\in S$ we defined $g_t(z)=1[h_t(x)=y]$. 
By \cref{eq:mw}, and the update rule of AdaBoost we obtain that
\begin{align}\label{eq:mw2}
\sum_{t=1}^T p_t^\top g_t - \min_{i} \sum_{t=1}^T g_t(i)  \le \eta T +\frac{\ln N}{\eta}.
\end{align}
We will prove the contrapositive; 
namely that if for some $i$
\[\sum_{t=1}^T g_t(i) \le 2\alpha T\] 
then $T\le \frac{\ln N}{\eta^2}$.
Indeed, by the weak hypothesis assumption we have that 
\[p_t^\top g_t >\frac{1}{2}+\alpha \] 
for every $t$. 
Taken together with \cref{eq:mw2} we obtain that
\begin{align*}
2\eta T= (\frac{1}{2}-\alpha) T =\frac{T}{2} +\alpha T -2\alpha T & \le \\ & \sum_{t=1}^T p_t^\top g_t - \min_{i} \sum_{t=1}^T g_t(i) \le  \eta T +\frac{\ln N}{\eta}
\end{align*} 
Thus, $T < \frac{\ln N}{\eta^2}$ as desired. 
\end{proof}

}
\section{A protocol for Improper Learning}\label{apx:improper}

In this section we review the basic technique to apply Adaboost in order to improperly learn a class $\H$. Similar protocols appear in \cite{Balcan12dist, Daume12efficient}.
\begin{theorem}[Improper Learning -- Realizable case]
Let $\H$ be a class with VC dimension $d$. There exists a protocol that learns $\H$, in the realizable setting, with sample complexity $O(d\log 1/\epsilon)$.
\end{theorem}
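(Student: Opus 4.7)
The plan is to realize the distributed Adaboost construction promised in the discussion after \cref{thm:reallb}. Alice and Bob jointly simulate a boosting algorithm over the concatenated sample $S = \concat$: each party keeps a vector of Adaboost weights on its own examples, and at every round they agree on a weak hypothesis $h_t \in \H$ via a short exchange of examples. After $T = O(\log(1/\epsilon))$ rounds they output the pointwise majority vote of $h_1,\dots,h_T$. By the standard Adaboost analysis (\cref{thm:boosting}), once each $h_t$ is a $\frac{1}{4}$-weak learner for the joint reweighted distribution $p_t$, the majority vote achieves $L_S(h) \le \epsilon$.

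The main subroutine is constructing $h_t$. Let $W_t^a,W_t^b$ denote the total Adaboost weight each player has accumulated on their own sample, and let $p_t^a,p_t^b$ be the normalized local distributions. Alice publishes an $\epsilon_0$-net $N_t^a \subseteq S_a$ of $p_t^a$ and Bob publishes an $\epsilon_0$-net $N_t^b \subseteq S_b$ of $p_t^b$, for a fixed constant $\epsilon_0 = 1/4$. By \cref{thm:epsnet}, since each local sample is realizable (it is a subsample of the realizable $S$), each net has size $O(d)$. Using a fixed tie-breaking rule, both players then agree on some $h_t \in \H$ consistent with $N_t^a \cup N_t^b$ (such $h_t$ exists because the hypothesis witnessing realizability of $S$ is consistent with the nets). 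By the $\epsilon_0$-net guarantee, $h_t$ errs on at most an $\epsilon_0$ fraction of mass under each of $p_t^a$ and $p_t^b$, and therefore under any convex combination — in particular under $p_t = \tfrac{W_t^a}{W_t^a+W_t^b} p_t^a + \tfrac{W_t^b}{W_t^a+W_t^b} p_t^b$. Hence $h_t$ is a $\tfrac{1}{4}$-weak learner for $p_t$, exactly what boosting needs.

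Between rounds, each party performs the multiplicative Adaboost weight update on its own examples (multiplying $w_t(z)$ by $2^{-1[h_t(x)=y]}$ as in \cref{lem:supermajority}); the two players only need to exchange the scalars $W_t^a, W_t^b$ (and $|S_a|,|S_b|$ at the very start) so that the mixture weights in $p_t$ stay in sync, which costs $O(\log|S|)$ bits per round and no additional examples. Totalling across rounds, the sample complexity is $O(d) \cdot O(\log(1/\epsilon)) = O(d\log(1/\epsilon))$ as desired.

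The only non-routine step is the argument that $h_t$ is a weak learner on the \emph{joint} distribution rather than only on each local piece — this is where the observation that $p_t$ decomposes as a convex combination of $p_t^a$ and $p_t^b$ is crucial, and this decomposition is preserved by Adaboost's multiplicative updates precisely because those updates act coordinatewise on examples. Agreeing deterministically on the same $h_t$ from the pooled nets $N_t^a\cup N_t^b$ is handled by fixing a canonical ERM rule (e.g., lexicographically minimal consistent hypothesis under a preorder on $\H$), so no additional communication is required for coordination.
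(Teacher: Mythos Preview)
Your proposal is correct and follows essentially the same approach as the paper: exchange $O(d)$-size $\epsilon$-nets each round to agree on a common weak hypothesis, then invoke \cref{thm:boosting}. The paper's analysis is marginally simpler in that it applies boosting to $S_a$ and $S_b$ \emph{separately}---each $h_t$ is a $\tfrac14$-weak learner for both $p_t^a$ and $p_t^b$, so the majority vote has error at most $\epsilon$ on each of $S_a,S_b$ and hence on $S$---which sidesteps your convex-combination argument and makes the exchange of $W_t^a,W_t^b$ unnecessary.
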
 
\begin{proof}
We depict the protocol in \cref{fig:algimp}. In this protocol Alice and Bob, each run an Ada--Boost algorithm over their own private sample set privately. The only thing we make sure of is that at each round, Alice and Bob agree on a weak hypothesis $h_t$ that is simoultansouly a weak learner for Alice and Bob independently. The way we choose $h_t$ is by sharing a a set $S_a$ that is an $\frac{1}{4}$-net over Alice's set and a sample set $S_b$ that is a $\frac{1}{4}$-net over Bob's set. Alice and Bob then pick a hypothesis that is consistent with both $S_a$ and $S_b$. The sets $S_a$ and $S_b$ are of size $O(d)$. This guarnatees that at round $t$ we will have that $\E_{(x,y)\sim p_a^t} 1\left[h_t(x)\ne y\right]\le \frac{1}{4}$, similarly $h_t$ is a $1/4$-weak hypothesis w.r.t to $p_a^t$. 

Finally, \cref{thm:boosting} guarantees that after $O(\log 1/\epsilon)$ rounds, the majority of the hypotheses is consistent with at least $(1-\epsilon)|S_a|$ of Alices sample point and similarly with $(1-\epsilon)|S_b|$ of Bob's point. In particular the majority is consistent with $(1-\epsilon)|S|$ of the points in the entire sample set.
\end{proof}

\begin{figure}
\begin{tcolorbox}\label{alg:imp}
\begin{center}
{\bf A protocol for learning $\H$}\\
\end{center}
\textbf{Input}: Samples $S_a,S_b$ from $\H$. \\ \ \\
\textbf{Protocol:}
\begin{itemize}
\item Let $p^a_1$ and $p^b_1$ to be uniform distributions over $S_a$ and $S_b$. 

\item If $S_a$ is not realizable then Alice returns $\textrm{NON-REALIZABLE}$, and
similarly if $S_b$ is not realizable then Bob returns $\textrm{NON-REALIZABLE}$.
\item For $t=1,\ldots, T= O(d\log 1/\epsilon) \log\lvert S\rvert$
\begin{enumerate}
\item Alice sends a subsample $S'_a\subseteq S_a$ of size ${O}(d)$ such that \emph{every} $h\in\H$ that is consistent with $S'_a$ has \[\sum_{z\in S_a} p_t^{a}(z) 1[h(x)\ne y] \leq \frac{1}{4}.\]
Bob sends Alice a subsample $S'_b\subseteq S$
of size $O(d)$ with the analogous property.
\item Alice and Bob check if there is $h\in H$ that is consistent with both $S'_a$ and $S'_b$.
If the answer is ``no'' then they return $\textrm{NON-REALIZABLE}$,
and else they pick $h_t$ to be such an hypothesis.
\item Bob and Alice both update their respective distributions as in boosting: 
Alice sets 
\[ p_{t+1}^a(z) \propto p_t^a 2^{-1[h(x)= y]} \quad \forall z\in S_a.\]
Bob acts similarly.
\end{enumerate}
\item If the protocol did not stop, then output $\textrm{MAJORITY}(h_1,\ldots, h_T)$.
\end{itemize}
\end{tcolorbox}
\caption{A protocol for improper learning}\label{fig:algimp}
\end{figure}

\end{document}